\crefname{appendix}{SI}{SIs}
\newcommand{\R}{\mathbb{R}}
\renewcommand{\P}{\mathscr{P}}
\newcommand{\PP}{\mathcal{P}}
\newcommand{\PK}{\mathsf{P}}
\newcommand{\FF}{\mathcal{F}}
\newcommand{\HK}{\mathsf{H}}
\newcommand{\mS}{\mathcal{S}}
\newcommand{\mH}{\mathcal{H}}
\newcommand{\B}{\mathscr{B}}
\newcommand{\JJ}{\mathcal{J}}
\newcommand{\UU}{\mathcal{U}}
\newcommand{\UK}{\mathsf{U}}
\newcommand{\mY}{\mathcal{Y}}
\newcommand{\wh}[1]{\widehat{#1}}
\newcommand{\whu}{\wh{u}}
\newcommand{\whP}{\wh{P}}
\newcommand{\mbf}[1]{\mathbf{#1}}
\DeclareMathOperator*{\argmin}{arg\,min}
\newtheorem{remark}{Remark}
\crefname{remark}{remark}{remarks}
\newtheorem{theorem}{Theorem}
\crefname{theorem}{theorem}{theorems}
\newtheorem{lemma}{Lemma}
\crefname{lemma}{lemma}{lemmas}
\newtheorem{assumption}{Assumption}
\crefname{assumption}{assumption}{assumptions}
\newtheorem{proposition}{Proposition}
\crefname{proposition}{proposition}{propositions}
\crefname{definition}{definition}{definitions}
\newcommand{\test}{\text{test}}
\newcommand{\mR}{\mathcal{R}}
\begin{document}

\sloppy

\title{Data-Efficient Kernel Methods for Learning 
Differential Equations and Their Solution Operators: 
Algorithms and Error Analysis}

\author[Y. Jalalian]{Yasamin Jalalian $^{1,*}$}
\author[J. Osorio]{Juan Felipe Osorio Ramirez $^{2,*}$}
\author[A. Hsu]{Alexander Hsu $^{2,*}$}
\author[B. Hosseini]{Bamdad Hosseini $^2$}
\author[H. Owhadi]{Houman Owhadi $^1$}

\address{$^*$ Equal contribution by first three authors}
\address{$^1$ California Institute of Technology, Pasadena, CA, 91125, 
\texttt{emails: yjalalia@caltech.edu, owhadi@caltech.edu}}
\address{$^2$ University of Washington, Seattle, WA, 98195, 
\texttt{emails: josorior@uw.edu, owlx@uw.edu, bamdadh@uw.edu}
}



\begin{abstract}
We introduce a novel kernel-based framework for learning differential equations and their solution maps that is efficient 
in data requirements, in terms of solution examples and  
amount of measurements from each example, and computational cost, in 
terms of training procedures. Our approach is mathematically interpretable and backed by rigorous theoretical guarantees in the form of quantitative worst-case error bounds for the learned equation. Numerical benchmarks demonstrate significant improvements in computational complexity and robustness while  achieving  one to two orders of magnitude improvements in terms of accuracy compared to state-of-the-art algorithms. In comparison 
to equivalent neural net methods, our approach is significantly more 
robust to the choice of hyperparameters and does not require close 
human supervision during training.

\end{abstract}

\maketitle



\section*{Significance statement} 
We present a novel algorithm inspired by kernel methods and Gaussian processes for learning differential equations and their solution operators in scarce data regimes. 
Our approach: (a) is significantly more efficient than state-of-the-art methods, including neural networks, in terms of required data and computational time. In fact, we obtain one to two 
orders of magnitude improvement in accuracy on a number of 
benchmarks; (b) is significantly more robust 
to choice of hyperparameters and does not require close human supervision 
during training in comparison to equivalent neural net models;
(c) is supported by rigorous theory featuring the first quantitative worst-case error bounds for equation learning; 
and (d) can solve previously intractable scientific computing problems such as one-shot operator learning and learning of variable-coefficient PDEs in extremely scarce data regimes.

\section{Introduction}\label{sec:introduction}

In recent years, machine learning (ML) has revolutionized the way data is combined with mathematical models to infer and predict the behavior of physical systems. This wide adoption of ML in science has given rise to a new area of computational 
science and engineering often referred to as physics-informed 
ML \cite{karniadakis2021physics}. Broadly speaking, 
the goal here is to simulate physical processes
driven by differential equations (DEs) by combining data and 
expert knowledge in an automated manner.
In this article we focus on the problem of learning  
 DEs and their solution operators from scarce data, two tasks that constitute the vast majority of
 problems in physics-informed ML.
 We introduce a general computational
 framework for solving these problems
 based on the theory of kernels and Gaussian processes (GPs)
 which we call Kernel Equation Learning (KEqL).
Our approach offers significant advantages over existing methods, including state-of-the-art neural network techniques, in terms of:
(a) Accuracy, data efficiency, and computational efficiency, achieving superior performance across multiple benchmarks;
(b) Rigorous theoretical guarantees, providing the first known quantitative worst-case error bounds for equation learning;
(c) Robust and efficient training, leveraging second-order optimization algorithms for improved convergence and stability; and
(d) New capabilities in physics-informed machine learning, enabling one-shot operator learning and the discovery of variable-coefficient partial differential equations (PDEs) even in severely data-scarce settings.

Due to space constraints, limitations on figures, and citation restrictions, we defer several details to the Supplementary Information (SI). These include a comprehensive literature review, in-depth theoretical arguments and algorithmic details, as well as additional numerical results.
We will introduce KEqL for the case of PDEs but note that 
it  naturally includes ordinary differential 
equations (ODEs) as demonstrated in our numerical experiments.
Let $u: \mY \to \R$ be a function that 
describes the state of a physical system and consider a PDE
of the form 
\begin{equation}\label{generic-DE}
    \P(u)(y)  = f(y) \qquad y \in \mY, \qquad \B(u)(y) = g(y) \qquad y \in \partial \mY,
\end{equation}
with $\P$ denoting a differential operator that describes a PDE in the interior of $\mY \subset \R^d$, $\B$ denotes the boundary operator, 
and $f$ and $g$ denote the
source term and boundary data/initial conditions.  
We emphasize that 
in the above formulation $y$ is considered as an abstract input variable 
that may be spatial only (in the case of steady state PDEs) or a space-time 
variable (in the case of dynamics).
Following \cite{BOULLE202483}, the three main types of problems 
in 
Physics-informed ML are:
{\it equation learning/discovery} where $\P$ or $\B$ are unknown 
and must be inferred from a data set 
of $(u,f, g)$ tuples \cite{brunton2016discovering, rudy2017data}; 
{\it operator learning} where the solution map $\P^{-1}: (f, g) \mapsto u$
is learned from a similar data
 \cite{KOVACHKI2024419, batlle2023kernel}; 
and finally {\it PDE solvers} where the solution $u$ is computed given 
complete or partial information on $(f, g)$
 \cite{raissi2019physics, chen2021solving}.
 
An important consequence of our work 
is the unification of the aforementioned  
tasks  within the abstract 
framework of computational graph completion (CGC) \cite{owhadi2022computational, bourdais2024codiscovering}. 
The intuition behind this unification is as follows: 
Learning the inverse map $\P^{-1}$ (operator learning) is equivalent to the
problem of learning the forward map $\P$ (equation learning)
and then computing the inverse (PDE solvers). In 
scarce data regimes, where very limited measurements 
of $u$ are available, it is hopeless to try to 
learn $\P^{-1}$ directly, but it is possible to learn 
$\P$ and $u$ simultaneously due to the prior knowledge 
that the pair must satisfy \eqref{generic-DE}, i.e., the learned solution 
should solve the learned PDE. 
 This simultaneous learning of $\P$ and $u$ is the key idea behind 
 our methodology, but it leads to  challenging optimization problems
 which motivate
 our algorithmic  contributions.




\section{The proposed method}\label{sec:proposed-method}
In this section we outline our proposed KEqL methodology for equation learning 
based on the theory of reproducing kernel Hilbert spaces (RKHSs) 
and GPs. 
We consider the problem of learning the  differential 
operator $\P$ only since this is often the problem of 
interest in practice and note that 
our methodology can be generalized to learning the boundary operator 
$\B$ in a similar way.
Finally, we consider only the  case where $u$ and $f$ are scalar fields on $\mY$
and postpone the learning of systems to future work.
Throughout the paper we assume $\P$ has the form 
\begin{equation}\label{form-of-P}
    \P(u)(y) = P \circ \Phi(u, y),
    \quad \Phi: (u, y) \mapsto 
    (y,  \delta_y \circ L_1 u , \dots,  \delta_y \circ L_Q u ) \in \R^{Q + d}.
\end{equation}
For brevity we henceforth write $\mS = \R^{Q + d}$. 
In the above  $P: \mS \to \R$ is a (possibly nonlinear) function, 
$\delta_y$ denotes the pointwise evaluation functional at $y$,
and the $( L_q )_{q=1}^{Q}$ are bounded 
and linear differential operators that are assumed to be known.
For ease of presentation we always assume $L_1 = \text{Id}$ so that $P$
takes $y$ and the point values $u(y)$ as input even though it may 
not depend on these quantities. Note that the map $\Phi$ is linear in $u$
but it is nonlinear in $y$ whenever $u$ is nonlinear. The function $P$
and the solution $u$
are the main objects of interest that we wish to learn from 
data. \Cref{fig:computational-graphs}(A) 
depicts an instance of the computation graph (in the parlance 
of \cite{owhadi2022computational}) associated with equation \eqref{form-of-P}.
The red elements in that figure denote the unknown edges/elements of 
the computational graph in the setting of equation learning. Blue 
elements are input data, and the black element $\Phi$ is assumed to be known.

The above assumption on the form of $\P$ 
encompasses most PDEs of interest in physics and engineering. 
As an example, consider the one dimensional variable coefficient nonlinear heat equation:
\begin{equation}\label{nonlinear-heat-eqn}
\left\{
\begin{aligned}
   & \P(u) (t,x) = \partial_t u(t, x) - \partial_{x}\left( a(x) \partial_x u (t,x)\right) - u^3(t, x) =  f(t,x),   \text{ for } (t, x) \in (0, T] \times (0,1),\\
& u(0, x) = u(t, 0) = u(t, 1)= 0,
\end{aligned}
\right.
\end{equation}
with a smooth coefficient $a: [0,1] \to (0, + \infty]$.
Writing $y = (t,x)$ and letting $\mY = (0,T] \times (0,1)$
and introducing the differential operators
$L_1: u \mapsto u$, $L_2: u \mapsto \partial_t u$, 
$L_3: u \mapsto \partial_{x} u$, and $L_4: u \mapsto \partial_{xx} u$
we can cast \eqref{nonlinear-heat-eqn} in the form \eqref{form-of-P}
by writing $\P(u)(y) = 
    P( t, x,  u(t,x),  L_2 u (t, x),  
    L_3 u (t, x),  L_4 u (t,x) )$ 
    with the nonlinear map
\begin{equation*}
P: \R^6 \to \R, \qquad 
    P(s_1, s_2, s_3, s_4, s_5, s_6) = 
    s_4 -\partial_x a(s_2) s_5 - a(s_2) s_6 - s_3^3.
\end{equation*}
We emphasize that the example PDE mentioned above is precisely the 
type of equations that we are motivated by, i.e., 
nonlinear equations with variable coefficients that 
may not have sparse representations in a known basis.
In this light, we merge equation learning, the problem of 
learning $P$ \cite{bongard2007automated,brunton2016discovering}, 
with inverse problems, the problem of 
inferring unknown coefficients  such as $a$ 
\cite{kaipio2006statistical};
see \Cref{app:literature-review} 
for more discussion.

Now consider  an index $m=1, \dots, M$ and pairs $\left(u^m, f^m \right)$ that solve \eqref{generic-DE} along with a  finite set
 of points 
$Y^m = \{ y_1^m, \dots, y_N^m \} \subset \Gamma$
which we refer to as the {\it observation points}
\footnote{One may also let $N$ change with the index $m$ but we keep the size of the mesh fixed to keep our notation light}.
Further introduce the compact notation
$u^m(Y^m):= (u^m(y_1^m), \dots, u^m(y_N^m)) \in \R^N$. 
Then our goal throughout the article is to learn $P$  from 
training data  $( u^m(Y^m), f^m)_{m=1}^M$, i.e., 
$u^m$ is only observed on the $Y^m$ while $f^m$ is assumed to be 
known everywhere; this is  a simplifying assumption for us and can be 
relaxed to having finite information on the $f^m$ under some 
circumstances.
Since the training data only contains limited information on the functions $u^m$ the process for learning $P$ should inevitably 
involve the learning of $u^m$ as well which essentially constitutes 
the filtering problem in data assimilation \cite{law2015data}. 

With the above setup, we propose three approaches for learning $P$ and the $u^m$: 
(i) a 2-step method where we first learn the $u^m$'s from data
and then approximate $P$. This method 
was introduced in \cite{LONG2024134095} as a kernel analog 
to the PDE-FIND algorithm of \cite{rudy2017data}; 
and (ii)  
a 1-step method where $u^m$ and $P$ are learned jointly akin to 
\cite{owhadi2022computational, bourdais2024codiscovering} and can be 
viewed as a kernel analogue to \cite{chen2021physics}; 
(iii) we further present an intermediate 
method called the reduced 1-step method that interpolates between
 our 1-step and 2-step methods, inheriting the desirable 
 performance of the 1-step method while improving computational efficiency.
We show that the underlying computational graphs for our 
1 and 2 step methods in \Cref{fig:computational-graphs}(B,C). 
As before, the red edges in these graphs are unknown nonlinearities 
that should be inferred and data is depicted using dashed blue lines 
and is injected into the nodes  which represent variables. 
 
 We note that, while our exposition and theoretical 
 analysis are focused on the implementation of our methods using 
 kernels, many of our results can be extended to an 
 abstract optimal recovery framework by replacing 
 RKHSs with Banach spaces. Such a generalization would encompass sparse regression techniques like SINDy and 
 PDE-FIND  
 \cite{brunton2016discovering, rudy2017data} as 
 well as neural net methods such as \cite{long2018pde, chen2021physics}.
 However, while these different approaches 
 can be unified under the umbrella of optimal recovery, our kernel implementation leads to crucial gains in terms of data and computational 
 efficiency as it enables us to use various techniques 
 from smooth optimization and RKHS theory to solve the resulting difficult optimization problems; these sames issues 
  are known to be the main hurdle in applications 
  of physics-informed neural nets (PINNs) as well \cite{krishnapriyan2021characterizing}. 

\begin{figure}
    \centering
    \subfloat[Schematic depiction of the PDE model  \eqref{form-of-P}]{
    \raisebox{0.1 \height}{
    \begin{overpic}[width=0.15\linewidth]{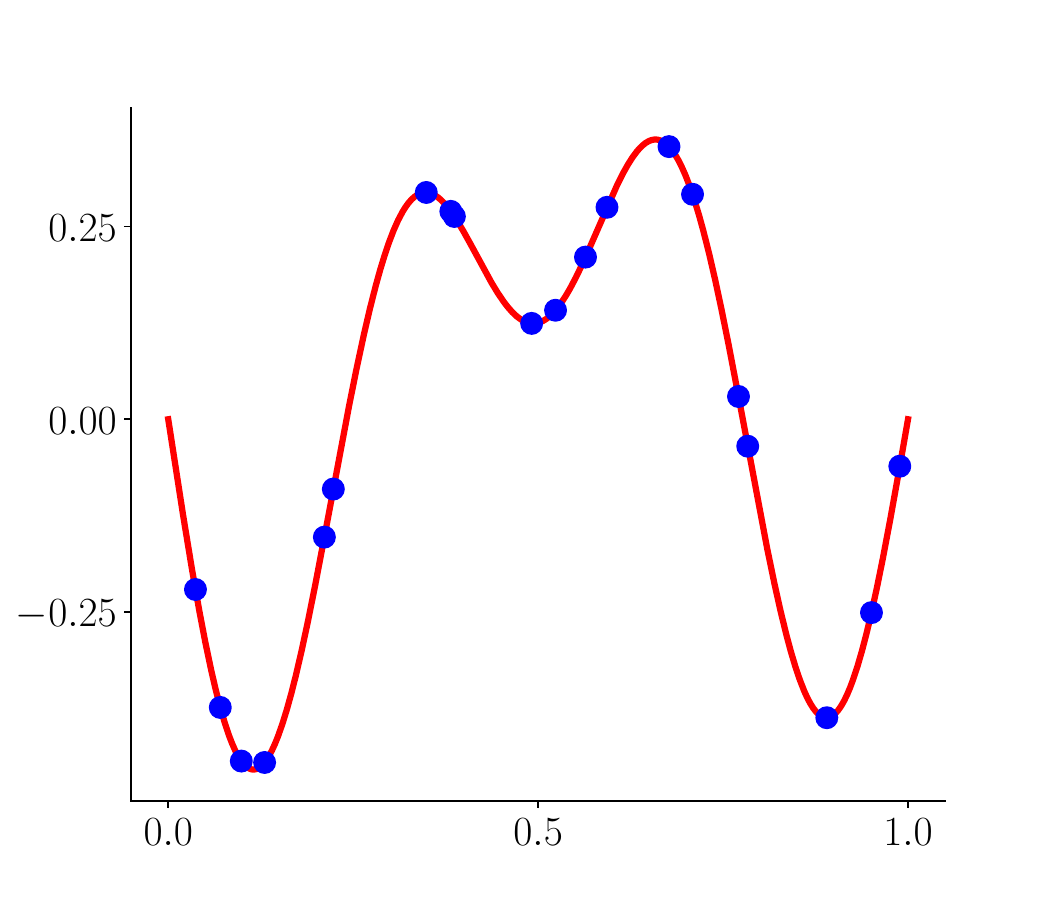}
        \put(49, -8){\large $\mY$}
        \put(100, 40){\huge $\xrightarrow{ \: \: \Phi \: \:}$}
        \put(-8,42){\Large \color{red} $u$}
    \end{overpic}
    }
    \qquad 
    \begin{overpic}[width=0.2\linewidth]{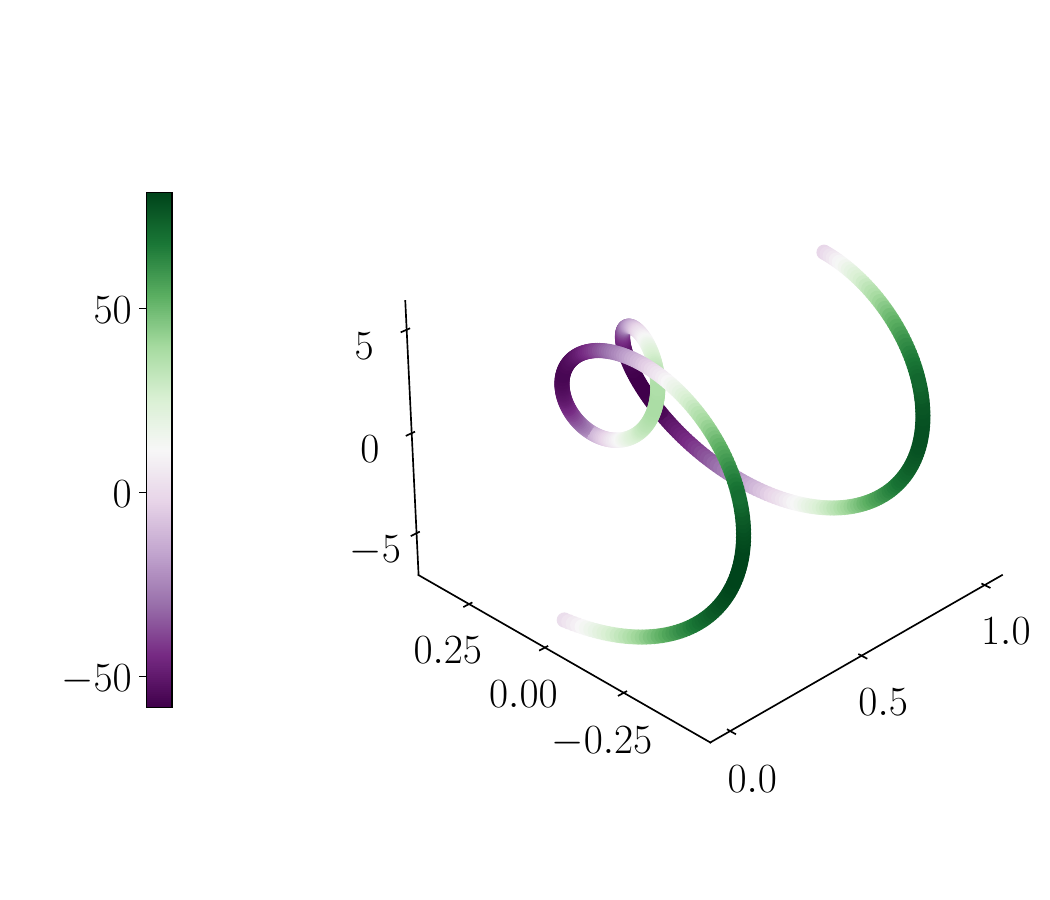}
        \put(88,12){\footnotesize $y$}
        \put(38,12){\footnotesize $u$}
        \put(20,42){\footnotesize $\partial_y u$}
        \put(9,10){\footnotesize $\partial_{yy} u$}
        \put(66, 0){\large $\mS$}
    \end{overpic}
    \quad \quad \quad \quad 
    \raisebox{0.1 \height}{
    \begin{overpic}[align=c, width=0.15\linewidth]{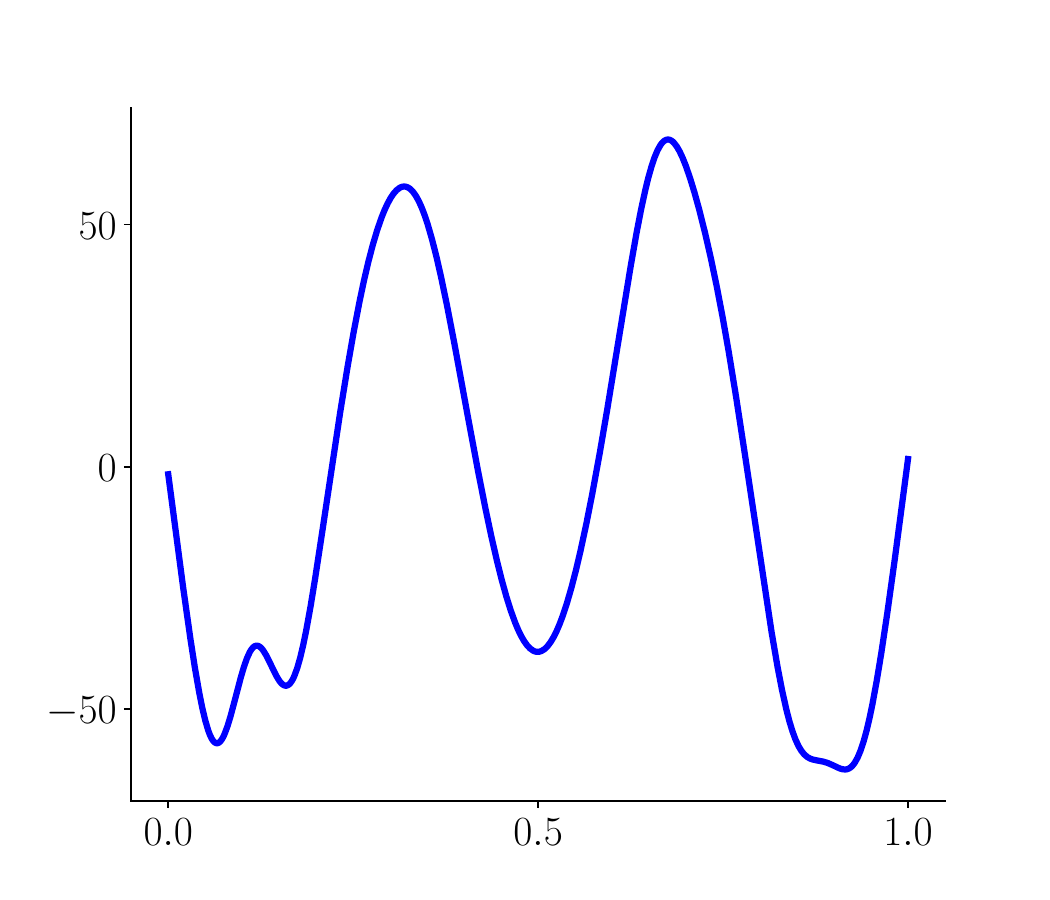}
        \put(-40,40){\color{red} \huge $\xrightarrow{\: \: P \: \:}$}
        \put(93, 42){{\Large \color{blue} $f$}}
        \put(49, -8){\large $\mY$}
    \end{overpic}
    }
    }\\
    \begingroup
    \captionsetup[subfigure]{oneside,margin={-1ex,0ex}}
    \subfloat[Computational graph of 2-step KEqL]{
    \includegraphics[width=0.37\linewidth, page=2]{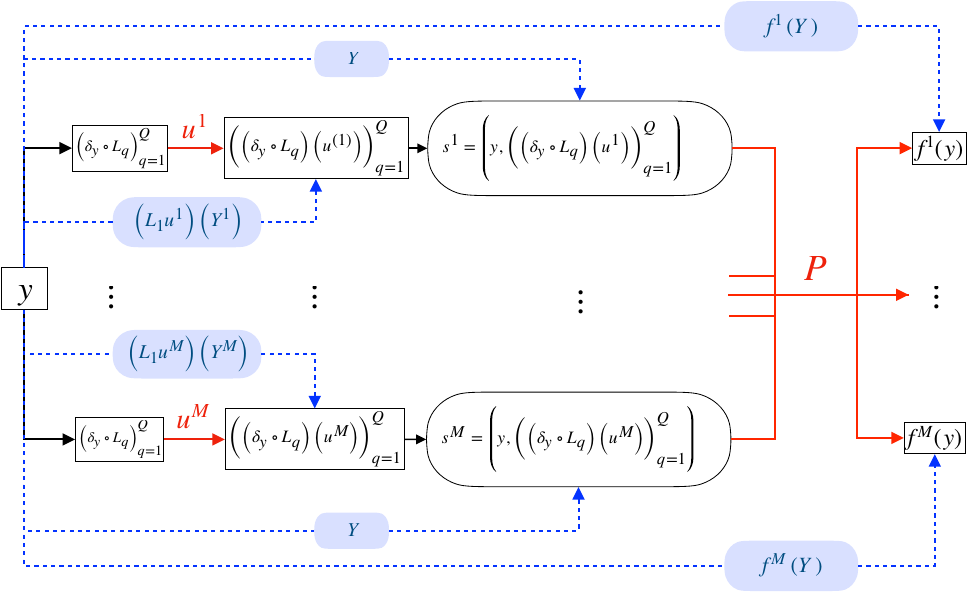}
    }
    \endgroup
    \qquad \qquad
    \subfloat[Computational graph of 1-step KEqL]{
    \includegraphics[width=.45\linewidth, page=1]{figures/graphs.pdf}
    }
    \caption{
    (A) Schematic depiction of the computational graph of 
    \eqref{form-of-P} in the 
    context of equation learning for a single pair $(u,f)$. Red objects are unknown nonlinearities
    that need to be learned. Blue objects are data for the problem, while black objects (the map $\Phi$) are assumed to be known. The left and right 
    panels show the solution and right-hand side of an example second order PDE 
    depending on $y, u, \partial_y u$, and $\partial_{yy} u$ while the    middle panel shows $\Phi(y, u)$; 
    (B) The computational graph of 2-step KEqL. 
    Red edges are unknown nonlinear maps to be learned. Blue 
    boxes denote data that is known for various nodes with 
    dashed lines denoting where the data is injected. Note that 
    the graphs for $u^m$ and $P$ are disconnected, hence 
    the learning of $u^m$ and $P$ is performed sequentially in two steps; 
    (C) The computational graph for 1-step KEqL.
    Coloring conventions follow panel (B) with the main difference being that the  $u^m$ and $P$
    are now connected and have to be learned 
    simultaneously.
    }
    \label{fig:computational-graphs}
\end{figure}

\subsection{2-step KEqL: first learn the $u^m$, then learn $P$}\label{sec:two-step-learning}
Let $\UK: \mY \times \mY \to \R$ denote a positive definite and 
symmetric (PDS) kernel with its  associated RKHS 
$\UU$ with inner product $\langle \cdot, \cdot \rangle_\UU$ and norm $\| \cdot \|_\UU$; see 
\Cref{app:theory-details} for a review of RKHS theory. 
Then the first step in 2-step KEqL
approximates $u^m$ 
via the  optimal recovery problems \footnote{Note that one could also employ a different kernel $\UK^m$ for 
each instance of the  problem but we will not pursue this for brevity.}
\begin{equation}\label{u-optimal-recovery}
    \whu^m := \argmin_{v^m \in \UU} \| v^m \|_\UU \quad \text{subject to (s.t.)} 
    \quad v^m(Y^m) = u^m(Y^m).
\end{equation}
We can also relax the equality constraints using a penalty method leading to a
quadratic optimization problem with nugget parameter (or observation noise standard deviation) $\sigma_u^2 >0$,
\begin{equation}\label{u-optimal-recovery-noisy}
    \whu^m := \argmin_{v^m \in \UU} \| v^m \|^2_\UU 
    + \frac{1}{2 \sigma_u^2} \| v^m(Y^m) -  u^m(Y^m)\|_2^2.
\end{equation}
With $\whu^m$ at hand we proceed to step two where
we approximate $P$ through a second optimal recovery problem. 
To do this, let us consider an independent set of points 
$Y = \{ y_1, \dots, y_K \} \subset \mY$ which we call the 
{\it collocation points}.
This set of points may be chosen independent of the individual $Y^m$ but
to make our formulation simpler
we will assume it is 
chosen such that $\cup_{m=1}^M Y^m \subset Y$, i.e., the collocation points
$Y$ contain all of the observation points $Y^m$.

Now observe that the differential operators $L_q$ can be directly
applied to the $\whu^m$, in fact, as we show in \Cref{sec:algorithms} 
the functions $L_q \whu^m$ can be computed analytically or 
using automatic differentiation.
Then given a PDS kernel $\PK: \mS \times \mS \to \R$ with RKHS $\PP$ we can approximate $P$ via the optimal recovery problem 
\begin{equation}\label{P-optimal-recovery}
    \whP := \argmin_{G \in \PP} \| G \|_\PP \quad 
    \text{s.t.} \quad  G \circ \Phi( \whu^m, Y) = f^m(Y), \quad m=1, \dots, M,
\end{equation}
where $f^m(Y) := (f^m(y_1), \dots, f^m(y_K) ) \in \R^K$ and 
we used the shorthand notation $\Phi(\whu^m, Y) = 
( \Phi(\whu^m, y_1), \dots, \Phi(\whu^m, y_K) ) \subset \mS$.
Similar to \eqref{u-optimal-recovery-noisy} this problem can also 
be relaxed using a nugget parameter $\sigma_P^2 >0$, 
\begin{equation}\label{P-optimal-recovery-noisy}
    \whP := \argmin_{G \in \PP} \| G \|_\PP^2 + \frac{1}{2\sigma_P^2} 
    \sum_{m=1}^M \|  G \circ \Phi( \whu^m, Y) - f^m(Y) \|_2^2.
\end{equation}
In either case 
$\whP$ also admits an analytic formula akin to the $\whu^m$  which we present
in \Cref{sec:algorithms}.
Note that in (\ref{P-optimal-recovery},~\ref{P-optimal-recovery-noisy}) we are using
the collocation points $Y$  to impose 
the infinite PDE constraint $G \circ \Phi(\whu^m, y) = f(y)$ for all $y \in \mY$, on a finite discrete set, which justifies 
our choice of terminology as this is precisely 
the role of collocation points in PDE solvers \cite{chen2021solving}. This also motivates our preference to choose 
$Y$ to be dense within computational budget constraints. 

In many practical settings we  have some prior knowledge about the 
differential operator $\P$, for example in many dynamic problems 
we know that a time derivative of the form $\partial^h_t u$ for some integer $h$
is present. In such cases we can simply work with the model 
$\P(u) = (P + \overline{P}) \circ \Phi(u, \cdot)$ with $\overline{P}$
representing the known part of the differential operator.
This modification amounts to a simple reformulation of 
 \eqref{P-optimal-recovery-noisy}
(and similarly \eqref{P-optimal-recovery}) as 
\begin{equation}\label{P-optimal-recovery-shifted}
    \whP := \argmin_{G \in \PP} \| G \|_\PP^2 + \frac{1}{2\sigma_P^2} 
    \sum_{m=1}^M \|  (G + \overline{P}) \circ \Phi( \whu^m, Y) -
    f^m(Y) \|_2^2.
\end{equation}
Henceforth we will include $\overline{P}$ in our discussion to account 
for prior information about $\P$.

The 2-step approach described above can also be viewed within the framework of CGC \cite{owhadi2022computational}: CGC considers 
a computational graph where nodes represent variables and 
directed edges represent nonlinear functions. Then given 
data on various nodes and prior knowledge of certain edges, one 
aims to recover unknown nonlinear functions within 
the computational graph. Since our 2-step method 
approximates the $u^m$ and $P$ separately, it naturally
 leads to two disjoint computational graphs as shown in \Cref{fig:computational-graphs}(B); recall that  the red arrows in that figure 
 denote unknown nonlinear functions (edges) while dashed blue lines show 
 data that is injected into nodes (vertices). Observe that the computational 
 graphs for the $u^m$ and $P$ are disjoint and hence easy to complete.
However, the 2-step method will only be successful when data is 
sufficient to accurately approximate the pertinent 
partial derivatives of the $u^m$. This limits the applicability 
of this method in scarce data regimes and motivates our 1-step formulation 
in \Cref{sec:one-step-learning}.

\subsubsection{Connection to existing methods}\label{sec:2-step-connections}
A slightly different version of 2-step KEqL was 
introduced in \cite{LONG2024134095} which did not 
use the collocation points $Y$ as it was assumed that the $Y^m$ were 
sufficiently dense. 2-step KEqL can also be viewed as 
the kernel/GP analogue of SINDy/PDE-FIND \cite{brunton2016discovering, rudy2017data}. The kernel method is different from these works in three 
directions: (1) here prior knowledge about $P$ is summarized by 
the choice of the kernel $\PK$ while in SINDy this information is given by 
the dictionary; (2) SINDy looks for a $\wh{P}$ that is sparse in the dictionary
while the kernel method finds a minimum RKHS norm solution that is not 
necessarily sparse; (3) 
since $y$ is readily included as an input to $P$, our formulation naturally
accommodates variable coefficient PDEs while dealing with 
such problems for SINDy is non-trivial \cite{voina2024deep} since the
variable coefficients  may depend on $y$ in a complex 
manner that is not sparse in a particular dictionary. 
For detailed overview of methods related to 
2-step KEqL see \Cref{app:literature-review}.

\subsection{1-step KEqL: simultaneously learn the $u^m$ and $P$}\label{sec:one-step-learning}

The primary shortcoming of 2-step KEqL is the decoupled learning 
of the $u^m$ and $P$ which limits its performance in scarce data 
regimes (here we have in mind the cases where the observation meshes $Y^m$
have very few points).
To remedy this  we propose 
 1-step KEqL that estimates $\whu^m$ and $\whP$ at the same time 
while imposing the requisite PDE constraint on the collocation points $Y$.
To this end, we consider the joint optimal recovery problem 
\begin{equation}\label{one-shot-optimal-recover}
\begin{aligned}
     (\wh{\pmb u}, \whP) = & \argmin_{ \pmb v \in \UU^M, G \in \PP} && 
 \| G \|^2_\PP + \lambda \sum_{m=1}^M \| v^m\|^2_\UU \\ 
& \text{s.t.} &&   v^m(Y^m) = u^m(Y^m), \quad \text{and} \quad  
(G + \overline{P}) \circ \Phi(v^m, Y) = f^m(Y), \quad \text{for} \quad m=1, \dots, M,
\end{aligned}
\end{equation}
where 
$\lambda > 0$ is a user defined parameter and we used the notation $\pmb v := (v^1, \dots, v^M) \in \UU^M$ to denote the vector of candidate RKHS functions 
with their optimal values denoted as $\wh{\pmb u} \in \UU^M$.
In complete analogy with 2-step KEqL we can also relax the 
equality constraint in \eqref{one-shot-optimal-recover} using appropriate
nugget parameters $\sigma_u^2, \sigma_P^2 >0$ to obtain the formulation,
\begin{equation}\label{one-shot-optimal-recovery-noisy}
\begin{aligned}
     (\wh{\pmb u}, \whP) =  \argmin_{ \pmb v \in \UU^M, G \in \PP}  
 \| G \|^2_\PP +  &  \sum_{m=1}^M  \lambda \| v^m\|^2_\UU  \\   
& \: + \frac{1}{2 \sigma_u^2} \| v^m(Y^m) - u^m(Y^m) \|_2^2 
 + \frac{1}{2 \sigma_P^2} \| (G + \overline{P}) \circ \Phi(v^m, Y) - f^m(Y) \|_2^2.
\end{aligned}
\end{equation}
Note the important distinction, compared with 2-step KEqL, 
that the estimation of the $\wh{\pmb u}$ and $\whP$ is now coupled 
due to the composition of $G$ and the $v^m$ in the PDE constraint/penalty term. 
Indeed 1-step KEqL can also be viewed within 
the framework of CGC with its computational 
graph shown in \Cref{fig:computational-graphs}(C); 
observe that the computational 
graphs for the $u^m$ and $P$ are now connected. 
This coupling of the estimation of the $u^m$ and $P$ is the source of algorithmic challenges to solving 1-step KEqL. Nonetheless, we solve this problem 
after reformulation using a representer theorem and 
using an efficient  Levenberg–Marquardt 
(LM) algorithm in \Cref{sec:algorithms}.

\subsubsection{Connection to existing methods}\label{sec:1-step-connections} 
The concept of optimizing a loss function that jointly matches the given observations and the PDE constraint has appeared in the literature previously. Most notably, in \cite{chen2021physics}, a neural network surrogate was used to approximate $u^m$ while in \cite{ijcai2021p283} a spline model was used, both methods then use a sparsity prior over a dictionary  to learn $P$ and $u^m$.
In a similar vein, there are  methodologies based on 
SINDy that can be 
viewed as 1-step methods, most 
notably, the weak SINDy  \cite{messenger2021weak,messenger2021weakPDE} 
and the SINDy-UQ  \cite{hirsh2022sparsifying}
although neither methods were originally developed as true 1-step methods  and modifying them for scarce data settings is beyond their current implementation. What sets 1-step KEqL apart  from these  works is: 
(1) we present an explicit optimal recovery formulation based on RKHS 
theory; 
(2) the regularization due to our formulation 
automatically leads to a stable algorithm that is amenable to 
second order optimization;
(3) the kernel formulation accommodates 
larger number of features and more flexibility in choosing and tuning the 
features for both $u^m$ and $P$;
and (4) our formulation is readily defined with 
variable coefficient PDEs in mind, a topic that is often difficult 
for dictionary based methods.
Finally, we note that 1-step RKHS methods similar to KEqL have appeared previously in \cite{raissi2017machine,lahouel2024learning} but only for
narrow classes of PDEs. 
For a more detailed overview of related methods to 
1-step KEqL see \Cref{app:literature-review}.

\subsection{Operator learning}\label{sec:methods:operator-learning}
In this final subsection we turn our attention to the operator learning 
problem of estimating the solution map $\P^{-1}$ of the PDE 
\eqref{generic-DE}. The dominant paradigm for operator learning \cite{KOVACHKI2024419} aims to approximate the solution map $\P^{-1}$
via a regression problem between function spaces from data. In our perspective, 
operator and equation learning problems are two sides of the 
same coin: where equation learning approximates $\P$, operator learning 
approximates the inverse $\P^{-1}$. To this end, following \cite{LONG2024134095}, 
let us write $\widehat{\P} := (\whP + \overline P) \circ \Phi$ to denote 
the differential operator associated with $\whP$. We then 
propose to approximate the operator $\P^{-1}$
by computing the {\it pseudo inverse} operator  $\widehat{\P}^{\dagger}$ defined variationally as
\begin{equation}\label{eq:operator-learning}
    \widehat{\P}^{\dagger}(f):= \argmin_{v \in \UU} \| v \|_\UU 
    \quad \text{s.t.} \quad (\whP + \overline P) \circ \Phi(v,Y) = f(Y).
\end{equation}
This optimization problem can be solved using the GP-PDE solver of \cite{chen2021solving,batlle2025error} which is a numerical PDE solver 
that is analogous to KEqL.
We emphasize that the operator $\widehat{\P}$ is not invertible 
in general since the learned PDE is not guaranteed to be 
well-posed. Then \eqref{eq:operator-learning} computes a  
well-defined regularized solution to this ill-posed PDE. 

\subsubsection{Connection to existing methods}
As mentioned above most operator learning algorithms 
approximate mappings between Banach spaces, with neural nets being the most popular choice \cite{KOVACHKI2024419, BOULLE202483}, 
although other models such as kernel methods and GPs remain competitive \cite{batlle2023kernel,mora2025operator}. In most operator learning applications the training data is plentiful, i.e., the mesh size $N$ and 
number of training pairs $M$ are large.
 In contrast, our training data is scarce, both in terms of number of 
 functions and the observation mesh. To our knowledge, current operator learning 
 methods are incapable of handling such scarce data problems since $\P^{-1}$ is often infinite-dimensional and  non-local. On the other hand, $\P$ is local and  has the simple form  \eqref{form-of-P}
which reduces operator learning to approximating 
the scalar function $P$. This observation is the key to the 
success of our approach and leads to significant accuracy 
gain as shown in our examples in \Cref{sec:numerics}. Finally, 
we note that the recent family of  
physics-informed operator learning methods \cite{goswami2023physics}
take a step towards scarce data problems by
augmenting their training  with a PDE residual term but require
complete knowledge of $\P$.
For a more detailed overview of related methods for operator learning 
see \Cref{app:literature-review}.

\section{Theoretical analysis}\label{sec:theory}
Here we collect our main theoretical results concerning the 
convergence of KEqL in the form of quantitative error bounds 
for the learned functions $\whu^m$ and the learned PDE $\whP$.
To minimize theoretical clutter we will only present the 
bounds for 1-step KEqL without detailed proofs and instead 
focus on the key ideas and implications of the theorem. We
refer the interested reader to the \Cref{app:theory-details}
for complete details and statements of results including 
analogous bounds for 2-step KEqL and an error bound for 
operator learning.

\subsection{Setup and assumptions}\label{sec:theory:setup}
For simplicity we consider 1-step KEqL without nugget terms or noise. 
Since we will be taking the limit of $N$ (number of observation points) 
and $M$ (size of training data) to infinity, we will supplement 
our notations in this section by subscripting pertinent objects with $M, N$ indices.
With this setup, we make the following assumptions:

\begin{assumption}\label{assumption:main}
It holds that:
\begin{enumerate}
    \item The set $\mY \subset \R^d$ is bounded with Lipschitz boundary. 
    \item The kernel $\UK$ satisfies:
    \begin{enumerate}
        \item Mercer's theorem holds for $\UK$.
        \item $\UU$ is continuously 
    embedded in the Sobolev space $H^\gamma(\mY)$ for some $\gamma > d/2 + \text{order}(\P)$ (where $\text{order}(\P)$ denotes the order of the PDE). 
    \item Elements of $\UU$ satisfy the boundary conditions in \eqref{generic-DE} 
    (this is for ease of presentation and can be relaxed).
    \end{enumerate}
    \item The kernel $\PK$ satisfies: 
    \begin{enumerate}
        \item $\PP$ is continuously embedded in $H^\eta(\mS)$ for some 
        $\eta > \frac{Q + d}{2}$.
        \item Elements of $\PP$ are locally Lipschitz in the sense that 
        $| P(s) - P(s') | \le C(B) \| P\|_\PP \| s - s'\|$ 
        for all $s,s' \in B \subset \mS$ and constant $C(B) > 0$.
    \end{enumerate}
\end{enumerate}    
\end{assumption}
Since $\UK$  satisfies Mercer's theorem 
it has the spectral expansion
$\UK(y,y') = \sum_{j=1}^\infty \vartheta_j e_j(y) e_j(y')$
with eigenvalues $\vartheta_j >0$ and 
eigenfunctions $e_j$. This expansion 
then allows us to 
define the space $\UU^2:= \left\{ f: \mY \to \R \mid f(y) = \sum_{j=1}^\infty c_j(f) e_j(y) \text{ s.t.} \sum_{j=1}^\infty \vartheta_j^{-2} c_j(f)^2 < + \infty \right\}$. For ease of presentation let us assume that the 
observation points $Y^m_N$ and the collocation points $Y$ are 
the same, i.e., $Y^m_N = Y = Y_N$; we drop this assumption in our proofs 
in \Cref{app:theory-details}.
Finally, for the $Y_N \subset \mY$ 
and a set $B \subset \mS$ we define the fill distances
\begin{equation*}
    \rho_{N}:= \sup_{y' \in \mY} \inf_{y \in Y_N} \| y' - y \|_2, 
    \quad \varrho_{M, N}(B):= \sup_{s' \in B} \inf_{s \in S \cap B} \| s' - s\|_2, 
    \quad \text{where} \quad S:= \cup_{m=1}^M\cup_{y \in Y_N} \Phi(u^m, y).
\end{equation*}
\subsection{Quantitative error bounds}\label{sec:theory:rates}
We are now ready to state our main error bound for 1-step KEqL. The following 
theorem is a compressed version of \Cref{prop:1-step-error-estimate}.

\begin{theorem}\label{thm:one-step-error-analysis}
    Suppose \Cref{assumption:main} holds and $P, \overline{P} \in \PP$. 
    Let $\whu^m_{M,N}$ and $\whP_{M,N}$ be the solution to \eqref{one-shot-optimal-recover} with $Y^m_N = Y = Y_N$ for $M,N \in \mathbb{N}$ and fix a bounded set $B \subset \mS$
    with Lipschitz boundary. Then there exist constants $\rho_0, \varrho_0(B) >0$
    so that whenever $\rho_{N} < \rho_0$ and $\varrho_{M,N}(B) < \varrho_0(B)$,
    it holds that: 
    \begin{enumerate}
        \item If $u^m \in \UU$ then 
        \begin{equation*}
           \sum_{m=1}^M \| u^m - \whu^m_{M,N} \|^2_{H^{\gamma'}(\mY)} 
            \le C \rho_{N}^{2(\gamma - \gamma')} 
            \left( \| P \|_{\PP}^2 + \sum_{m=1}^M \| u^m \|_\UU^2 \right),
        \end{equation*}
        where $0 \le \gamma' \le \gamma$ and $C>0$ is a constant depending on $\mY$.
        \item If $u^m \in \UU^2$ then 
         \begin{equation*}
        \| P - \whP_{M,N} \|_{L^\infty(B)} 
    \le C  \left( \varrho_{M, N}(B)^{\eta - \frac{Q + d}{2}}
    +  \rho_{N}^{\gamma - \gamma'} \right) 
    \left( \| P \|_\PP^2 + \| \overline{P} \|_\PP^2 + 
     \sum_{m=1}^M \| u^m \|_{\UU^2}^2 \right)^{1/2},
    \end{equation*}
    for $d/2 + \text{order}(\P) < \gamma' < \gamma$ and a 
    constant $C >0$ that depends on $\mY$ and $B$.
    \end{enumerate}
\end{theorem}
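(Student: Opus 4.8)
The plan is to exploit that the true pair $(\pmb u, P)$, with $\pmb u = (u^1,\dots,u^M)$, is itself \emph{feasible} for \eqref{one-shot-optimal-recover}: the interpolation constraints $u^m(Y^m) = u^m(Y^m)$ hold trivially, while $(P + \overline{P})\circ\Phi(u^m,Y) = \P(u^m)(Y) = f^m(Y)$ holds because each $u^m$ solves the PDE. Comparing the objective at the minimizer against its value at this competitor immediately yields the a priori bounds $\|\whP_{M,N}\|_\PP^2 + \lambda\sum_m \|\whu^m_{M,N}\|_\UU^2 \le \|P\|_\PP^2 + \lambda\sum_m\|u^m\|_\UU^2$, which control every relevant norm of the minimizers in terms of the true-solution norms. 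This is the common starting point for both parts.

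For part (1) I would observe that $w^m := \whu^m_{M,N} - u^m$ lies in $\UU \hookrightarrow H^\gamma(\mY)$ and vanishes on $Y_N$, since both functions agree with $u^m(Y_N)$ there. The core tool is a \emph{sampling (scattered-data) inequality}: for a function in $H^\gamma(\mY)$ vanishing on a set of fill distance $\rho_N$, one has $\|w^m\|_{H^{\gamma'}(\mY)} \le C\rho_N^{\gamma-\gamma'}\|w^m\|_{H^\gamma(\mY)}$ for $0\le\gamma'\le\gamma$, valid once $\rho_N<\rho_0$ (using the Lipschitz boundary of $\mY$). Bounding $\|w^m\|_{H^\gamma}$ by $\|\whu^m_{M,N}\|_\UU + \|u^m\|_\UU$ through the embedding, squaring, summing over $m$, and inserting the a priori bound gives exactly the claimed estimate.

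Part (2) is the substantive one: I would show $\whP_{M,N}-P$ is \emph{small at the scattered points} $S = \{\Phi(u^m,y) : m,\, y\in Y_N\}$ and then invoke a sampling inequality on $\mS$. At each $y\in Y_N$ the two identities $(\whP_{M,N}+\overline{P})\circ\Phi(\whu^m_{M,N},y) = f^m(y) = (P+\overline{P})\circ\Phi(u^m,y)$ give, after cancellation, $(\whP_{M,N}-P)(\Phi(u^m,y)) = [\whP_{M,N}(\Phi(u^m,y)) - \whP_{M,N}(\Phi(\whu^m_{M,N},y))] + [\overline{P}(\Phi(u^m,y)) - \overline{P}(\Phi(\whu^m_{M,N},y))]$. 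The local Lipschitz property (\Cref{assumption:main}(3b)) bounds each bracket by $C(B)(\|\whP_{M,N}\|_\PP + \|\overline{P}\|_\PP)\,\|\Phi(u^m,y)-\Phi(\whu^m_{M,N},y)\|$; since the $y$-components cancel, $\|\Phi(u^m,y)-\Phi(\whu^m_{M,N},y)\| \le \sum_q \|L_q(u^m-\whu^m_{M,N})\|_{L^\infty}$, and choosing $\gamma' > d/2 + \text{order}(\P)$ lets the embedding $H^{\gamma'}\hookrightarrow C^{\text{order}(\P)}$ turn this into $C\|u^m-\whu^m_{M,N}\|_{H^{\gamma'}}$, already controlled by part (1). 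Since $\whP_{M,N}-P\in\PP\hookrightarrow H^\eta(\mS)$ with $\eta>(Q+d)/2$, the sampling inequality on the bounded Lipschitz set $B$ then gives $\|\whP_{M,N}-P\|_{L^\infty(B)} \le C\bigl(\varrho_{M,N}(B)^{\eta-(Q+d)/2}\|\whP_{M,N}-P\|_{H^\eta(B)} + \max_{s\in S\cap B}|(\whP_{M,N}-P)(s)|\bigr)$ once $\varrho_{M,N}(B)<\varrho_0(B)$; the first term is handled by the embedding and the a priori $\PP$-bound, the second by the residual estimate. The stronger regularity $u^m\in\UU^2$ enters precisely to upgrade the interpolation error estimate (via a native-space / doubling-type refinement) so that the residual term can be expressed through the $\UU^2$-norms and the whole bound assembled in the stated form.

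I expect the \emph{main obstacle} to be the coupling inherent in part (2): the points at which $\whP_{M,N}$ is pinned down by the collocation constraints are $\Phi(\whu^m_{M,N},y)$, which depend on the \emph{estimated} states, whereas the fill distance $\varrho_{M,N}(B)$ is defined through the \emph{true} points $\Phi(u^m,y)$. Reconciling these point sets — transferring smallness of $\whP_{M,N}-P$ from the data-dependent interpolation nodes to the fixed set $S$ through the map $\Phi$ (nonlinear in $u$) and the local Lipschitz bound on $\PP$, while ensuring $S\cap B$ stays dense enough for the $\mS$-sampling inequality to fire — is where the assumptions on $\gamma$, $\eta$, and the $\UU^2$ regularity must be used in concert, and is the delicate bookkeeping that the compressed statement hides.
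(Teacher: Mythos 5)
Your proposal follows essentially the same route as the paper's own proof (\Cref{prop:1-step-error-estimate}): part (1) via feasibility of the true pair $(\pmb u, P)$ plus the Sobolev sampling inequality and the embedding $\UU \hookrightarrow H^\gamma(\mY)$, and part (2) via cancellation of the PDE constraints at the collocation points, the local Lipschitz property of elements of $\PP$, the embedding $H^{\gamma'}(\mY) \hookrightarrow C^{\text{order}(\P)}(\mY)$ combined with part (1) to control $\|\Phi(u^m,\cdot)-\Phi(\whu^m_{M,N},\cdot)\|$, and finally the noisy sampling inequality on $B$ applied at the true points $S$. The one step you leave schematic --- how $u^m \in \UU^2$ ``upgrades'' the estimate --- is carried out in the paper by comparing $\whu^m_{M,N}$ against the minimum-norm interpolants $\overline{u}^m_N$ (a Pythagorean identity plus \Cref{lem:H-gamma-bound}) to sharpen the bound on $\|\whP_{M,N}\|_\PP$, which is precisely the ``native-space/doubling refinement'' you gesture at, so the proposal is correct and matches the paper's argument.
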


Note that our error bounds are reminiscent of Sobolev sampling inequalities (see \Cref{prop:sobolev-sampling-inequality} and the preceding 
discussion) that are the corner stone of our proof technique; 
indeed our rates in terms of fill-distances 
and the smoothness indices $\gamma, \eta$ are  familiar 
in this context.
The key observation 
here is that the $\whu^m_{M,N}$ interpolate the $u^m$ on the 
observation points $Y_N$ so sampling inequalities 
are a natural choice. However, the $\whu^m_{M,N}$ are not necessarily  minimum norm interpolants of the $u^m$ due to the simultaneous
optimization
of  functions and the equation which complicates the proof of statement (1).
The bounds for 
$\whP_{M,N}$ are more challenging since, intuitively, the input points $S$ over which 
$\whP_{M,N}$ approximates the value of $P$ are themselves noisy due to the 
fact that the $\whu^m_{M,N}$ and their requisite partial derivatives are 
not exact. This motivates the assumption that elements of $\PP$ are Lipschitz which in turn allows us to use a noisy sampling inequality to complete the proof.

We emphasize that the above error bounds are quite strong, and give pointwise
control over the errors of the $\whu^m_{M,N}$ and $\whP_{M,N}$ over subsets of their 
domains provided that the training data is sufficiently space-filling. This is in line with the theory of 
scattered data approximation and suggests various avenues for experimental design of equation learning 
algorithms. Furthermore, the bound for $\whP_{M,N}$ implies that the 
learned PDE is accurate for new input-output pairs $(u, f)$ that are 
close to the training data in an appropriate sense, i.e., 
if $\Phi(u, \cdot) \in B$. In 
\Cref{app:theory:operator-learning-error} 
we use this observation to extend the error bound for $\whP_{M,N}$ to the corresponding pseudo inverse $\widehat{\P}_{M,N}^\dagger$.

\subsubsection{Connection to existing results}
The theoretical analysis of equation learning methods has been 
the subject of study in a number of previous works. 
Most notably \cite{zhang2019convergence} presents a comprehensive convergence analysis of SINDy with recovery guarantees
using techniques from sparse recovery and compressed sensing. 
More recent articles have also considered fundamental 
limitations of equation learning such as  
identifiability \cite{scholl2023uniqueness} and 
PDE learning from a single trajectory  \cite{he2024much}. 
More precise results have also been obtained for 
particular types of PDEs such as elliptic equations \cite{schafer2024sparse, boulle2023elliptic}. Our analysis 
is different from these works in a number of directions: 
(1) to our knowledge  previous works do not cover 
the case of the 1-step methods or the RKHS formulation; 
(2) quantitative worst-case error bounds 
such as ours are first of their kind; 
(3) some of the stronger guarantees in the literature
are tailored to specific PDEs while our results apply to generic
nonlinear PDEs under sufficient smoothness assumptions.
For a more detailed overview of related work to our theory 
see \Cref{app:literature-review}.

\section{Implementation and algorithms}\label{sec:algorithms}

We now turn our attention to the practical implementation and 
development of algorithms for 1-step and 2-step KEqL. 
We focus our discussion on the high-level and core 
aspects of algorithms and refer the reader to \Cref{app:algorithms} as 
well as our GitHub repository\footnote{\url{https://github.com/TADSGroup/kernelequationlearning}} for further details.

\subsection{Implementing 2-step KEqL}\label{sec:algorithm:2-step}

Our implementation of 2-step KEqL is straightforward and utilizes 
standard representer theorems for kernel regression.
It is well-known (see \Cref{app:theory-details}) 
that  the solution to  \eqref{u-optimal-recovery-noisy} admits the  formula
\begin{equation}\label{u-kernel-interpolant}
    \whu^m(y) =   \UK(Y^m, y)^T  \wh{\alpha}^m, \quad \text{where} \quad 
    \wh{\alpha}^m =  \left( \UK(Y^m, Y^m) + \sigma_u^2 I \right)^{-1} u^m(Y^m).
\end{equation}
Here $\UK(Y^m, y) := ( \UK(y_1^m, y), \cdots, \UK(y^m_N, y) ) \in \UU^N$ 
is a column vector field and $\UK(Y^m, Y^m) \in \R^{N \times N}$ is 
a kernel matrix with entries $\UK(Y^m, Y^m)_{ij} = \UK(y^m_i, y^m_j)$. 
Setting $\sigma_u =0$ further characterizes the solution \eqref{u-optimal-recovery}
 assuming $\UK(Y^m, Y^m)$ is invertible.
Thanks to \eqref{u-kernel-interpolant} we can directly compute,
\begin{equation*}
    L_q \whu (y)  = L_q \UK(Y^m, y)^T \wh{\alpha}^m, 
    \quad \text{where} \quad L_q \UK(Y^m, y) = 
    \left( L_q \UK(y_1^m, y), \cdots, L_q \UK(y^m_N, y)   \right),
\end{equation*}
which requires us to apply the $L_q$ operators to the kernel $\UK$ 
as a function of its second input (the $y$ variable) 
for fixed 
values of its first input (the $y^m_n$ values). For typical PDEs this amounts to computing 
partial derivatives of  $\UK$ either analytically or using 
automatic differentiation software.

We can solve \eqref{P-optimal-recovery-shifted} identically to the above, 
using the formula
\begin{equation}\label{P-kernel-interpolant}
    \whP(s) = \PK(S, s)^T \wh{\beta}, \quad \text{where} \quad 
    \wh{\beta} = (\PK(S,S) + \sigma_P^2)^{-1} (f(Y) - \overline{P}(S)),
    \quad \text{and} \quad  S = \cup_{m=1}^M \Phi(\whu^m, Y).
\end{equation}
We used the shorthand notation $f(Y) = (f^1(Y), \dots, f^M(Y)) \in \R^{MK}$, 
the concatenation of the vectors $f^m(Y)$,
and $\overline{P}(S) \in \R^{MK}$,
 the vector of point values of $\overline{P}$ evaluated on  $S$, both viewed 
 as column vectors. Similar to \eqref{u-kernel-interpolant}, we also defined
the vector field $\PK(S, s) := (\PK(s_1, s), \dots, \PK(s_M, s)) \in \PP^{MK}$, and 
the matrix 
$\PK(S,S) = \big( \PK(s_i, s_j) \big)_{i,j=1}^M \in \R^{MK \times MK}$.
In further analogy with the first step, 
setting $\sigma_P = 0$ gives the solution to \eqref{P-optimal-recovery}
when $\PK(S,S)$ is invertible.

The formulae \eqref{u-kernel-interpolant} and \eqref{P-kernel-interpolant}
 highlight the convenience and computational efficacy of 2-step KEqL
since each equation requires  a single linear solve involving a kernel matrix;
 this can be done very efficiently using sparse or randomized linear algebra techniques; see \cite{schafer2020sparse, chen2022randomly, chen2024sparse}
as well as \Cref{app:algorithms}.


\subsection{Implementing 1-step KEqL and its reduced version}\label{sec:algorithms:1-step}

The implementation of 1-step KEqL is more involved and requires 
the solution of a compositional optimization problem. 
The first step is to derive an equivalent discrete formulation 
of \eqref{one-shot-optimal-recover} and 
\eqref{one-shot-optimal-recovery-noisy}; this amounts to the derivation of 
a representer theorem which we state for \eqref{one-shot-optimal-recover}.
Next we design an algorithm that can efficiently solve the aforementioned 
discrete optimization problem.

Towards stating our representer theorem, we need to  introduce some new notation:
Assuming $\UU$ is sufficiently regular, define 
the bounded and linear functionals $\phi^{q}_k := 
\delta_{y_k} \circ L_q \in \UU^\star$ for $q = 1, \dots, Q$ and 
$k =1, \dots, K$, recalling our convention that $L_1$ denotes 
the identity map so that $\phi^1_k = \delta_{y_k}$.
Further observe that $\Phi(u, y_k) = (y_k, \phi^1_k(u), \dots, \phi^Q_k(u))$
by definition and so the $\phi^q_k$ denote the linear operators
that give the subset of components of $\Phi(u, y_k)$ as a function of $u$, 
justifying our choice of notation.
Write $\UK( \phi^q_k, y) = \phi^q_k ( \UK(\cdot, y)) \in \UU$, i.e., 
the RKHS function obtained by applying $\phi^q_k$ to $\UK( \cdot, y)$ 
 for every fixed value of $y$.
Further define $\UK( \phi^q_k, \phi^\ell_j) := 
\phi^\ell_j( \UK( \phi^q_k, \cdot) )$ for $\ell =1, \dots, Q$
and $j = 1, \dots, K$. Next define 
the vector field 
$\UK( \phi, y) := \big( \UK(\phi^1_1, y), \dots, \UK(\phi^1_K, y), \dots, 
\UK(\phi^Q_1, y), \dots, \UK(\phi^Q_K, y)\big) \in \UU^{QK}, $
along with the vectors $\UK(\phi, \phi^q_k) = \big( \UK(\phi^1_1, \phi^q_k), \dots, \UK(\phi^1_K, \phi^q_k), \dots, 
\UK(\phi^Q_1, \phi^q_k), \dots, \UK(\phi^Q_K, \phi^q_k)\big) \in \R^{QK}$.
Further let $\UK(\phi^q, \phi^\ell) \in \R^{K \times K} $
denote the matrices
with entries $ \UK(\phi^q, \phi^\ell)_{k,j} = \UK(\phi^q_k, \phi^\ell_j)$
and the block-matrix $\UK(\phi, \phi) \in \R^{QK \times QK}$ 
with blocks $\UK(\phi, \phi)_{q,\ell} = \UK(\phi^q, \phi^\ell) \in \R^{K \times K}$.
Note that the above vectors and matrices can be computed offline
using either analytical expressions (by computing appropriate 
partial derivatives of the kernel $\UK$) or automatic differentiation akin to 
the 2-step KEqL.
With this new notation we have the following
 characterization of the minimizers of \eqref{one-shot-optimal-recover}:
\begin{theorem}\label{thm:one-step-equivalence}
    Suppose that $\UU$ is sufficiently regular so that 
    the operators $\phi^q_k$ are well-defined 
    as elements of its dual $\UU^\star$. Then every minimizing  tuple
        $(\wh{ \pmb u}, \whP)$ of 
     \eqref{one-shot-optimal-recover} 
     can be written in the form
    \begin{equation*}
        \whu^m(y) = \UK(\phi, y)^T\wh{\alpha}^m, 
        \quad 
        \whP(s) =   \PK(S(\widehat{\pmb \alpha}), s)^T \wh{\beta},
    \end{equation*}
    for a tuple  $(\wh{\pmb \alpha} , \wh{\beta})$ that solve 
    the equivalent optimization problem 
    \begin{equation}\label{one-step-representer-form}
\begin{aligned}
     (\wh{ \pmb \alpha}, \wh{\beta}) = & 
     \argmin_{ \pmb \alpha \in (\R^{QK})^M, \: \beta \in \R^{MK}} && 
\beta^T \PK(S(\pmb \alpha),S(\pmb \alpha)) \beta 
+ \lambda \sum_{m=1}^M (\alpha^m)^T \UK(\phi,\phi) \alpha^m, \\ 
 & \text{s.t.} && \UK(\phi, Y^m)^T \alpha^m = u^m(Y^m), \quad \text{and} \quad 
 \PK(S(\pmb \alpha), S^m(\pmb \alpha))^T \beta = f^m(Y) 
 - \overline{P}(S^m(\alpha^m)), \\
 & \text{where}          && S(\pmb \alpha) = \cup_{m=1}^M S^m(\alpha^m), 
 \quad \text{and} \quad S^m(\alpha^m) = \Phi( \UK(\phi, \cdot)^T \alpha^m ,Y).
\end{aligned}
\end{equation}
Here we introduced the block vector 
$\pmb \alpha := ( \alpha^1, \dots, \alpha^M) \in (\R^{QK})^M$ for coefficient vectors
$\alpha^m \in \R^{QK}$, along with the matrices
$\UK(\phi, Y^m) \in \R^{QK \times N}$  with 
columns $\UK(\phi, y^m_n)$, and $\PK(S(\pmb \alpha), S^m(\alpha^m)) \in \R^{MK \times K}$
with columns $\PK(S, s^m_k)$.
\end{theorem}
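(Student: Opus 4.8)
The plan is to prove \Cref{thm:one-step-equivalence} as a representer theorem: I will argue that any minimizer of the infinite-dimensional problem \eqref{one-shot-optimal-recover} necessarily lives in a finite-dimensional span of kernel representers, read off the coefficient vectors, and then check that these coefficients are exactly the minimizers of the finite-dimensional problem \eqref{one-step-representer-form}. The first and most important step is a structural observation: every constraint in \eqref{one-shot-optimal-recover} depends on each candidate $v^m$ only through the finite family of bounded linear functionals $\phi^q_k = \delta_{y_k} \circ L_q$, $q = 1, \dots, Q$, $k = 1, \dots, K$. Indeed, since $\cup_m Y^m \subset Y$, the observation constraint $v^m(Y^m) = u^m(Y^m)$ is a subset of the evaluation constraints $\phi^1_k(v^m) = \delta_{y_k}(v^m)$, and by the form \eqref{form-of-P} of $\Phi$ the constraint $(G + \overline{P}) \circ \Phi(v^m, y_k) = f^m(y_k)$ sees $v^m$ only via $\Phi(v^m, y_k) = (y_k, \phi^1_k(v^m), \dots, \phi^Q_k(v^m))$. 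Hence feasibility is determined entirely by the finite-dimensional data $(\phi^q_k(v^m))_{q,k} \in \R^{QK}$, and in particular the set $S = \cup_m \Phi(v^m, Y)$ is a function of these values alone.

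Next I would run an orthogonal-projection argument in two stages. Fixing a minimizer $(\wh{\pmb u}, \whP)$, let $V := \mathrm{span}\{ \UK(\phi^q_k, \cdot) \}_{q,k} \subset \UU$ and split $\whu^m = \whu^m_\parallel + \whu^m_\perp$ with $\whu^m_\parallel \in V$ and $\whu^m_\perp \perp V$. The generalized reproducing identity $\phi^q_k(v) = \langle v, \UK(\phi^q_k, \cdot) \rangle_\UU$ gives $\phi^q_k(\whu^m_\parallel) = \phi^q_k(\whu^m)$, so replacing $\whu^m$ by $\whu^m_\parallel$ preserves all functional values, hence all constraints and the set $S$, while strictly decreasing $\lambda \| \whu^m \|_\UU^2$ unless $\whu^m_\perp = 0$. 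Optimality and $\lambda > 0$ therefore force $\whu^m = \whu^m_\parallel = \UK(\phi, \cdot)^T \wh{\alpha}^m$. With the $\whu^m$, and therefore $S$, now fixed, I repeat the projection for $\whP$ onto $W := \mathrm{span}\{ \PK(s, \cdot) : s \in S \}$: the PDE constraints read $\delta_s(\whP) = f^m(y_k) - \overline{P}(s)$ at $s = \Phi(\whu^m, y_k) \in S$, which depend on $\whP$ only through its inner products with $\PK(s, \cdot)$, so the component of $\whP$ orthogonal to $W$ must vanish, yielding $\whP(s) = \PK(S(\wh{\pmb \alpha}), s)^T \wh{\beta}$.

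It then remains to match objectives and constraints under this parametrization. The reproducing property gives the exact identities $\| \UK(\phi, \cdot)^T \alpha^m \|_\UU^2 = (\alpha^m)^T \UK(\phi, \phi) \alpha^m$ and $\| \PK(S(\pmb \alpha), \cdot)^T \beta \|_\PP^2 = \beta^T \PK(S(\pmb \alpha), S(\pmb \alpha)) \beta$, and direct evaluation turns $v^m(Y^m) = u^m(Y^m)$ and the PDE constraint into the two constraint families of \eqref{one-step-representer-form}. I would then close the equivalence by a two-sided comparison of optimal values: every feasible $(\pmb \alpha, \beta)$ lifts to a feasible pair $(\pmb v, G)$ for \eqref{one-shot-optimal-recover} with identical objective, so the finite-dimensional optimum is no smaller than the infinite-dimensional one; and the projected minimizer constructed above supplies feasible coefficients achieving the infinite-dimensional optimum, giving the reverse inequality. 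Hence the two values coincide and the extracted $(\wh{\pmb \alpha}, \wh{\beta})$ solves \eqref{one-step-representer-form}.

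I expect the main obstacle to be the coupling of the two unknowns through $S$: because the subspace $W$ onto which $\whP$ is projected is determined by the $\whu^m$, the two projection stages are not a priori independent. The resolution is exactly the invariance recorded in the first paragraph — projecting $\whu^m$ onto $V$ fixes every value $\phi^q_k(\whu^m)$, hence leaves $\Phi(\whu^m, y_k)$ and the entire set $S$ unchanged — so the $u$-projection can be carried out first without disturbing the constraint geometry governing $\whP$. A secondary point to state explicitly is that the regularity hypothesis on $\UU$ in \Cref{thm:one-step-equivalence} (guaranteed under \Cref{assumption:main}) ensures $\phi^q_k \in \UU^\star$, so that the representers $\UK(\phi^q_k, \cdot)$ exist in $\UU$ and the whole argument is well posed.
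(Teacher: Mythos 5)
Your proof is correct, and it shares the paper's key structural observation---that the constraints of \eqref{one-shot-optimal-recover} depend on each $v^m$ only through the finitely many functionals $\phi^q_k(v^m)$, and on $G$ only through its values on $S$ (the ``decoupling conditioned on $L_q v^m(y_k)$'' idea)---but the execution is a different route from the paper's. The paper introduces slack variables $Z_{k,q}$ for those functional values, rewrites \eqref{one-shot-optimal-recover} as a bilevel problem whose inner interpolation problems decouple and are solved in closed form via \Cref{prop:generalized-rep-theorem}, and then changes variables $\alpha = \UK(\phi,\phi)^{-1}\operatorname{vec}(Z)$, $\beta = \PK(S,S)^{-1}f(Y)$ to land on \eqref{one-step-representer-form}; it also writes the argument only for $M=1$, $\overline{P}=0$ and extends by remark. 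You instead work directly with an arbitrary minimizer: project the $\whu^m$ onto $\operatorname{span}\{\UK(\phi^q_k,\cdot)\}$, use the invariance of all constraint data (in particular of the point cloud $S$) under this projection to then project $\whP$ onto $\operatorname{span}\{\PK(s,\cdot): s \in S\}$, and close with a two-sided comparison of optimal values. Your route buys robustness: no kernel matrix is ever inverted, so singular $\UK(\phi,\phi)$ or $\PK(S,S)$ (or repeated points in $S$) require none of the paper's ``least-squares sense'' caveat, non-uniqueness of the coefficients is harmless, and general $M$ and $\overline{P}\neq 0$ are covered with no extra notation. The paper's route buys constructiveness: it exhibits the optimal coefficients explicitly in terms of the slack values $Z$ and the data $f(Y)$, which is the representation actually exploited in the algorithmic development of \Cref{sec:algorithms}. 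Both arguments correctly establish the theorem.
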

The proof of this theorem is given in 
\Cref{app:theory-details}.
 The main idea of the proof is based on the observation that 
 conditioned on the values $L_q v^m(y_k)$, the optimization 
 problems for the $v^m$ and $G$ variables in \eqref{one-shot-optimal-recover} will decouple. Then the result is obtained 
 by introducing auxiliary variables representing the $L_q v^m(y_k)$ 
 and applying   representer theorems for 
 RKHS interpolation with linear observation models.
The problem \eqref{one-step-representer-form} is the key to the design of 
our algorithms as it is readily implementable without the need for further discretization. 
 This fact further implies that 
our theoretical analysis of \eqref{one-shot-optimal-recover} in \Cref{sec:theory}  applies to the algorithms we implement in practice. 
We can further relax the equality constraints in \eqref{one-step-representer-form}
to obtain an equivalent problem for \eqref{one-shot-optimal-recovery-noisy}:
\begin{equation}\label{one-step-representer-relaxed}
  \begin{aligned}
     (\wh{\pmb \alpha}, \wh{\beta}) = 
     & \argmin_{{ \pmb \alpha} \in (\R^{QK})^M, \: \beta \in \R^{MK}}    
      \beta^T \PK(S(\pmb \alpha), S(\pmb \alpha)) \beta 
+   \sum_{m=1}^M \lambda   (\alpha^m)^T \UK(\phi,\phi) \alpha^m \\ 
& \qquad 
+    \frac{1}{2 \sigma_u^2} \| \UK(\phi, Y^m)^T \alpha^m - u^m(Y^m) \|_2^2 
+ \frac{1}{2 \sigma_P^2} 
\| \PK(S(\pmb \alpha), S^m(\alpha^m))^T \beta +
\overline{P}(S^m(\alpha^m)) - f^m(Y) \|_2^2. 
  \end{aligned}
\end{equation} 

\subsubsection{An LM algorithm for 1-step KEqL}\label{LM}
 Solving \eqref{one-step-representer-relaxed}
 is difficult in practice as a small perturbation 
in the $\alpha^m$ can lead to a large perturbation in the
derivatives of the corresponding function 
$v^m =  \UK(\phi, \cdot)^T \alpha^m$. 
This large variation further
translates into a large change in the $S^m(\alpha^m)$ 
point clouds leading to numerical 
instabilities. 
To mitigate this issue, 
we propose an iterative LM algorithm that will approximate the objective function of \eqref{one-step-representer-relaxed} with a quadratic minimization problem at each step. 

Define $\JJ^m(\pmb{ \alpha}, \beta):= \PK( S(\pmb \alpha), S^m(\alpha^m))^T \beta + \overline{P}(S^m(\alpha^m)) - f^m(Y)$ so that the last term in \eqref{one-step-representer-relaxed} is simply $\frac{1}{2\sigma_P^2} \| \JJ^m(\pmb \alpha, \beta) \|_2^2$. Then we compute a minimizing sequence 
$(\pmb \alpha_{(j)}, \beta_{(j)})_{j=1}^\infty$ 
given by the scheme 
\begin{equation}\label{LM-update-rule}
\begin{aligned}
    (\pmb \alpha_{(j+1)},&  \beta_{(j+1)}) 
    =  \argmin_{{ \pmb \alpha} \in (\R^{QK})^M, \: \beta \in \R^{MK}}    
      \beta^T \PK\left(S(\pmb \alpha_{(j)}), S(\pmb \alpha_{(j)})\right) 
      \beta  \:
      +   \sum_{m=1}^M  \Bigg[ \lambda (\alpha^m)^T \UK(\phi,\phi) \alpha^m \\ 
&   + 
    \frac{1}{2 \sigma_u^2} \| \UK(\phi, Y^m)^T \alpha^m - u^m(Y^m) \|_2^2 
+ \frac{1}{2 \sigma_P^2} \left\|\JJ^m(\pmb \alpha_{(j)}, \beta_{(j)}) + 
\nabla \JJ^m(\pmb \alpha_{(j)}, \beta_{(j)})
\begin{pmatrix}
\pmb \alpha - \pmb \alpha_{(j)} \\
\beta - \beta_{(j)}
\end{pmatrix}
     \right\|_2^2 \Bigg] \\ 
& +\lambda_{(j)} \left[ (\beta - \beta_{(j)})^T \PK\left(S(\pmb \alpha_{(j)}), S(\pmb \alpha_{(j)})\right) 
      (\beta - \beta_{(j)} ) 
      + \sum_{m=1}^M (\alpha^m - \alpha^m_{(j)})^T \UK(\phi, \phi) (\alpha^m - \alpha^m_{(j)})  \right].
\end{aligned}
\end{equation}
The first two terms are identical to \eqref{one-step-representer-relaxed}
with $\pmb \alpha$ fixed at the previous value $\pmb \alpha_{(j)}$,
 the third term is unchanged, and 
the forth term is a local quadratic approximation obtained 
by linearizing $\JJ^m$ with $\nabla \JJ^m$ denoting  the Jacobian. 
The last term acts as a damping term that ensures
that our next estimate $(\pmb \alpha_{(j+1)}, \beta_{(j+1)})$ does 
not deviate too far from the current values. The $\lambda_{(j)} >0$ is
inversely proportional to a step-size parameter which is updated using a standard 
gain ratio heuristic that compares the decrease produced in the true objective to the decrease observed in the quadratic approximation; see \Cref{app:algorithms}.

We highlight that while naive implementation of  \eqref{LM-update-rule} leads to 
an effective algorithm, the computational cost due to kernel matrices $\PK(S,S)$ 
and $\UK(\phi,\phi)$ can become prohibitive when $N, M$ are large. 
To address these bottlenecks we implement different computation 
techniques such as Nystr\"om approximations and block matrix inversion. 
We will not discuss these details here but refer the reader to \Cref{app:algorithms} 
or our  Github repository \footnote{\url{https://github.com/TADSGroup/kernelequationlearning}}. Instead, we will present an 
efficient relaxation of 1-step KEqL  using a reduced basis that 
has good performance in many practical cases.


\subsubsection{Reduced 1-step KEqL: an efficient approximation}\label{sec:1.5-step-learning} 
We now introduce an approximate 1-step KEqL that, 
at a small cost to accuracy,
leads to a better conditioned and  more efficient formulation. To this 
end, we propose to approximate 
\eqref{one-step-representer-relaxed} 
 by restricting the $v^m$'s to the subspace 
 $\text{span} \{ \UK(y_1, \cdot), \dots, \UK(y_K, \cdot)  \} \subset \UU$, i.e., 
 we write $v^m = \UK(Y, \cdot)^T \alpha^m$ for $\alpha^m \in \R^K$ as opposed 
 to $\R^{QK}$ for 1-step KEqL.  Thus this reformulation 
 effectively constitutes a reduced basis/feature map formulation leading to 
 the following analogous problem:
 \begin{equation}\label{1.5-step-optimal-recover}
\begin{aligned}
     (\wh{\pmb \alpha}, \wh{\beta}) = 
     & \argmin_{ \pmb \alpha \in (\R^{K})^M, \: \beta \in \R^{MK}} 
 \beta^T \PK(S(\pmb \alpha), S(\pmb \alpha) ) \beta 
 +  \sum_{m=1}^M \lambda (\alpha^m)^T \UK(Y, Y) \alpha^m  \\
 & \qquad \qquad + \frac{1}{2\sigma_u^2} \| \UK(Y, Y^m)^T \alpha^m  - u^m(Y^m) \|_2^2
 + \frac{1}{2\sigma_P^2} \| \PK(S(\pmb \alpha), S^m(\alpha^m) )^T \beta + \overline{P}(S^m(\alpha^m) -  f^m(Y)\|_2^2,
\end{aligned}
\end{equation}
 \\
with $S(\pmb \alpha) = \cup_{m=1}^M S^m(\alpha^m)$ as before 
but with $S^m(\alpha^m) = \Phi(U(Y, \cdot)^T \alpha^m, Y)$.

Observe that if we take $Y^m =Y$
for all $m$ then the first constraint above completely identifies the 
$\alpha^m$ and the reduced 1-step KEqL 
coincides with 2-step KEqL. Therefore this method interpolates 
between the 1-step and 2-step methods.
Furthermore, we can apply the same idea to the $\PK(S, s)$ feature 
maps and choose a reduced basis for representing the learned 
equation $\whP$ by, for example, subsampling the points in $S$ 
in \eqref{1.5-step-optimal-recover}. Regardless, the LM algorithm of  
 \Cref{LM} remains applicable here.


%

\subsection{Choosing kernels and hyper-parameters}\label{sec:kernels}
While the 1- and 2-step KEqL  are generic and well-defined for any choice of $\PK$ and $\UK$ (assuming sufficient regularity), the 
practical performance of these algorithms is closely tied to a good choice of  kernels as is often the case for kernel/GP methods. 
Broadly speaking,
the choice of these kernels imposes constraints on our model 
classes for the functions $\wh u^m$ and the learned PDE $\wh P$; 
in the case of the latter the choice of $\PK$ is analogous to 
choosing dictionaries in SINDy. Moreover, in many applications 
we may have access to expert knowledge about the $u^m$ and the type of PDE at hand. 
In such scenarios it is helpful to design our kernels to reflect such prior knowledge. A standard example is periodic boundary conditions 
or invariance of the solution to the PDE under certain 
symmetries and operators.
Below we discuss some instances of such kernels that are useful for 
prototypical PDEs that we study in  \Cref{sec:numerics}. 

\subsubsection{Choosing $\UK$}
Since Sobolev spaces are a natural solution space for 
many PDEs we choose the Mat{\'e}rn kernel family 
 for $\UK$. In particular, we consider 
the anisotropic Mat{\'e}rn kernels:
\begin{equation*}
\UK_{\text{Mat{\'e}rn}}(y, y') := \frac{2^{1-\nu}}{{\Gamma}(\nu)} \left( \sqrt{2\nu} \| y - y'\|_\Sigma \right)^{\nu} K_\nu \left( \sqrt{2\nu} \| y -y' \|_\Sigma \right), \qquad 
\forall y,y' \in \mY,
\end{equation*}
where $\nu >0$ is a parameter, $\Gamma$ is the standard gamma function, and $K_\nu$
is the modified Bessel function of the second kind, and 
$\| y - y'\|^2_\Sigma := (y -y')^T \Sigma^{-1} (y - y')$ for a PDS matrix $\Sigma \in \R^{d \times d}$. 

It is known (see \cite[Ex.~2.6,~2.8]{kanagawa2018gaussian}) 
that under mild conditions, the RKHS of the Mat{\'e}rn kernel  
is norm equivalent to the Sobolev space $H^{\nu + d/2}$. Due to this 
equivalence it is crucial that the regularity parameter $\nu$ 
 is chosen carefully in light of the order of the PDE.
In the limit $\nu \to \infty$ the Mat{\'e}rn kernel converges to the 
radial basis function (RBF) kernel 
$\UK_{\rm RBF}(y, y') := \exp\left( -\frac{1}{2} \| y - y' \|_\Sigma^2 \right)$ which has an infinitely 
smooth RKHS. In addition to this kernel we also use the first order rational quadratic (RQ) kernel, defined as $\UK_{\rm RQ}(y,y') := \left(1 + \|y - y'\|_{\Sigma}^2\right)^{-1}$. 
 We often take $\Sigma$ to be a diagonal matrix 
$\Sigma = \text{diag}(\gamma)$ for a vector of lengthscales $ \gamma \in \R^d_{>0}$  
 to be chosen via maximum likelihood estimation (MLE), or hand tuned; standard  cross validation 
 techniques can also be utilized. 




\subsubsection{Choosing $\PK$}
Following the 
observation that many PDEs that arise in physical sciences 
have polynomial nonlinearities \cite[Sec.~1.2]{evans2022partial}, 
a natural choice for $\PK$ would be a 
polynomial kernel; this is also the dominant model for construction of 
dictionaries in SINDy-type algorithms. Here we consider 
$\PK_{\rm poly}(s,s') = \left((s-c)^TB(s'-c)+1\right)^{\text{deg}},$
for $s, s' \in \R^{Q + d}$,
where $B \in \R^{Q + d \times Q + d}$ is a matrix akin to $\Sigma$ in the previous section that allows us to 
scale different input coordinates or to introduce correlations, and $c \in \R^{Q + d}$ is a fixed vector introducing bias. We treat $B, c$ 
as hyper-parameters for this kernel.  
In extreme scenarios, 
if no a priori knowledge about $P$ exists, then one can 
take $\PK$ to be a Mat{\'e}rn or RBF kernel as 
in the case of  $\UK$ above. 

We are particularly interested in learning PDEs with 
variable coefficients, such as the variable coefficient diffusion model \eqref{nonlinear-heat-eqn}. In such cases
we may have a priori knowledge that $P$ has polynomial dependence on a subset of inputs 
(the $L_1u(y), \dots, L_Qu(y)$ values)
but general nonlinear dependence on other parameters (the $y$ values). We 
encode such structures using hybrid product kernels of the form 
$\PK_{\text{hybrid}}(s,s') = \PK_{\rm RBF}(s_{:d},s_{:d}')\PK_{\rm poly}(s_{d:},s_{d:}')$ 
where we used $s_{:d} \in \R^d$ to denote the first $d$ coordinates of $s$
and $s_{d:} \in \R^{Q}$ to denote the remaining  coordinates.

\section{Experiments}\label{sec:numerics}
We now present a series of numerical experiments 
that demonstrate the performance of our 1-step and 2-step methodologies
for the tasks of filtering the functions $u^m$, learning differential operators 
$\P$, and their 
solution operators $\P^{-1}$. Our examples span a wide range of 
problems from ODE models to PDEs with variable 
coefficients. For benchmarking we compare our algorithms with 
SINDy \cite{rudy2017data}, as a 2-step 
 dictionary based algorithm, and the PINN-SR method of \cite{chen2021physics} as a 1-step neural net based algorithm. 
 Here we focus on the key points regarding 
 each experiment and postpone more results and details to \Cref{app:numerics}.

\subsection{Common structure}\label{sec:numerics:common-setup}

In all of our experiments we will consider scalar valued differential 
equations of the form \eqref{form-of-P} under the assumption that 
the map $\Phi$ (equivalently the differential operators $L_q$) is
known. In the case of dynamic models we assume $\mY = (0, T] \times \Omega$ for some $T> 0$ and a spacial domain $\Omega$ and take
$\overline{P}(\Phi(u, y)) = \overline{P}(\Phi(u, (t, x))) = 
\partial_t^n u(t, x)$  to be known (the order $n$ will be clarified for each example). Since for practical experiments we use RKHSs $\UU$ that 
do not automatically satisfy the boundary conditions of the PDEs 
we us additional observation points on the boundaries to impose 
those conditions. To this end, we write $N_{\mY}$ to denote the 
number of observation points in the interior of $\mY$ while we use 
 $N_{\partial \mY}$ to denote the number of observation points on the boundary, so that the total number of observation points is always  $N = N_{\mY} + N_{\partial \mY}$.

For all experiments we consider training data of the form 
$(u^m(Y^m), f^m)_{m=1}^M$. In addition to the collocation points $Y$
and the observation points $Y^m$,
we also consider an 
independent test mesh $Y_{\test} \subset \Gamma$ which 
will be used for reporting errors. We will consider 
three types of test errors: 

\begin{itemize}
    \item The {\it filtering error}  quantifies the relative
    accuracy of the $\whu^m$ compared to the training data,
    \begin{equation*}
        \mR_{\text{filter}} \big( (u^m)_{m=1}^M \big) 
        := \frac{1}{M} \sum_{m=1}^M 
        \frac{\| u^m(Y_\test) - \whu^m(Y_\test) \|_2^2}{\| u^m (Y_\test) \|_2^2}.
    \end{equation*}
    \item The {\it equation learning error} quantifies the relative 
    accuracy of $\whP$ over a set of functions $\mathcal{W}$
    \begin{equation*}
        \mR_{\text{eql}} \big(\mathcal{W}) 
        := \frac{1}{| \mathcal{W}|} \sum_{w \in \mathcal{W}} 
        \frac{\| \P(w) (Y_\test) - \widehat{\P}(w) (Y_\test) \|_2^2}
        {\| \P(w)(Y_\test) \|_2^2},
    \end{equation*}
    where $| \mathcal{W} |$ denotes the cardinality of $\mathcal{W}$. 

    \item The {\it operator learning error} is defined akin to the equation 
    learning error except that it quantifies the error of the solution map
    corresponding to the learned equation, 
    \begin{equation*}
        \mR_{\text{opl}} \big(\mathcal{W}) 
        := \frac{1}{| \mathcal{W}|} \sum_{w \in \mathcal{W}} 
        \frac{\| \P^{-1}(w) (Y_\test) - \widehat{\P}^\dagger(w) (Y_\test) \|_2^2}
        {\| \P^{-1}(w)(Y_\test) \|_2^2}\,,
    \end{equation*}
    where $\widehat{\P}^\dagger(w)$ is defined in \eqref{eq:operator-learning}.
    We  highlight that $\mathcal{W}$ constitutes initial 
    or boundary conditions as well as source terms depending on the problem 
    at hand. 
\end{itemize}
For different experiments we report errors with  $\mathcal{W}$ taken to be:
the training data (training); an in-distribution test set (ID); 
 or  out-of-distribution  test 
sets (OOD).

\subsection{The Duffing oscillator}\label{sec:numerics:ODE-example}
In this example we compared the performance of 1-step and 2-step KEqL 
along with SINDy for learning a 1D nonlinear ODE. Our focus 
is on  filtering and 
extrapolation/forecasting performance.

\subsubsection{Problem setup} Consider $\mY = (0,50)$ and
the Duffing ODE
\begin{equation}\label{duffing_ODE}
\begin{aligned}
        \P(u) = \partial_{t}^2 u  - 3u +3u^3 +0.2 \partial_t u= \cos(2t),  
        \quad  t \in \mY, \quad \text{s.t.} \quad
        u(0)  = \partial_t u(0) = 0. 
\end{aligned}
\end{equation}

To generate the training data we solved the ODE numerically using a traditional adaptive solver. The numerical solution was then subsampled on the observation points that were also 
picked on a uniform grid; see \Cref{fig:duffing_main} (A). In this example we chose $M=1$, and so we aim to learn the ODE 
and its solution from observations of a single trajectory. The test data sets were generated similarly but on a finer uniform mesh
with different initial conditions.

\subsubsection{Algorithm setup} 

We chose the collocation points $Y$ to be a fine uniform grid in $\mY$.
We further chose the operators $L_1= \text{Id}:u \mapsto u$ , $L_2 : u \mapsto\partial_t u$, and $\overline{P} u = \partial_{t}^2 u$. 
As for kernels we chose $\UK$ to be a rational quadratic kernel while we 
chose $\PK$ to be RBF.
In this example we compared 1-step and 2-step KEqL with SINDy.
To approximate the derivatives in SINDy we used the same kernel 
interpolant as 2-step KEqL. For the SINDy dictionary we chose 
all polynomial terms of up to third degree in the variables $\left\{1,u,\partial_t u\right\}$; we empirically found this dictionary to give the best training 
and test errors for both SINDy and KEqL.

\subsubsection{Results}
We present a summary of our numerical results for this example in \Cref{fig:duffing_main}; further details and additional results 
can be found in \Cref{app:numerics}. In \Cref{fig:duffing_main}(A) we  
visually compare the quality of the filtered solution $\whu$ for 1-step and 
2-step KEqL. We clearly see that 1-step KEqL is superior in filtering 
the solution. This visual performance is further confirmed 
in \Cref{fig:duffing_main}(B) where we observe an order of magnitude improvement 
between 1-step and 2-step methods in terms of relative errors. 

For our next test we considered simulating the learned ODE 
with new initial conditions, this is essentially an extrapolation 
problem where we try to predict the state of the system 
outside of the window of observations. We show three examples
of random initial conditions in \Cref{fig:duffing_main}(C), 
comparing the solutions computed using 1- and 2-step KEqL and SINDy. 
We visually observe that the 1-step method generally tracks 
the true solution for a longer time interval (although all solutions 
eventually diverge). The superior performance of 1-step KEqL 
is also evident in panel (B) where we use $\mathcal{R}_{\text{opl}}$
to denote the pointwise error between the extrapolated solutions and the true 
states. Once again we observe an order of magnitude improvement 
between 1-step and 2-step methods although the performance gap 
is smaller for larger time windows as expected. 

\begin{figure}[ht!]
\centering
\begin{minipage}{0.5\textwidth}
    \scriptsize
    \centering    
    \hspace{15ex}
    \captionsetup[subfigure]{oneside,margin={-8ex,0ex}}
    \subfloat[The true $u$, the data, and the learned $\whu$s]{
    \begin{overpic}[scale = 0.35]{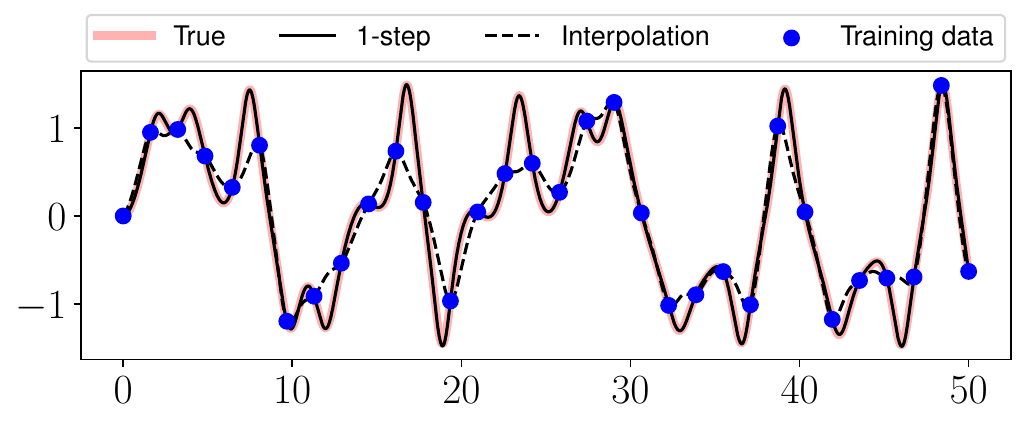}
    \put(-1,19){\rotatebox{90}{$u$}}
    \put(53,-1){$t$}
    \end{overpic}
    }\\
    \captionsetup[subfigure]{oneside,margin={-1ex,0ex}}
    \subfloat[Filtering and extrapolation errors]{
    \renewcommand{\arraystretch}{1.5} 
    \hspace{20ex}
    \begin{tabular}{cl|ccc}
         & & & Method &  \\
        \hline
         &$Y_{\text{test}}$ &1-step & 2-step & SINDy \\
        \hline
         $\mathcal{R}_{\text{filter}}$ & [0,50]&$9.6e^{-3}$ & $3.4e^{-1}$ & $3.4e^{-1}$ \\ \hline
         & [0,3]& $2.2e^{-2}$ & $3.4e^{-1}$ & $4.0e^{-1}$ \\
         $\mathcal{R}_{\text{opl}}$& [0,6]&$1.3e^{-1}$ & $6.2e^{-1}$ & $7.6e^{-1}$ \\
         &[0,10]& $4.2e^{-1}$ & $1.1e^{0}$ & $1.1e^{0}$ \\
        \hline  
    \end{tabular}
    }
\end{minipage}%
\begin{minipage}{0.5\textwidth}
    \subfloat[Extrapolated solutions
    using the learned ODEs]{
    \scriptsize
    \begingroup
    \renewcommand{\arraystretch}{-20}
    \begin{tabular}{c}
           \begin{overpic}[width=0.6\textwidth]{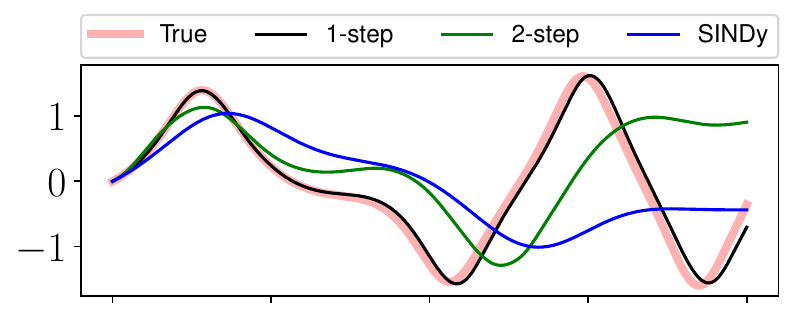}
           \put(0,15){\rotatebox{90}{$u$}}
           \end{overpic}
           \\
           \begin{overpic}[width=0.6\textwidth]{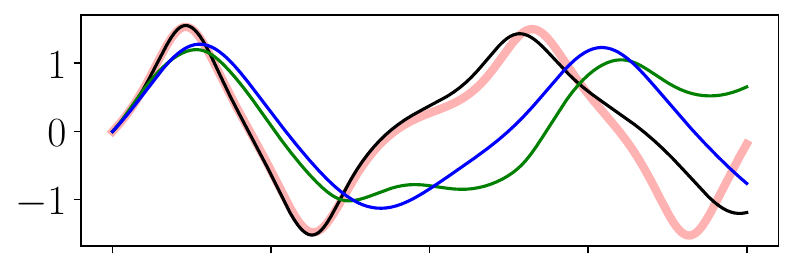}
           \put(0,15){\rotatebox{90}{$u$}}
           \end{overpic}
           \\
          \begin{overpic}[width=0.6\textwidth]{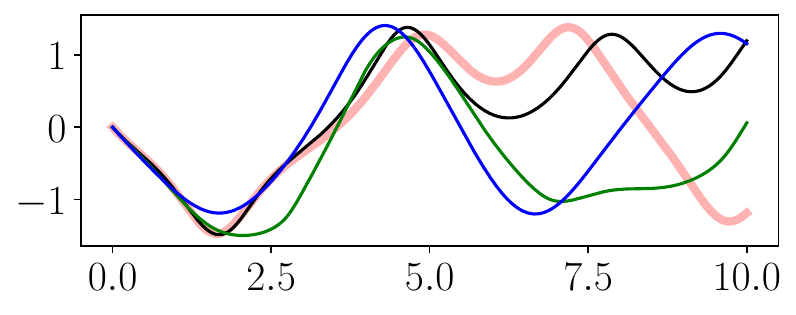}
          \put(0,22){\rotatebox{90}{$u$}}
          \put(53,-2){$t$}
          
            \end{overpic}
    \end{tabular}
    \endgroup
    }
\end{minipage}

\caption{Representative numerical results for the Duffing 
oscillator \eqref{duffing_ODE}: (A) Shows the training data 
and the ground truth  state of the system $u$ in comparison to the 
filtered state $\whu$ using 1-step KEqL and 2-step methods;
(B) Quantitative values of relative filtering and operator learning errors. 
The operator learning errors are reported for different time windows and essentially
constitute extrapolation errors. These values were averaged over three 
novel initial conditions; 
(C) visualization of three extrapolated dynamics
used to compute $\mathcal{R}_{\text{opl}}$.}
\label{fig:duffing_main}
\end{figure}

\subsection{The Burgers' PDE}\label{sec:numerics:burgers}
In this example we compared 1-step KEqL with
SINDy and the PINN-SR algorithm of \cite{chen2021physics}; we consider 
the latter as a direct 1-step competitor for KEqL
and compare errors for
learning PDEs, filtering the solution, and operator learning 
in various scarce data settings.

\subsubsection{Problem setup}
Here we take $\mY = (0,1] \times (0,1)$ and  consider the Burgers' PDE 
\begin{equation}\label{burgers-PDE}
\begin{aligned}
        \P(u) = \partial_t u  + \vartheta u \partial_x u - \nu \partial_{xx} u = 0
        \text{ for }(t,x) \in \mY,     \quad \text{s.t.} \quad    u(0, x) = u_0(x), 
        \text{ and }
        u(t, 0)  = u(t, 1) = 0. 
\end{aligned}
\end{equation}
In all experiments we took $M=1$ so 
that a single solution is observed on a scarce set of observations. 
These solutions were generated by prescribing the initial conditions 
$u_0$ and then solving the PDE using a 
traditional numerical solver on a fine mesh. The observed data 
was then subsampled from a set of Chebyshev collocation points that
were also used in the implementation of all algorithms.

\subsubsection{Algorithm setup}
We took
$L_1: u \mapsto u$ , $L_2 : u \mapsto\partial_x u$, $L_3 : u \mapsto\partial_{x}^2 u$ and also $\overline{\P}(u) = \partial_t u$.  $\UK$ was taken to be RBF  and $\PK$ 
was a polynomial kernel of degree 2.
To approximate the derivatives for SINDy we used the same RBF kernel used in KEqL and the dictionary terms were polynomial features 
$\{1,u,\partial_x u,\partial_{x}^2 u\}$; we found that this dictionary
gave the best results for SINDy. For the PINN-SR method we used the same  dictionary for $P$ along with a feed-forward neural network with 
8 layers and width 20 to approximate $u$.

\subsubsection{Results}
We begin by considering the filtering and equation learning 
errors for one-shot learning of Burgers'. These results are 
show in the top row of \Cref{fig:burgers_main}(A) where 
we are reporting the errors for a fixed smooth initial condition 
with different number of observation points. 
Note, these errors were computed over the training data 
but they are different from the training error 
of the algorithm as they compare the filtered 
solution and the learned PDE on the test mesh $Y_{\text{test}}$.
Since the observation 
points are random we are reporting the average errors along with the best and worst errors over different runs giving the shaded regions. 
We found that 1-step KEqL leads to the best filtering errors 
with a wider performance gap when $N_\mY$ is small. Interestingly, the 
performance gap appears to be more pronounced in the equation learning 
case where we observed an order of magnitude improvement over 
SINDy and almost two orders improvement over PINN-SR. 
We observed significantly larger errors for PINN-SR here which  we 
attribute  to difficulties in tuning and optimizing PINN models. In fact,
 we could often bring these errors down by hand tuning the algorithm 
 in each instance of the experiments but 
automatic strategies proposed in \cite{chen2021physics} appeared 
to be very sensitive to the location of observation points.

This 
issue was exacerbated further when  experiments were repeated with
random initial conditions as depicted in  \Cref{fig:burgers_main}(B).  
Since both the observation points and initial conditions are chosen randomly,
we also report the average errors over the experiments along with the best and worst errors across multiple runs, shown as shaded areas. A similar conclusion can be drawn here as in \Cref{fig:burgers_main}(A), where the 1-step KEqL method continues to achieve the lowest filtering and equation learning errors, with a larger performance gap when $N_\mY$ is small. However, we now observe that the KEqL and SINDy methods exhibit greater robustness across different experiments, whereas the performance of the PINN-SR method deteriorates significantly. This decline is reflected in the increased variance in the results and the larger average error. We attribute this low performance to the use of the same model hyperparameters across different experiments. The main 
takeaway here is that KEqL and SINDy appear to be very robust to the 
choice of hyperparameters where as PINN-SR needs tailored hyperparameter 
tuning for every single run.

In \Cref{fig:burgers_main}(C, D) we show two enlightening examples 
that show the difference in performance of 1-step methods vs 2-step. 
In (C) we hand picked an initial condition that leads to a solution with 
multiple (near) shocks. We see that despite scarcity of observations, 1-step KEqL 
and PINN-SR are able to capture the general shape of the solution while
2-step methods are significantly worse. In (D) we show an instance where
we only observed the value of the solution on the boundary of the domain
while in row (E) we solved the learned equation for a new initial condition. 
Once again we see that 1-step methods give better performance 
for filtering, however, when solving 
the learned PDE a noticeable performance gap appears across the three 
methods with 1-step KEqL giving the best solution.
This example further reflects the performance gaps observed in 
panel (A), with the equation learning performance gaps being 
more pronounced compared to filtering.

\begin{figure}[ht!]

\centering
\scriptsize
\begingroup
\setlength{\tabcolsep}{0pt}
\begin{tabular}{cc}
\adjustbox{valign=c}{
\begin{overpic}[width=0.32\textwidth]{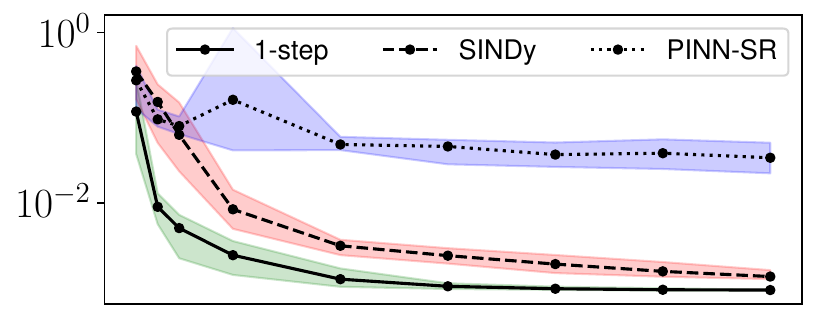}
\put(40,41){\scriptsize $\mathcal{R}_{\rm{filter}}(\rm train)$}
\put(-12,-7){\rotatebox{90}{
\begin{minipage}{0.15 \textwidth}
\centering
(A) Fixed IC \\ Errors
\end{minipage}
}}
\end{overpic}
}
&
\adjustbox{valign=c}{
\begin{overpic}[width=0.32\textwidth]{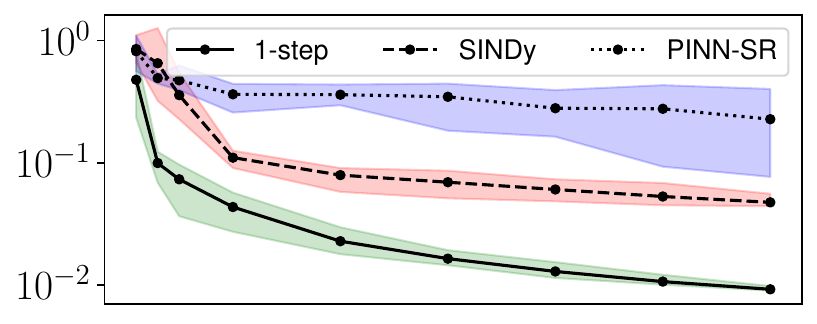}
\put(40,41){\scriptsize $\mathcal{R}_{\rm{eql}}\left(\rm train\right)$}
\end{overpic}
\hspace{8ex}
}
\\
\adjustbox{valign=c}{
\begin{overpic}[width=0.32\textwidth]{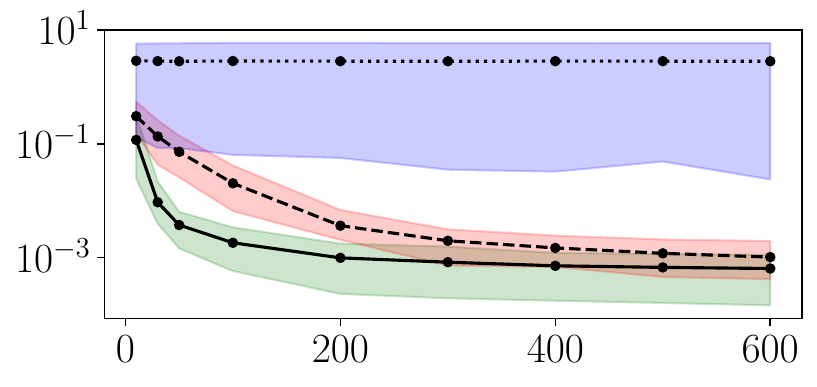}
\put(52,0){\scriptsize $N_\mY$}
\put(-12,-2){\rotatebox{90}{ 
\begin{minipage}{0.15 \textwidth}
\centering
(B) Varying IC \\ Errors
\end{minipage}
} }
\end{overpic}
}
&
\adjustbox{valign=c}{
\begin{overpic}[width=0.32\textwidth]{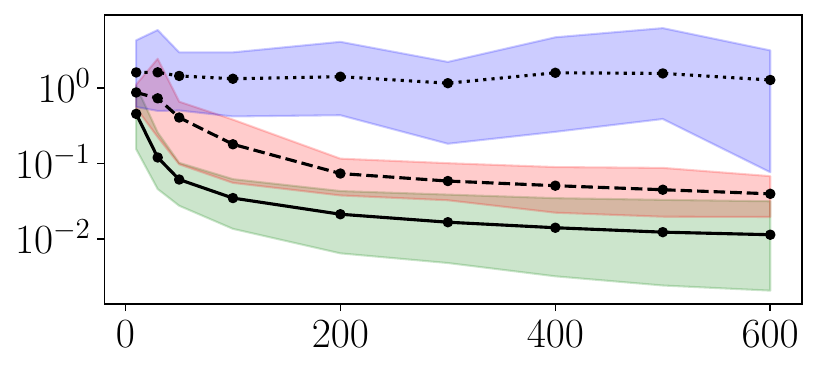}
\put(52,0){\scriptsize $N_\mY$}
\end{overpic}
\hspace{8ex}
}
\end{tabular} 
\endgroup
\\
\vspace{1ex}

\begingroup
\setlength{\tabcolsep}{0pt}
\begin{tabular}{rccl}
\centering
\adjustbox{valign=c}{
\begin{overpic}[scale=0.24]{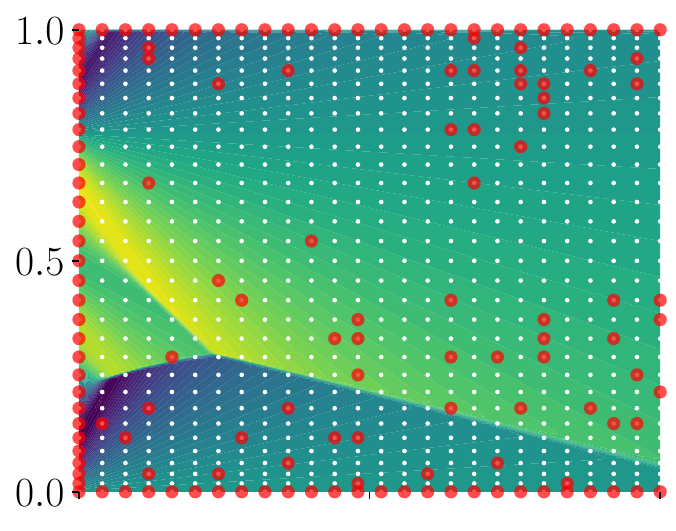}
\put(-3,37){\rotatebox{90}{$x$}}
\put(42,76){{\footnotesize True}}
\put(-23,20){\rotatebox{90}{(C) Ex. 1}}
\end{overpic}
}
&
\adjustbox{valign=c}{
\begin{overpic}[scale=0.24]{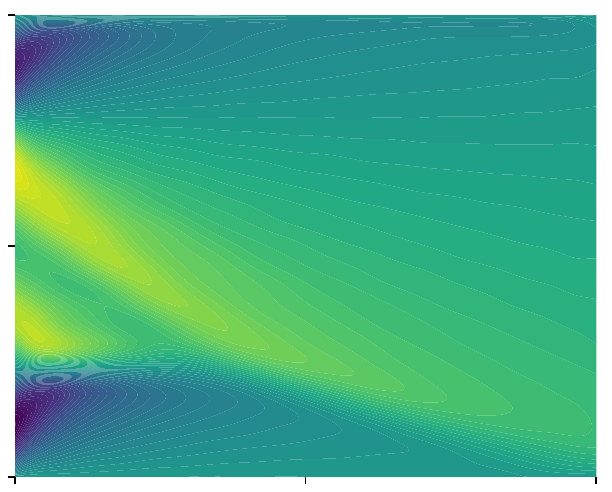}
\put(35,80){{\footnotesize 1-step}}
\end{overpic}
}
&
\adjustbox{valign=c}{
\begin{overpic}[scale=0.24]{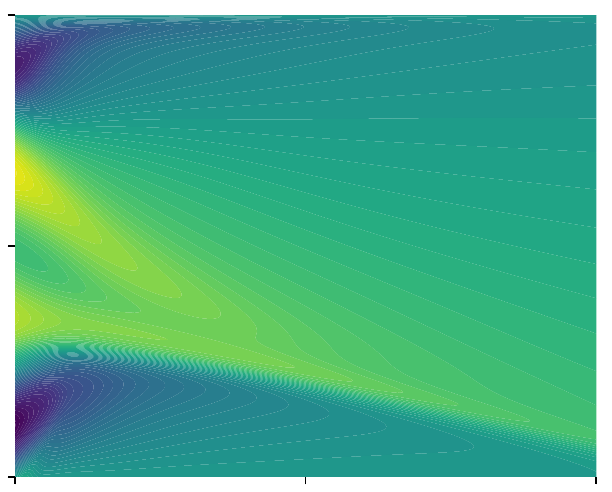}
\put(25,80){{\footnotesize PINN-SR}}
\end{overpic}
}
&
\adjustbox{valign=c}{
\begin{overpic}[scale=0.24]{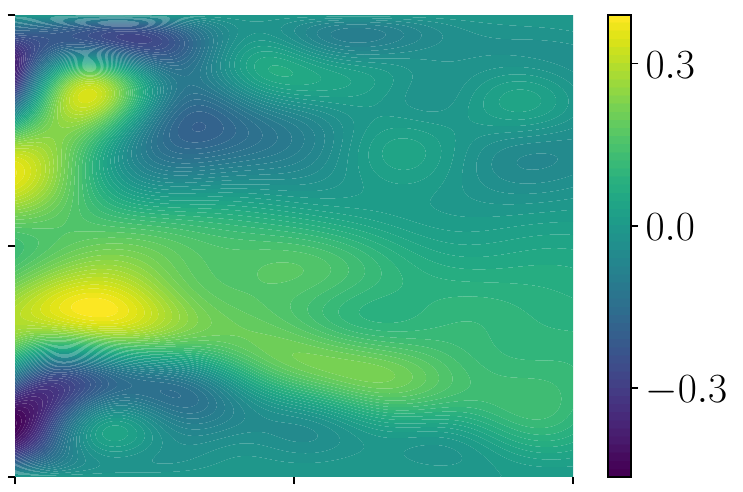}
\put(25,66){{\footnotesize SINDy}}
\end{overpic}
}
\\
\adjustbox{valign=c}{
\begin{overpic}[scale=0.24]{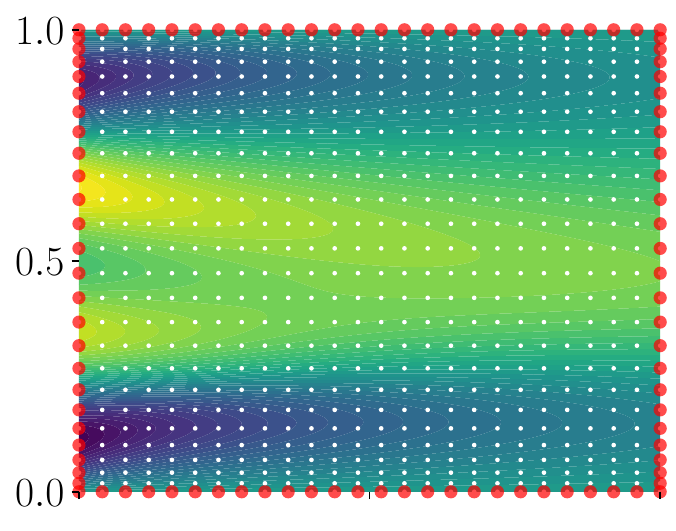}
\put(-3,37){\rotatebox{90}{$x$}}
\put(-23,20){\rotatebox{90}{(D) Ex. 2}}
\end{overpic}
}
&
\hspace{1ex}\raisebox{.02\height}{\includegraphics[align = c, scale=0.24]{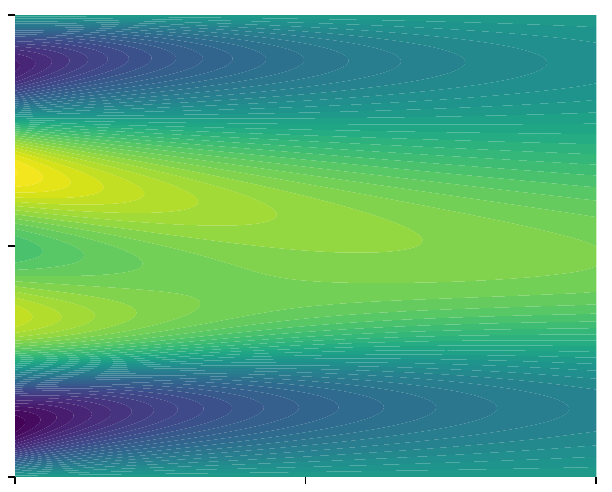}}
&
\hspace{1ex}\includegraphics[align = c,scale=0.24]{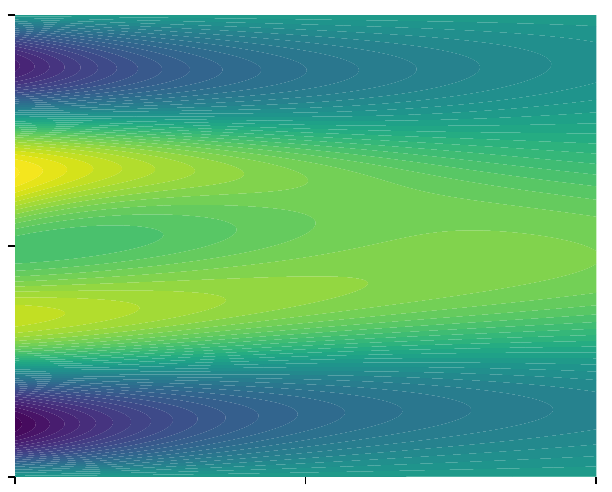}
&
\hspace{2ex}\includegraphics[align = c,scale=0.24]{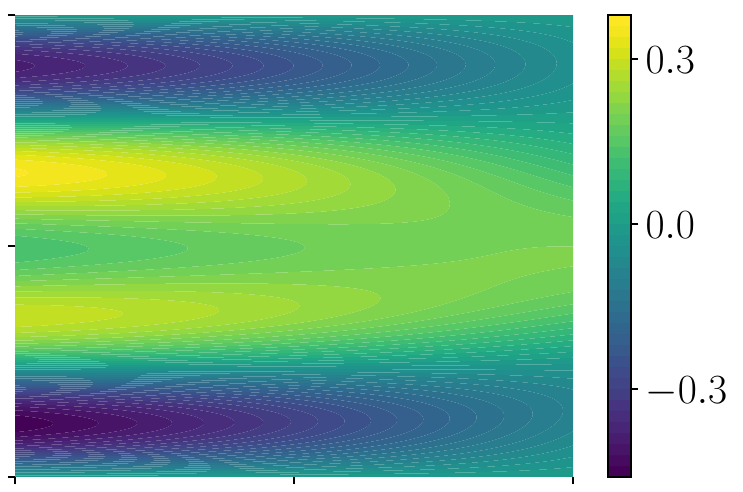}
\\
\begin{overpic}[scale=0.24]{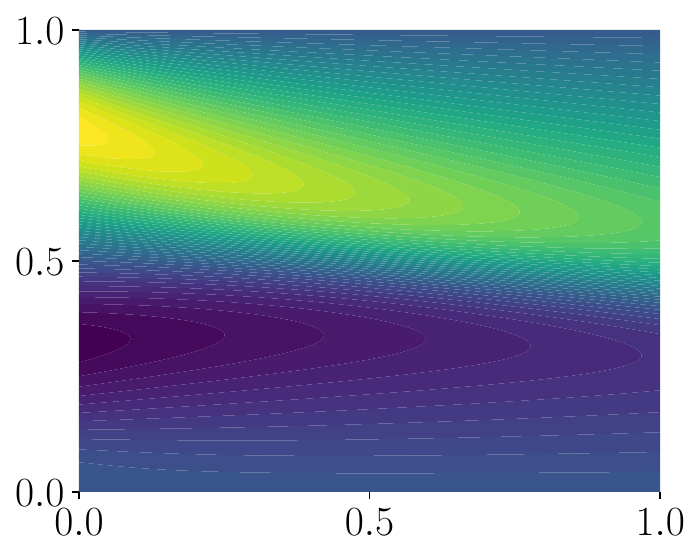}
\put(-3,40){\rotatebox{90}{$x$}}
\put(51,-3){$t$}
\put(-20,20){\rotatebox{90}{(E) New IC}}
\end{overpic}
&
\begin{overpic}[scale=0.24]{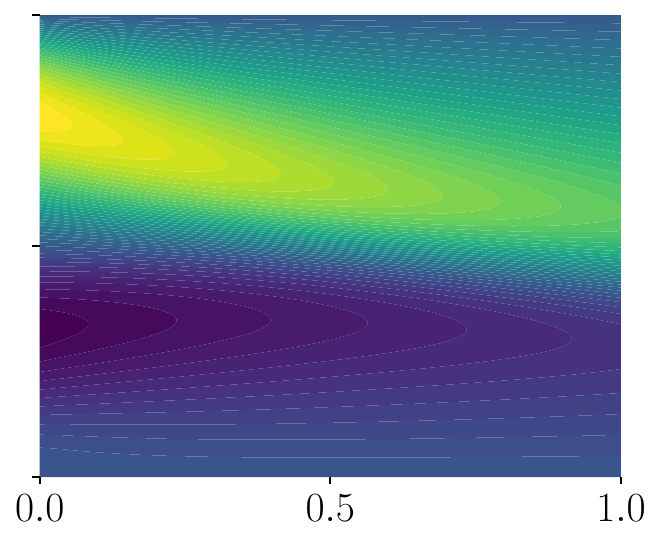}
\put(47,-3){$t$}
\end{overpic}
&
\begin{overpic}[scale=0.24]{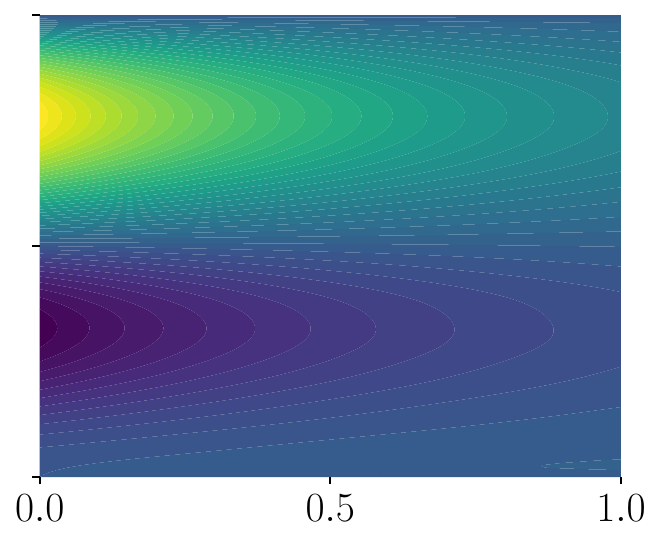}
\put(47,-3){$t$}
\end{overpic}
&
\begin{overpic}[scale=0.24]{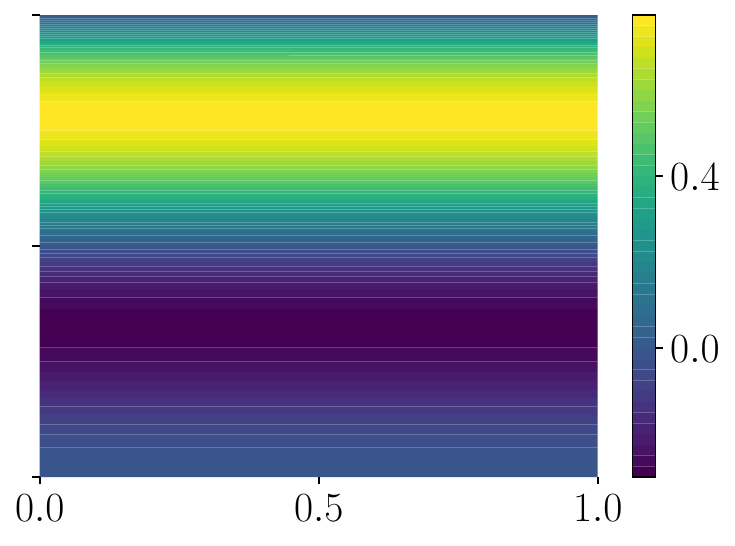}
\put(41,-3){$t$}
\end{overpic}
\end{tabular}
\endgroup
\caption{Representative numerical results for Burgers' PDE \eqref{burgers-PDE}: 
(A) The filtering and equation learning 
errors computed for the training functions for 1-step KEqL, SINDy, and 
PINN-SR using $M=1$ training pairs with different number 
of interior observations $N_\mY$; (B) Similar experiment as panel (A) 
but with randomized initial conditions;
(C) An example application for an initial condition that leads to multiple shocks with scarce observations depicting the quality of 
filtering obtained using 1-step and 2-step methods;
(D) Similar setup to row (C) with a smooth solution that is only 
observed on the boundary; 
(E) Depicting the solution to the PDEs that were learned in row (D)
for a new initial condition.
}
\label{fig:burgers_main}
\end{figure}

\subsection{Darcy's flow PDE}\label{sec:numerics:darcy}
In this example we performed a systematic study of the 
performance of 1-step (its reduced version) and 2-step KEqL for learning an 
elliptic PDE with a variable diffusion coefficient.
In particular, we investigate the ID and OOD performance 
in terms of filtering, equation learning, and operator learning.

\subsubsection{Problem setup}
Here we take $\mY = (0,1) \times (0,1)$ and  consider the problem
\begin{equation}\label{darcy_PDE}
\begin{aligned}
        \P(u) = \operatorname{div}\left(a\nabla u\right) = f(x) \text{ with } a(x) = \exp\left(\sin( \cos(x_1)+ \cos(x_2))\right), \text{ for } x\in \mY, 
        \text{ and } u(x) = g(x), \text{ for } x\in \partial\mY.
\end{aligned}
\end{equation}
To generate the training data we drew functions $u$ from a smooth GP and took $f= \P(u)$, also considering the value of $u$ at $\partial\mY$ as the boundary condition. Each $u$ was then subsampled on the observation points that were picked randomly in the interior of a uniform collocation grid where the PDE was enforced. The test data sets were generated similarly, with the OOD data drawn from a GP with a 
different length scale.

\subsubsection{Algorithm setup}
We picked the operators $L_1: u \mapsto u$ , $L_2 : u \mapsto\partial_{x_1} u$, $L_3 : u \mapsto\partial_{x_1}^2 u$, $L_4 : u \mapsto\partial_{x_2} u$, $L_5 : u \mapsto\partial_{x_2}^2 u$, $L_6 : u \mapsto\partial_{x_1x_2} u$ and $\overline{P}\equiv0$. We used $\UK = \UK_{\rm RBF}$ for learning $u$ and $\PK$ was a hybrid 
polynomial kernel as the product of an RBF for the 
spatial variables and a polynomial kernel of first degree for 
the rest of the variables. For all experiments we employed 
the reduced 1-step KEqL along with additional sparse numerical linear 
algebra tricks (see \Cref{app:algorithms}) to scale 
the algorithms to large training data sizes.

\subsubsection{Results}
\Cref{fig:darcy_main} summarizes various training and test errors 
for this example focusing on equation learning and operator learning 
errors. These results were computed using randomly sampled 
solutions pairs and observation points. The black lines represent the average errors, while the shaded regions indicate the range of errors, spanning from the worst to the best across multiple runs.; see \Cref{app:numerics} for more details.

We observed that the reduced 1-step method consistently outperformed the 2-step method across all tasks both ID and OOD. 
Most notably, in very scarce observation regime (only $N_\mY=2$  
interior observations per function) we see an order of magnitude 
performance gap between 1-step and 2-step methods across the board. 
As expected, the gap reduces as we increase the 
observations $N_\mY$ and the size of the training data $M$.
 Finally, we note that 
the operator learning errors follow the same trends as 
the equation learning errors which suggests that 
ID operator learning should inherit similar rates as 
ID equation learning. This fact was also 
shown theoretically in \Cref{app:theory:operator-learning-error} 
under strong assumptions on the true PDE $\P$.

\begingroup
\setlength{\tabcolsep}{0pt}
\begin{figure}[ht!]
\centering
\scriptsize
\begin{tabular}{cccc}
\begin{overpic}[scale=0.38]
{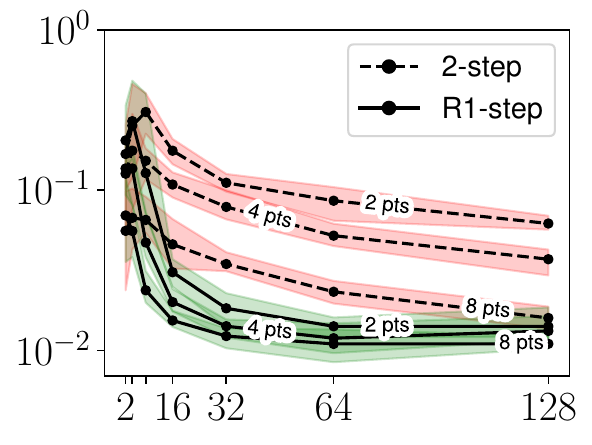}
\put(36,74){$\mathcal{R}_{\rm eql}\left(\rm{train}\right)$}
\put(52,-5){$M$}
\end{overpic}
&
\begin{overpic}[scale=0.38]{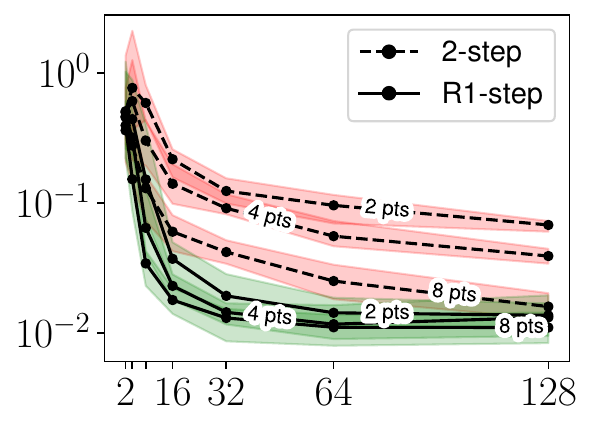}
\put(38,74){$\mathcal{R}_{\rm eql}\left(\rm{ID}\right)$}
\put(52,-5){$M$}
\end{overpic}
&
\begin{overpic}[scale=0.38]{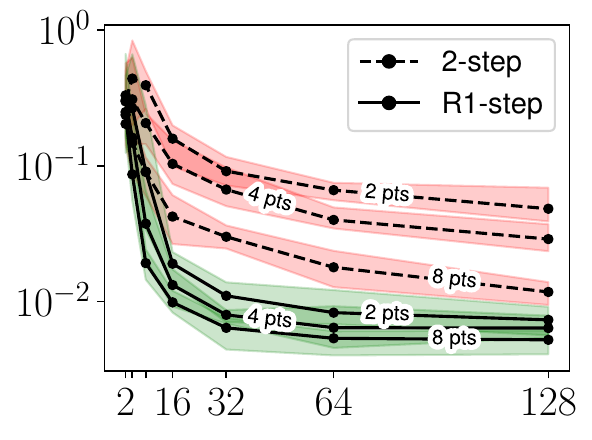}
\put(36,74){$\mathcal{R}_{\rm eql}\left(\rm{OOD}\right)$}
\put(52,-5){$M$}
\end{overpic}
&
\begin{overpic}[scale=0.38]{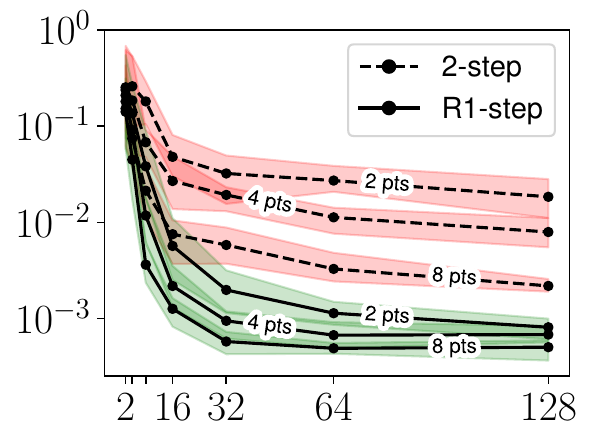}
\put(36,74){$\mathcal{R}_{\rm opl}\left(\rm{ID}\right)$}
\put(52,-5){$M$}
\end{overpic}
\end{tabular}
\caption{Representative numerical results for Darcy's flow 
PDE \eqref{darcy_PDE}: 
The first three figures from the left show the 
equation learning errors computed over training, ID test, 
and OOD test data while the last panel shows the ID operator 
learning errors.
 R1-step here denotes the reduced 1-step KEqL 
method and the labels  on the graphs denote 
the number of interior observations points $N_\mY$.
}
\label{fig:darcy_main}
\end{figure}
\endgroup

\section{Discussion}\label{sec:conclusion}
In conclusion, we presented the 1- and 2-step formulations of KEqL as an 
algorithm for learning nonlinear PDEs as well as their solution operators 
and filtering of observed states in scarce data regimes. Our theoretical 
results provided quantitative error bounds and convergence rates for our 
algorithms while our numerical experiments demonstrated significant gains in 
accuracy and robustness in comparison to existing methods in the literature.
Most notably, our methods appear to be a lot less sensitive to choice of hyperparameters and demonstrate more consistent performance compared to 
equivalent neural net methods. 
Additionally, our 
exposition unifies various problems of interest to scientific machine learning 
under the same umbrella, i.e., equation learning, operator learning, and PDE solvers, 
all viewed as  optimal recovery problems.

Various avenues of future research and extensions of the KEqL framework can be 
identified: (a) we outlined KEqL for a single PDE but 
its extension to systems of PDEs is an obvious next step since 
many physical processes of interest are governed by systems of equations;
(b) our approach to operator learning, after the deployment of KEqL, relies on a 
PDE solver for each evaluation of $\widehat{\P}^\dagger$ which can be expensive. 
Therefore it may be interesting to investigate the emulation of this process 
to obtain a cheap solver that can be deployed for real time predictions;
(c) In many of our experimental results we observed very competitive 
accuracy for 1-step KEqL, however, we still observed a relative
error barrier (around $10^{-2}$ in \Cref{fig:darcy_main}) which 
is not in line with our theoretical guarantees, we suspect these 
issues may arise due to ill-conditioning of the problems and 
various approximations made in the algorithms in order to scale the computations; 
(d) While our theoretical analysis addresses asymptotic convergence rates 
it does not apply to scarce data regimes where indeed we do not have small 
fill-distances, at least not in the physical domain $\mY$, and so a 
non-asymptotic analysis that justifies the scarce-data performance of KEqL 
would be of great interest.

\section*{Acknowldgements}
JFOR, AH and BH were supported by the National Science Foundations grants 
DMS-2208535 "Machine Learning for Inverse Problems" and DMS-2337678 "Gaussian Processes for Scientific Machine Learning: Theoretical Analysis and Computational Algorithms". 
YJ and HO acknowledge support from the Air Force Office of Scientific Research under MURI awards number FA9550-20-1-0358 (Machine Learning and Physics-Based Modeling and Simulation), FOA-AFRL-AFOSR-2023-0004 (Mathematics of Digital Twins), the Department of Energy under award number DE-SC0023163 (SEA-CROGS: Scalable, Efficient, and Accelerated Causal Reasoning Operators, Graphs and Spikes for Earth and Embedded Systems), the National Science Foundations under award number 2425909 (Discovering the Law of Stress Transfer and Earthquake Dynamics in a Fault Network using a Computational Graph Discovery Approach), and the Jet Propulsion Laboratory 
PDRDF 24AW0133 (Jupiter’s Radiation Environment: Assimilating Data with ML-driven Approaches). Additionally, HO acknowledges support from the DoD Vannevar Bush Faculty Fellowship Program.

\bibliographystyle{abbrvnat}
\bibliography{PNAS_REFs}

\appendix

\section{Literature review}\label{app:literature-review}

\subsection{Equation learning and system identification}

The discovery or learning of differential equations 
from data was brought into prominence after the seminal papers 
\cite{bongard2007automated, schmidt2009distilling} that 
used symbolic regression to discover physical laws from data. 
However, the problem of learning the equations that govern a dynamical 
system from time-series data was
already studied extensively in the 70s in the control literature 
under the name {\it system identification}
 \cite{aastrom1971system,ljung2010perspectives,keesman2011system}.
Modern iterations of equation learning are often focused 
on learning dynamical systems and differential equation 
under a sparsity prior over a dictionary of terms that 
are likely to be present \cite{brunton2016discovering, rudy2017data, schaeffer2017learning}. While 
the Sparse Identification of Nonlinear Dynamics (SINDy) \cite{brunton2016discovering} is perhaps 
the most widely known instance of such sparse regression algorithms, 
many other variants have been proposed in the literature with the 
main defining features being the way in which the sparsity prior 
is implemented \cite{schaeffer2017learning,kang2021ident}.
Since these earlier contributions, many extensions of the sparse 
regression approach have been proposed. Some notable examples 
are: weak form equation learning methods such as weak-SINDy \cite{messenger2021weak,messenger2021weak} that aim to reduce the order
of partial derivatives to improve accuracy and robustness with 
noisy data; ensemble methods such as 
ensemble-SINDy \cite{fasel2022ensemble} where 
many SINDy models are trained with different dictionaries and 
parameters to then be ensembled together to produce a more accurate
model; and Bayesian methods \cite{zhang2018robust, niven2019bayesian, yang2020bayesian, hirsh2022sparsifying, north2022bayesian, north2023bayesian}
that utilize a probabilistic formulation of the equation learning problem, 
often using sparsity-promoting priors over dictionary parameters, to 
enable uncertainty quantification.  
Many other extensions of the sparse regression approach to 
equation learning have been proposed in the literature that
are outside the scope of the current article and so we refer the reader
to the review article \cite{north2023review} and the references within. 
It is important to note that the overwhelming majority of the methods
discussed above fall under the category of 2-step methods within 
our exposition  where partial derivatives of the functions $u^m$ are
estimated separately from learning the equation $P$. Some instances, 
most notably weak-SINDy \cite{messenger2021weak} and UQ-SINDy \cite{hirsh2022sparsifying}, can be formulated as 1-step methods but 
this goes beyond a simple modification of their current implementations.  
The closest method to 1-step KEqL within the sparse regression family 
is the PiSL algorithm of \cite{ijcai2021p283} that estimates the $u^m$
as B-cubic splines  and  $P$ over a dictionary, simultaneously. However, 
this method was primarily developed for ODE models and was not 
extended to the case of PDEs. 

Various kernel/Gaussian process (GP) methods for equation learning, or 
adjacent problems, have been proposed in the literature. The connections 
between Bayesian/GP inference and numerical analysis were already observed 
in the works of Wahba and Kimeldorf \cite{Kimeldorf70,kimeldorf1971some, wahba1990spline} which underlay GP techniques for solution of 
differential equations \cite{owhadi2015bayesian,chkrebtii2016bayesian,cockayne2016probabilistic,raissi2018numerical,owhadi2019statistical,owhadi2019operator,cockayne2019bayesian,wang2021bayesian,chen2021solving}. While the aforementioned works were mainly focused on the numerical solution of differential equations, this line of 
thinking has lead to various kernel/GP methods for learning equations 
as well, most notably, \cite{raissi2017machine} introduced GPs for learning linear differential equations in a 1-step manner while \cite{lahouel2024learning} introduced a 1-step kernel method resembling 
our formulation for learning ODEs. The series of papers
\cite{hamzi-I,hamzi-II,hamzi-III,hamzi-IV, hamzi-V, hamzi-VI} also 
introduced a methodology that is very similar to ours for  
inference of ODEs from time-series data towards filtering 
and data assimilation. 
\cite{rosenfeld2024occupation} introduces a kernel analogue to 
SINDy and weak-SINDy for dynamic problems based on the idea of 
occupation kernels, kernels that correspond to integrals of RKHS functions over trajectories. The idea of occupation kernels was further used 
in \cite{wells2024stochastic} for learning the drift and diffusion of 
an stochastic process. \cite{darcy2023one} also used a kernel 
method for learning stochastic differential equations from a 
single trajectory.
\cite{LONG2024134095} introduced an early version of 2-step KEqL for learning ODEs and PDEs with unknown 
coefficients and observed 
improved performance in both equation learning and operator learning, 
inspiring the present paper. Finally, we note that the 
computational graph completion framework that contains our  
methodology was introduced in \cite{owhadi2022computational} where 
it was already used to learn an ODE model for an electrical circuit 
using scarce data, this work was further extended to 
hyper-graphs in \cite{bourdais2024codiscovering}.

More recently, various neural net models for learning and discovery of 
differential equations have been proposed. These methods 
range from symbolic regression \cite{long2018pde, long2019pde, xu2021dl, xu2020dlga, xu2021deep} to neural net regression 
\cite{raissi2018deep, both2021deepmod, chen2021physics, koenig2024kan}. The latter class of methods are based on the methodology of 
physics-informed neural nets (PINNs) \cite{raissi2019physics} 
that approximates solutions of PDEs by minimizing residuals over a 
neural net function class. 
Both  \cite{raissi2018deep, chen2021physics} model the 
unknown functions $u^m$ with neural networks. The deep hidden physics 
model of \cite{raissi2018deep} 
then proceeds to also model $P$ with a neural net while 
the PINN-SR algorithm of \cite{chen2021physics} models $P$ over a sparse dictionary akin to SINDy. We should note 
that the deep hidden physics model was not originally presented 
in the setting of scarce observations but it can be easily modified 
for that task akin to the PINN-SR model. However, both methods 
are prone to difficulties with solving the resulting optimization 
problems as is known for other PINN models \cite{krishnapriyan2021characterizing}. Regardless, the PINN-SR model 
is the closest competitor to our 1-step KEqL method and hence is used
as a benchmark in our experiments.

\subsection{Theory of equation learning}
The problem of learning dynamical systems is a vast and old field
with a mature theoretical foundation. A complete overview of this field
is outside the scope of this paper, and we refer the reader
to the recent survey \cite{berry2025limits}. The theoretical analysis of equation learning, on the other hand, is a more recent and less mature
topic. 
The convergence properties of sparse regression methods for equation learning (e.g. SINDy) were studied in the series of articles \cite{schaeffer2017learning, zhang2019convergence,schaeffer2020extracting,kang2021ident}
where techniques from compressed sensing and random matrix theory
were used to provide recovery guarantees over appropriate dictionaries. 
Although these results can be viewed as analogues of our error analysis
for 2-step KEqL for the case of sparse regression methods, they are
not applicable to 1-step methods in scarce data regimes of interest
to us. The recent paper \cite{scholl2023uniqueness} studies
the identifiability of equation learning for ODEs and PDEs, i.e.,
under what conditions is it at all possible to learn an equation
even with abundant data? Conversely, \cite{he2024much} considers
the limitations of learning a PDE from a single trajectory.

Our theoretical results give a different type of 
result compared to the aforementioned papers by presenting 
worst-case error bounds and mostly relying on smoothness assumptions 
on the functions $u^m$ and $P$ while remaining applicable to 
1-step methodologies. 
Our techniques borrow ideas 
from the mature field of scattered data approximation 
\cite{wendland2004scattered} and build on Sobolev sampling
 inequalities from approximation theory \cite{wendland2005approximate,narcowich2006sobolev,arcangeli2007extension}. 
 Indeed, our theory is heavily inspired by the recent papers 
 \cite{batlle2025error,batlle2023kernel} where error bounds of a 
 similar flavor were derived for kernel PDE solvers and kernel 
 operator learning algorithms.

\subsection{Inverse problems}
 Identifying  parameters (often functions) of a differential equation 
 is the primary focus of the field of inverse problems \cite{tarantola2005inverse,kaipio2006statistical}
 with a rich history of theoretical analysis \cite{engl1996regularization,kabanikhin2011inverse} and 
 computational methodologies \cite{vogel2002computational}.
 While traditional inverse problems focus on known PDEs with 
 unknown spatio-temporal coefficients, the methodologies developed 
 for those problems can be extended to both 1-step and 2-step 
 methodologies for equation learning; in fact, one can 
 argue that equation learning, as presented in the current paper, 
 is an inverse problem for $P$. Many of the ideas that we 
 developed in the current paper including the use of 
 RKHS regularizers and the linearizations used to define the 
 algorithm for solving the 1-step KEqL problem are prevalent in the 
 inverse problems literature \cite{haber2000optimization}. 
Ideas akin to our 1-step formulation have also appeared in the 
inverse problems literature under the name of joint inversion 
\cite{haber1997joint}, where parameters of related models are recovered 
simultaneously, as well as all-at-once inversion 
\cite{Kaltenbacher2016-zrip, Kaltenbacher_2017} where the unknown 
coefficients of the PDE and the solution are estimated in a 
single optimization problem.

\subsection{Operator learning}
The field of operator learning has become very popular in 
recent years and since the seminal papers \cite{bhattacharya2021model,li2020fourier, lu2021learning}
where neural net techniques were developed for the approximation of 
solution maps of PDEs. Since then, a large body of work has been developed
around operator learning focusing on methodologies as well as
theory; see \cite{KOVACHKI2024419} and references within
for neural operators and \cite{batlle2023kernel} for the kernel
perspective.
We should note that, while modern operator learning is largely focused 
on neural net models for data-driven learning of mappings between function
spaces, the operator learning problem has appeared in the literature 
since at least the early 2000's in the fields of computer model 
emulation \cite{kennedy2001bayesian}, 
polynomial chaos \cite{xiu2002wiener},
stochastic Galerkin methods \cite{ghanem2003stochastic}, 
 reduced order modeling \cite{lucia2004reduced},
and model order reduction \cite{schilders2008model}, all of whom 
are supported by mature theoretical and methodological literature.  
We also mention the works \cite{schafer2021compression,boulle2022learning,boulle2023elliptic,wang2023operator,schafer2024sparse} that consider the operator learning of linear PDEs 
with quantitative approximation rates.

It is important to note that our perspective towards operator learning, 
which deliberately utilizes the fact that the differential operator $\P$
is local, is far from  the dominant approach in the aforementioned 
works. However, this idea has been investigated in the 
context of physics-informed operators in recent years \cite{de2022generic,goswami2023physics,li2024physics}.
These models train a neural net to learn the solution map of a PDE 
with an additional term in the training loss that minimizes the 
residual of the PDE for the predicted solutions on a collocation 
mesh akin to our 1-step method. The main departing feature however is 
that physics-informed neural operators assume knowledge of the underlying 
PDE.

\subsection{Data assimilation}
The problem of recovering the functions $u^m$ and in turn 
predicting the solution of a dynamic PDE in future times falls within 
the field of data assimilation \cite{law2015data,reich2015probabilistic}
which, historically, was developed closely alongside 
filtering \cite{bar2004estimation,xiong2008introduction} and control 
\cite{glad2018control,franklin2002feedback}.
The problem of filtering the state of a control system while 
identifying its unknown parameters (i.e., system identification) 
is also classical in filtering \cite{voss2004nonlinear,schoukens2019nonlinear} 
and can be solved using classic techniques such as extended Kalman smoothing \cite[Sec.~5]{sarkka2023bayesian}.
 This idea has been further developed in various directions
 including: expectation maximization methods \cite{ghahramani1998learning,schon2011system};
 dual extended Kalman filters \cite{wan2001dual};
 GP dynamical models \cite{wang2005gaussian,frigola2014variational} 
 (which are reminiscent 
 of our method as well as the work \cite{hamzi-I} and subsequence works); and
 sequential Monte Carlo \cite{liu2001combined}. 
 The important distinction of these works compared to 
 our approach is three fold, first, the works in data assimilation 
 and control often assume particular structure for the 
 underlying dynamic models; second, these works are 
 almost exclusively developed for dynamical systems as opposed to PDEs; 
 and third, data assimilation is almost exclusively concerned with 
 time-series data. However, the close adjacency of the 
 aforementioned work to ours suggests potential future applications of 
 the KEqL methodology within the data assimilation literature.

\section{Theoretical details}\label{app:theory-details}

In this section we collect details of the theoretical foundations behind our 
kernel equation learning algorithm along with detailed proofs of 
convergence analyses and error bounds presented in the main body of the paper. 

\subsection{Preliminaries}\label{app:prelims}

Here we collect some preliminary definitions and results 
from the theory of  RKHSs and Sobolev spaces that are used throughout 
the main body of the paper as well as proofs outlined later 
in this section. 

\subsubsection{RKHS review}\label{app:theory:RKHS-review}
The following results are standard in the theory of RKHSs and 
can be found in many classic texts such as \cite{scholkopf2018learning,owhadi2019operator,steinwart2008support,berlinet2011reproducing}.
Consider an open set $\Omega \subset \R^q$ and 
a kernel $\HK: \Omega \times \Omega \to \R$. We say $\HK$ is positive 
definite and symmetric (PDS) if 
 for any $N \in \mathbb{N}$ and set of points 
 $X = \{x_1, \dots, x_N\} \subset \Omega$, the kernel 
 matrix $\HK(X, X) \in \R^{N \times N}$
 with $(i,j)$-entries $\HK(x_i,x_j)$ is PDS. If $\HK(X, X)$ is 
 strictly PDS  then we say $\HK$ is a strictly PDS kernel. 

Every PDS kernel $\HK$ is uniquely identified with a Hilbert space $\mH$,
called its corresponding RKHS, with inner product 
$\langle \cdot, \cdot \rangle_{\mH}$ and norm $\| \cdot \|_\mH$.
The kernel $\HK$ and elements of 
$\mH$ satisfy the so-called reproducing property, i.e., 
$\langle f, \HK(x, \cdot) \rangle_\mH = f(x)$ for all $f \in \mH$.
We are particularly interested in the characterization of $\mH$ via 
Mercer's theorem. 

\begin{proposition}\label{prop:Mercers-thm}
    Suppose $\Omega \subset \R^q$ is bounded and let $\HK$ be a PDS kernel that is continuous in both of its arguments on $\overline{\Omega}$. 
    Then there exists an orthonormal set of continuous eigenfunctions
    $\{ e_i \}_{i=1}^\infty \subset L^2(\Omega)$ and 
    decreasing eigenvalues
     $\{\lambda_i \}_{i=1}^\infty$, $\lambda_1 \ge \lambda_2 \ge \dots$, 
    such that 
    \begin{equation*}
        \HK(x, x') = \sum_{i=1}^\infty \lambda_i e_i(x) e_i(x').
    \end{equation*}
  The RKHS $\mH$ can be characterized as 
    \begin{equation}\label{Mercer-RKHS-characterization}
        \mH = \left\{ f: \Omega \to \R \mid f(x) 
        = \sum_{i \in \{ i \mid \lambda_i \neq 0\}} c_i(f)  e_i(x), \qquad 
        \sum_{i \in \{i \mid \lambda_i \neq 0 \}} \lambda_i^{-1}c_i(f)^2  < +\infty
        \right\}
    \end{equation}
    and for any pair $f,f' \in \mH$ we have  
    $\langle f, f' \rangle_{\mH} =  \sum_{i \in \{ i \mid \lambda_i \neq 0\} } \lambda_i^{-1} c_i(f) c_i(f')$. 
\end{proposition}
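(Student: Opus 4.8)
The plan is to realize the kernel $\HK$ through its associated integral operator, apply the spectral theorem for compact self-adjoint operators, and then promote the $L^2$ spectral decomposition to a pointwise and uniform one via the classical Mercer argument. Concretely, I would first define the integral operator $T_\HK: L^2(\Omega) \to L^2(\Omega)$ by $(T_\HK f)(x) = \int_\Omega \HK(x,y) f(y) \, \dif y$. Since $\Omega$ is bounded and $\HK$ is continuous on $\overline{\Omega} \times \overline{\Omega}$, the kernel is bounded and belongs to $L^2(\Omega \times \Omega)$, so $T_\HK$ is Hilbert--Schmidt, hence compact. Symmetry of $\HK$ makes $T_\HK$ self-adjoint, and the PDS property makes it positive semidefinite: for continuous $f$ one approximates $\langle f, T_\HK f \rangle_{L^2}$ by Riemann sums of the form $\sum_{i,j} \HK(x_i,x_j) f(x_i) f(x_j) \geq 0$, and then extends to all of $L^2$ by density. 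The spectral theorem then yields an orthonormal system $\{e_i\}$ in $L^2(\Omega)$ and eigenvalues $\lambda_1 \geq \lambda_2 \geq \cdots \geq 0$ tending to zero.

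Next I would upgrade the eigenfunctions to continuous ones: for each $i$ with $\lambda_i \neq 0$ we have $e_i = \lambda_i^{-1} T_\HK e_i$, and $T_\HK e_i$ is continuous by uniform continuity of $\HK$ on the compact set $\overline{\Omega}$ together with dominated convergence. The crux of the argument is then to show that the truncated kernels $\HK_n(x,x') := \sum_{i=1}^n \lambda_i e_i(x) e_i(x')$ converge to $\HK(x,x')$. The key observation is that each remainder $\HK - \HK_n$ is itself a PDS kernel, so its diagonal is nonnegative; restricting to the diagonal gives $\sum_{i=1}^n \lambda_i e_i(x)^2 \leq \HK(x,x)$, an increasing sequence of continuous functions bounded above by the continuous function $x \mapsto \HK(x,x)$. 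By Dini's theorem the diagonal convergence is uniform, and a Cauchy--Schwarz estimate $|\HK(x,x') - \HK_n(x,x')| \leq (\sum_{i>n} \lambda_i e_i(x)^2)^{1/2} (\sum_{i>n} \lambda_i e_i(x')^2)^{1/2}$ propagates this to absolute and uniform convergence off the diagonal, delivering the claimed expansion.

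Finally, to obtain the RKHS characterization \eqref{Mercer-RKHS-characterization}, I would define the candidate space with the stated inner product and verify directly that it is a Hilbert space in which $\HK$ is reproducing. The reproducing property follows since $\HK(x,\cdot) = \sum_i \lambda_i e_i(x) e_i(\cdot)$ has coefficients $c_i(\HK(x,\cdot)) = \lambda_i e_i(x)$, so that $\langle f, \HK(x,\cdot) \rangle = \sum_i \lambda_i^{-1} c_i(f) \lambda_i e_i(x) = \sum_i c_i(f) e_i(x) = f(x)$; one checks $\HK(x,\cdot)$ has finite norm using $\sum_i \lambda_i e_i(x)^2 = \HK(x,x) < \infty$. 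Appealing to the uniqueness of the RKHS associated with a PDS kernel then identifies this space with $\mH$.

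The step I expect to be the main obstacle is the uniform-convergence part of Mercer's theorem, and in particular the passage from pointwise convergence on the diagonal to absolute and uniform convergence off the diagonal. The nonnegativity of the remainder kernels together with Dini's theorem is precisely what makes this work, and some care is needed to ensure the eigenfunctions are genuinely continuous before invoking Dini.
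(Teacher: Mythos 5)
The paper itself offers no proof of this proposition: it is quoted as a standard result with citations to classic texts, so there is no in-paper argument to compare against, and what you have reconstructed is the classical textbook proof (integral operator, spectral theorem, Mercer's uniform-convergence argument, then identification of the RKHS via Moore--Aronszajn uniqueness). The overall skeleton is right, and the final RKHS-characterization step is fine. However, there is a genuine gap at the step you yourself flagged, and as written it would fail: Dini's theorem requires a monotone sequence of continuous functions converging \emph{pointwise to a continuous limit}; it says nothing about a monotone sequence that is merely \emph{bounded above} by a continuous function. Your diagonal estimate $\sum_{i=1}^n \lambda_i e_i(x)^2 \le \HK(x,x)$ only yields boundedness, hence pointwise convergence of the partial sums to \emph{some} limit $h(x) \le \HK(x,x)$; until you prove $h(x) = \HK(x,x)$, Dini cannot be invoked and the whole uniform-convergence argument stalls. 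Establishing this identity is precisely the nontrivial content of Mercer's theorem, not a formality.

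The standard repair is as follows. For fixed $x$, your Cauchy--Schwarz tail bound shows the series $\sum_i \lambda_i e_i(x) e_i(\cdot)$ converges absolutely and uniformly, so its sum is continuous in the second argument; set $g_x := \HK(x,\cdot) - \sum_i \lambda_i e_i(x) e_i(\cdot)$. With $T f := \int_\Omega \HK(\cdot,y) f(y)\,\dif y$, one checks that $\langle \HK(x,\cdot), e_j\rangle_{L^2} = (T e_j)(x) = \lambda_j e_j(x)$, so $g_x \perp e_j$ for every $j$ with $\lambda_j \neq 0$; moreover, for $g \in \ker T$ one has $\langle \HK(x,\cdot), g\rangle_{L^2} = (Tg)(x) = 0$ (as $Tg$ is continuous and vanishes a.e.) and $\langle e_i, g\rangle_{L^2} = 0$ (eigenfunctions with $\lambda_i \neq 0$ lie in $\operatorname{ran} T \perp \ker T$), so $g_x \perp \ker T$ as well. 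Since $L^2(\Omega)$ is the orthogonal sum of $\ker T$ and the closed span of $\{e_i : \lambda_i \neq 0\}$, this forces $g_x = 0$ a.e., hence everywhere by continuity of both terms; in particular $h(x) = \HK(x,x)$, and only now do Dini and your Cauchy--Schwarz estimate deliver uniform convergence. A secondary, related gap: you assert that each remainder $\HK - \HK_n$ is a PDS \emph{kernel}, but what is immediate from the spectral theorem is only that its integral \emph{operator} is positive; converting operator positivity of a continuous kernel into pointwise statements (nonnegativity of the diagonal, or full pointwise positive definiteness) requires the bump-function concentration argument, i.e.\ the converse of the Riemann-sum direction you proved for $T$ itself. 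Both gaps are repairable by standard arguments, but they are exactly where the theorem's difficulty lives.
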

Given the spectral characterization \eqref{Mercer-RKHS-characterization}, we further 
define the nested ladder of RKHS spaces 
\begin{equation*}
    \mH^\gamma := \left\{ f: \Omega \to \R \mid f(x) = \sum_{i=1}^\infty c_i(f) 
     e_i(x) , \quad \sum_{i=1}^\infty \lambda_i^{-\gamma} c_i(f)^2 < + \infty\right\},
\end{equation*}
for $\gamma \ge 1$.
These are precisely the RKHSs corresponding to the kernels $\HK^\gamma(x,x') 
:= \sum_{i=1}^\infty \lambda_i^\gamma e_i(x) e_i(x')$. Naturally, larger 
values of $\gamma$ imply more "smoothness", in particular we have 
the inclusion $\mH^{\gamma_2} \subset \mH^{\gamma_1}$ for $1 \le \gamma_1 < \gamma_2$
following H\"older's inequality.
Observe that our definition of the $\mH^\gamma$ resembles the 
spectral definition of Sobolev spaces $H^\gamma(\Omega)$ on compact sets 
in which case the $\{\lambda_i, e_i\}_{i=1}^\infty$ can be taken as 
the eigenpairs of the Green's function of the Laplacian operator.
The following lemma is useful in our proofs later in this section.

\begin{lemma}\label{lem:H-gamma-bound}
    Suppose $f\in \mH^{2\gamma}$ and $f' \in \mH^\gamma$. Then it holds that 
        $\langle f', f \rangle_{\mH^\gamma} \le 
        \| f'\|_{L^2(\Omega)} \| f\|_{\mH^{2\gamma}}$.
\end{lemma}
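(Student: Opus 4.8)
The plan is to reduce everything to the spectral coefficients furnished by Mercer's theorem (Proposition \ref{prop:Mercers-thm}) and then recognize the claimed estimate as a single weighted application of the Cauchy--Schwarz inequality in $\ell^2$. First I would record that, since $2\gamma > \gamma$, the inclusion $\mH^{2\gamma} \subset \mH^\gamma$ holds, so $f \in \mH^\gamma$ and the pairing $\langle f', f \rangle_{\mH^\gamma}$ is well defined. Expanding $f = \sum_i c_i(f) e_i$ and $f' = \sum_i c_i(f') e_i$ in the Mercer basis $\{e_i\}$, and using that $\mH^\gamma$ is the RKHS of the kernel $\HK^\gamma$ whose eigenvalues are $\lambda_i^\gamma$, the spectral formula for the inner product gives
\begin{equation*}
\langle f', f \rangle_{\mH^\gamma} = \sum_i \lambda_i^{-\gamma}\, c_i(f')\, c_i(f).
\end{equation*}

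The essential step is then to distribute the weight $\lambda_i^{-\gamma}$ as $1 \cdot \lambda_i^{-\gamma}$, keeping the trivial factor with the $f'$ coefficient and the full factor with the $f$ coefficient, and to apply Cauchy--Schwarz:
\begin{equation*}
\sum_i c_i(f')\, \bigl( \lambda_i^{-\gamma} c_i(f) \bigr) \le \Bigl( \sum_i c_i(f')^2 \Bigr)^{1/2} \Bigl( \sum_i \lambda_i^{-2\gamma} c_i(f)^2 \Bigr)^{1/2}.
\end{equation*}
Finally I would identify the two factors with the stated norms: since the $e_i$ are orthonormal in $L^2(\Omega)$ we have $\sum_i c_i(f')^2 = \| f' \|_{L^2(\Omega)}^2$, while the spectral definition of the ladder space $\mH^{2\gamma}$ gives $\sum_i \lambda_i^{-2\gamma} c_i(f)^2 = \| f \|_{\mH^{2\gamma}}^2$, which yields the claim.

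The argument has no genuine obstacle; it is really the interpolation remark that $\gamma$ is the midpoint between the exponents $0$ (i.e.\ $L^2$) and $2\gamma$, so the $\mH^\gamma$ pairing naturally couples $L^2$ against $\mH^{2\gamma}$. The only point requiring care is the bookkeeping of the eigenvalue exponents, so that the two halves of Cauchy--Schwarz reproduce \emph{exactly} the $L^2$ norm and the $\mH^{2\gamma}$ norm rather than intermediate $\mH^\gamma$-weighted quantities. One should also note that the pairing converges absolutely before invoking Cauchy--Schwarz, which is immediate once the right-hand side is seen to be finite under the hypotheses $f' \in \mH^\gamma$ (which in particular forces $f' \in L^2(\Omega)$, as only finitely many $\lambda_i$ exceed $1$) and $f \in \mH^{2\gamma}$.
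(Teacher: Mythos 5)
Your proposal is correct and follows exactly the paper's own argument: expand both functions in the Mercer eigenbasis, write the $\mH^\gamma$ inner product as $\sum_i \lambda_i^{-\gamma} c_i(f') c_i(f)$, and apply Cauchy--Schwarz with the weight $\lambda_i^{-\gamma}$ placed entirely on the $f$-coefficients, so the two factors become $\| f'\|_{L^2(\Omega)}$ and $\| f\|_{\mH^{2\gamma}}$. The additional remarks you make (well-definedness of the pairing and absolute convergence) are sound refinements of the same computation, not a different route.
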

\begin{proof}
For simplicity of notation let us assume that $\lambda_i \neq 0$ so that 
the kernel $\HK$ is strictly PDS (i.e., non-degenerate).
By definition of the $\mH^\gamma$ inner product, and using 
Cauchy-Schwartz, we have that
   \begin{equation*}
       \langle f', f \rangle_{\mH^\gamma} 
       = \sum_{i=1}^\infty \lambda_i^{-\gamma} c_i(f') c_i(f)  
       \le  \left(  \sum_{i = 1}^\infty c_i(f')^2 \right)^{1/2} 
       \left(  \sum_{i = 1}^\infty \lambda_i^{-2\gamma} c_i(f)^2 \right)^{1/2} 
       = \| f'\|_{L^2(\Omega)} \| f\|_{\mH^{2\gamma}}.
   \end{equation*}    
\end{proof}

Finally we recall the following representer theorem for 
interpolation problems in RKHSs which is fundamental to our 
proof of \Cref{thm:one-step-equivalence}: 

\begin{lemma}[Representer theorem for interpolation {{\cite[Cor.~17.12]{owhadi2019operator}}}]\label{prop:generalized-rep-theorem}
 Let $\mH$ be an RKHS with kernel $\HK$ and let $\phi_1, \dots \phi_N \in \mH^\star$ 
 (the set of bounded and linear functionals on $\mH$). Consider
 \begin{equation*}
     \wh{f} := \argmin_{f \in \mH} \| f\|_{\mH} \quad \text{s.t.} \quad 
     \phi_i(f) = z_i, \qquad i=1, \dots, N, 
 \end{equation*}
 for $z = (z_1, \dots, z_N) \in \R^N$. Then every minimizer $\wh{f}$
 has the form $\wh{f} = \HK(\phi, \cdot)^T \wh\alpha$ 
 where $\wh\alpha = \HK(\phi, \phi)^{-1} z$.
 Here we followed the notation of 
 \Cref{thm:one-step-equivalence} and wrote $\HK(\phi_i, x) = \phi_i( \HK(\cdot, x))$, 
 $\HK(\phi, x) = ( \HK(\phi_1, x), \dots, \HK(\phi_N, x)) \in \mH^N$, 
 and $\HK(\phi, \phi) \in \R^{N \times N} $ as the matrix with 
 entries $\phi_j( \HK(\phi_i, \cdot))$. In cases where $\HK(\phi, \phi)$
 is not invertible the vector $\wh\alpha$ is defined in the least squares sense.
\end{lemma}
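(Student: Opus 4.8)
The plan is to prove the statement by the classical orthogonal-projection argument, adapted to the setting of general bounded linear functionals. The first step is to identify Riesz representers for the constraint functionals. Since each $\phi_i \in \mH^\star$ is bounded, the Riesz representation theorem furnishes a unique $\psi_i \in \mH$ with $\phi_i(f) = \langle f, \psi_i \rangle_\mH$ for all $f \in \mH$; evaluating this representer through the reproducing property, $\psi_i(x) = \langle \psi_i, \HK(x, \cdot) \rangle_\mH = \phi_i(\HK(x, \cdot))$, identifies $\psi_i$ with the element $\HK(\phi_i, \cdot)$ of the lemma's notation (using symmetry of $\HK$). Thus each constraint $\phi_i(f) = z_i$ is equivalent to the inner-product condition $\langle f, \HK(\phi_i, \cdot) \rangle_\mH = z_i$ against a fixed element of $\mH$.

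Next I would introduce the finite-dimensional subspace $V := \mathrm{span}\{ \HK(\phi_1, \cdot), \dots, \HK(\phi_N, \cdot) \} \subset \mH$ and split an arbitrary feasible $f$ orthogonally as $f = f_V + f_\perp$ with $f_V \in V$ and $f_\perp \in V^\perp$. The key observation is that the constraints see only $f_V$: since $f_\perp \perp \HK(\phi_i, \cdot)$, we have $\phi_i(f) = \langle f_V, \HK(\phi_i, \cdot) \rangle_\mH = \phi_i(f_V)$ for each $i$. Meanwhile the Pythagorean identity gives $\| f \|_\mH^2 = \| f_V \|_\mH^2 + \| f_\perp \|_\mH^2 \ge \| f_V \|_\mH^2$, with equality iff $f_\perp = 0$. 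Hence replacing any feasible $f$ by its projection $f_V$ preserves feasibility while never increasing the norm, so every minimizer $\wh{f}$ must lie in $V$; that is, $\wh{f} = \HK(\phi, \cdot)^T \alpha$ for some $\alpha \in \R^N$.

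I would then reduce to a finite-dimensional problem by substituting this form into the constraints. With $f = \HK(\phi, \cdot)^T \alpha$, the condition $\phi_j(f) = z_j$ becomes $\sum_i \alpha_i \langle \HK(\phi_i, \cdot), \HK(\phi_j, \cdot) \rangle_\mH = \sum_i \HK(\phi_j, \phi_i) \alpha_i = z_j$, i.e. the linear system $\HK(\phi, \phi)\, \alpha = z$, while the objective collapses to $\| f \|_\mH^2 = \alpha^T \HK(\phi, \phi)\, \alpha$. When $\HK(\phi, \phi)$ is invertible this system has the unique solution $\wh{\alpha} = \HK(\phi, \phi)^{-1} z$, yielding the stated formula.

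The main obstacle is the degenerate case in which the Gram matrix $\HK(\phi, \phi)$ is singular, and handling it cleanly is the crux of the argument. Two points need care. First, feasibility: the reduced system $\HK(\phi, \phi)\, \alpha = z$ is solvable only when $z$ lies in the range of $\HK(\phi, \phi)$, which is precisely the condition under which the original interpolation constraints are consistent when the functionals $\phi_i$ are linearly dependent. Second, uniqueness of the recovered function despite non-uniqueness of the coefficient vector: if $\alpha, \alpha'$ both solve the system then $\alpha - \alpha' \in \ker \HK(\phi, \phi)$, whence $\| f - f' \|_\mH^2 = (\alpha - \alpha')^T \HK(\phi, \phi)(\alpha - \alpha') = 0$, so the minimizer $\wh{f}$ is unique as an element of $\mH$ even though $\wh{\alpha}$ is not. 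This justifies defining $\wh{\alpha}$ in the least-squares (Moore--Penrose) sense, $\wh{\alpha} = \HK(\phi, \phi)^\dagger z$, as the canonical representative, with every admissible choice producing the same $\wh{f}$. Since the lemma is quoted from \cite{owhadi2019operator}, I would keep the exposition at the level of these structural points rather than reproving standard RKHS facts.
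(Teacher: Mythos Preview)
The paper does not supply its own proof of this lemma: it is stated as a recalled result with a citation to \cite[Cor.~17.12]{owhadi2019operator} and is then used as a tool in the proof of \Cref{thm:one-step-equivalence}. Your proposal is the standard and correct argument---Riesz representers for the $\phi_i$, orthogonal projection onto their span, reduction to the linear system $\HK(\phi,\phi)\alpha = z$, and the null-space observation handling the singular case---so there is nothing to compare against beyond noting that your write-up is a faithful self-contained proof of the cited fact.
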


\subsubsection{Sobolev spaces}\label{app:theory:sampling-inequality}

We now  collect useful technical results concerning Sobolev spaces
that are the corner stone of our error analysis 
in \Cref{sec:theory}. For an extensive study of Sobolev spaces 
we refer the reader to \cite{adams2003sobolev}. For introduction to Sobolev sampling inequalities and related results on 
scattered data approximation see \cite{wendland2004scattered}.

For an open set $\Omega \subseteq \R^q$  and $\gamma \in \mathbb{N}$ we write 
$H^\gamma(\Omega)$ to denote the $L^2(\Omega)$ based Sobolev space of index $\gamma$, i.e., the elements of $u \in L^2(\Omega)$
such that all partial derivatives of order $\gamma$ also belong to $L^2(\Omega)$. 
In particular, we equip $H^\gamma(\Omega)$ with the norm 
\begin{equation*}
    \| u\|_{H^\gamma(\Omega)}^2 = \sum_{| \mbf{a} | \le \gamma} 
    \| D^\mbf{a} u \|^2_{L^2(\Omega)},
\end{equation*}
where $\mbf{a} = (a_1, \dots, a_q)$ is a multi-index set with $a_j \in \mathbb{N}_0$
(the set of positive integers including zero)
and $D^{\mbf{a}} := \frac{\partial^{a_1}}{\partial x_1^{a_1}}
\frac{\partial^{a_2}}{\partial x_2^{a_2}} \dots \frac{\partial^{a_q}}{\partial x_q^{a_q}}$, where we used $x_j$ to denote the $j$-th component of $x$
as a vector in $\R^q$. We can then define 
$H^\gamma(\Omega) := \{ u \in L^2(\Omega) \mid \| u \|_{H^\gamma(\Omega)} < + \infty \}$ with the convention $H^0(\Omega) = L^2(\Omega)$.
We recall the following classic results for Sobolev 
spaces: 
\begin{proposition}[Sobolev embedding theorem {\cite[Thm.~4.12]{adams2003sobolev}}]\label{prop:sobolev-embedding}
Suppose $\Omega \subset \R^q$ is a bounded set with Lipschitz boundary and that for 
$p \in \mathbb{N}$ it holds that $\gamma > q/2 + p$. Then $H^\gamma(\Omega)$ 
is continuously embedded in $C^p(\Omega)$ and it holds that 
$\| u \|_{C^p(\Omega)} \le C_\Omega \| u \|_{H^\gamma(\Omega)}$ 
for an embedding constant $C_\Omega \ge 0$ that depends only on $\Omega$.
\end{proposition}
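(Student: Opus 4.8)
The plan is to prove the embedding in two stages: first extend $u$ from $\Omega$ to all of $\R^q$ while controlling its Sobolev norm, and then establish the embedding $H^\gamma(\R^q) \hookrightarrow C^p(\R^q)$ by a Fourier-analytic argument. For the first stage I would invoke a bounded linear extension operator $E \colon H^\gamma(\Omega) \to H^\gamma(\R^q)$ with $Eu|_\Omega = u$ and $\| Eu \|_{H^\gamma(\R^q)} \le C_\Omega \| u \|_{H^\gamma(\Omega)}$. The existence of such an operator is precisely where the Lipschitz boundary hypothesis enters, via Stein's (or Calder\'on's) extension theorem, and I expect this to be the main technical obstacle: constructing $E$ for non-smooth domains is delicate, and one must ensure the constant $C_\Omega$ depends only on $\Omega$ and not on $u$.

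For the second stage, set $w := Eu$ and recall that, since $\gamma \in \mathbb{N}$, the norm $\| w \|_{H^\gamma(\R^q)}$ is equivalent to $\| (1 + |\xi|^2)^{\gamma/2} \wh{w} \|_{L^2(\R^q)}$ by Plancherel, where $\wh{w}$ denotes the Fourier transform of $w$. Fix a multi-index $\mbf{a}$ with $|\mbf{a}| \le p$. Using $\widehat{D^{\mbf{a}} w}(\xi) = (i\xi)^{\mbf{a}} \wh{w}(\xi)$, the key step is to split off the Sobolev weight and apply Cauchy--Schwarz:
\begin{equation*}
\int_{\R^q} |\xi^{\mbf{a}}| \, |\wh{w}(\xi)| \, \dif\xi
\le \left( \int_{\R^q} \frac{|\xi^{\mbf{a}}|^2}{(1 + |\xi|^2)^{\gamma}} \, \dif\xi \right)^{1/2}
\left( \int_{\R^q} (1 + |\xi|^2)^{\gamma} |\wh{w}(\xi)|^2 \, \dif\xi \right)^{1/2}.
\end{equation*}
The second factor is a constant multiple of $\| w \|_{H^\gamma(\R^q)}$. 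The first factor is finite precisely because its integrand decays like $|\xi|^{2(|\mbf{a}| - \gamma)} \le |\xi|^{2(p - \gamma)}$ as $|\xi| \to \infty$, and $\int_{|\xi| \ge 1} |\xi|^{2(p - \gamma)} \, \dif\xi < \infty$ exactly when $2(\gamma - p) > q$, i.e.\ under the hypothesis $\gamma > q/2 + p$. Hence $\widehat{D^{\mbf{a}} w} \in L^1(\R^q)$.

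Finally, since $\widehat{D^{\mbf{a}} w} \in L^1$, Fourier inversion together with the Riemann--Lebesgue lemma shows that $D^{\mbf{a}} w$ coincides almost everywhere with a continuous bounded function and that $\| D^{\mbf{a}} w \|_{L^\infty(\R^q)} \le (2\pi)^{-q/2} \| \widehat{D^{\mbf{a}} w} \|_{L^1(\R^q)} \le C \| w \|_{H^\gamma(\R^q)}$. Summing over all $|\mbf{a}| \le p$ yields $\| w \|_{C^p(\R^q)} \le C \| w \|_{H^\gamma(\R^q)}$. Restricting back to $\Omega$ (which only decreases the supremum) and chaining with the extension bound from the first stage gives $\| u \|_{C^p(\Omega)} \le C_\Omega \| u \|_{H^\gamma(\Omega)}$, with a constant depending only on $\Omega$, as claimed. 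The Fourier estimate is entirely routine once the extension is in hand; the only genuine difficulty is the domain-dependent extension, and the sharpness of the threshold $\gamma > q/2 + p$ is dictated transparently by the integrability of $|\xi|^{2(p - \gamma)}$ at infinity.
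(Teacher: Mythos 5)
Your proof is correct, and there is nothing to compare it against: the paper does not prove this proposition at all, but simply quotes it as a classical result from Adams and Fournier \cite[Thm.~4.12]{adams2003sobolev}. Your two-stage argument --- a Stein/Calder\'on extension to $\R^q$ (where the Lipschitz-boundary hypothesis is used) followed by the Cauchy--Schwarz estimate on the Fourier side, with the threshold $\gamma > q/2 + p$ emerging exactly from the integrability of $(1+|\xi|^2)^{p-\gamma}$ --- is precisely the standard textbook proof of that theorem, and it is consistent with the paper's convention that $\gamma \in \N$ (for fractional $\gamma$ the same argument goes through with the Bessel-potential norm and a fractional extension operator).
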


In addition to the embedding theorem we will heavily rely 
on the Sobolev sampling inequality which gives control 
over the Sobolev norm of a function that is small or zero 
on a discrete set. The following theorem 
is a distillation of \cite[Prop.~2.4]{wendland2005approximate} in 
the form that we need in our proofs; see also \cite{narcowich2006sobolev, le2006continuous, arcangeli2007extension}.

\begin{proposition}[Sobolev sampling inequality]\label{prop:sobolev-sampling-inequality}
Suppose $\Omega \subset \R^d$ is a bounded set with Lipschitz 
boundary and consider a set of points
 $X = \{ x_1, \dots, x_N \} \subset \overline{\Omega}$
with 
fill distance  $h_X:= sup_{x \in \Omega} \inf_{x' \in X} \| x - x' \|_2$.
Let $u|_X$ denote the restriction of $u$ to the set $X$, 
viewed as a vector in $\R^N$.
Further consider indices $\gamma > d/2$ and $0 \le \eta \le \gamma$ and 
let $u \in H^\gamma(\Omega)$. 
\begin{enumerate}[label=(\alph*)]
    \item (Noiseless) Suppose $u|_X = 0$. 
Then there exists $h_0>0$ so that whenever $h_X \le h_0$ we have the inequality 
    $$
    \| u \|_{H^\eta(\Omega)} \le C_\Omega h_X^{\gamma - \eta} \| u \|_{H^\gamma(\Omega)}
    $$
where $C_\Omega >0$ is a constant that depends only on $\Omega$.
\item (Noisy) Suppose $u|_X \neq 0$.
Then there exists $h_0>0$ so that whenever $h_X \le h_0$ we have the inequality 
    $$
    \| u \|_{L^\infty(\Omega)} 
    \le C_\Omega h_X^{\gamma - d/2} \| u \|_{H^\gamma(\Omega)} 
    + 2 \| u|_X \|_\infty,
    $$
where $C_\Omega >0$ is a constant that depends only on $\Omega$.
\end{enumerate}
\end{proposition}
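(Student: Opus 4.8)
The plan is to derive both inequalities from a single local estimate on small patches of $\overline{\Omega}$ and then globalize via a bounded-overlap covering. The two local ingredients are a Bramble--Hilbert polynomial approximation bound and a norming-set (Markov-type) inequality for polynomials. Fix $k := \lceil \gamma \rceil - 1$ and let $\Pi_k$ denote the polynomials of degree at most $k$ (so $k < \gamma$). Since $\Omega$ has Lipschitz boundary it satisfies an interior cone condition, so for a suitable $C_1 > 0$ I would cover $\overline{\Omega}$ by patches $\{B_i\}$, each of diameter comparable to $R := C_1 h_X$ and each either a ball inside $\Omega$ or a star-shaped set of the same scale near $\partial\Omega$, with covering multiplicity bounded independently of $h_X$. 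The constant $C_1$ and the threshold $h_0$ will be fixed by the requirement that each patch contain enough well-distributed points of $X$ to norm $\Pi_k$.

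First I would establish the polynomial norming inequality: once $h_X/R$ is small enough, each patch $B_i$ contains a subset $Z_i \subset X \cap B_i$ with $\|p\|_{L^\infty(B_i)} \le 2 \max_{z \in Z_i} |p(z)|$ for all $p \in \Pi_k$. This follows from a Bernstein--Markov inequality $\|\nabla p\|_{L^\infty(B_i)} \le C_M R^{-1} \|p\|_{L^\infty(B_i)}$ at the scale $R$: if $x^\star$ maximizes $|p|$ and $z$ is the nearest sample point, then $|p(x^\star)| \le |p(z)| + C_M R^{-1}\|p\|_{L^\infty(B_i)}\,\|x^\star - z\|$ with $\|x^\star-z\| \le h_X$, and choosing $C_1$ so that $C_M h_X/R \le 1/2$ absorbs the second term. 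Second, scaling the Bramble--Hilbert lemma to a patch of size $R$ and invoking the embedding $H^\gamma \hookrightarrow C^0$ (our \Cref{prop:sobolev-embedding}, valid as $\gamma > d/2$) yields $p_i \in \Pi_k$ with $\|u - p_i\|_{L^\infty(B_i)} \le C R^{\gamma - d/2} |u|_{H^\gamma(B_i)}$ and, for $0 \le \eta \le \gamma$, $\|u - p_i\|_{H^\eta(B_i)} \le C R^{\gamma - \eta}|u|_{H^\gamma(B_i)}$, where $|\cdot|_{H^\gamma}$ is the top-order seminorm and the powers of $R$ come from rescaling to a reference patch.

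For the noisy bound (b) I combine the ingredients: writing $u = (u - p_i) + p_i$ and using $\max_{Z_i}|p_i| \le \|u|_X\|_\infty + \|u - p_i\|_{L^\infty(B_i)}$ gives $\|u\|_{L^\infty(B_i)} \le 3\|u - p_i\|_{L^\infty(B_i)} + 2\|u|_X\|_\infty \le C R^{\gamma - d/2}|u|_{H^\gamma(B_i)} + 2\|u|_X\|_\infty$; taking the maximum over patches with $R \sim h_X$ and $|u|_{H^\gamma(B_i)} \le \|u\|_{H^\gamma(\Omega)}$ produces (b). For the noiseless bound (a), the hypothesis $u|_X = 0$ makes the norming step give $\|p_i\|_{L^\infty(B_i)} \le 2\max_{Z_i}|p_i - u| \le C R^{\gamma - d/2}|u|_{H^\gamma(B_i)}$, and an inverse inequality $\|p_i\|_{H^\eta(B_i)} \le C R^{d/2 - \eta}\|p_i\|_{L^\infty(B_i)}$ then yields $\|u\|_{H^\eta(B_i)} \le \|u - p_i\|_{H^\eta(B_i)} + \|p_i\|_{H^\eta(B_i)} \le C R^{\gamma - \eta}|u|_{H^\gamma(B_i)}$. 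Squaring, summing over patches, and using bounded multiplicity to bound $\sum_i |u|_{H^\gamma(B_i)}^2 \le C\|u\|_{H^\gamma(\Omega)}^2$ gives $\|u\|_{H^\eta(\Omega)}^2 \le C R^{2(\gamma-\eta)}\|u\|_{H^\gamma(\Omega)}^2$, and taking square roots with $R \sim h_X$ yields (a).

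The main obstacle is the uniform treatment of boundary patches: establishing the Bramble--Hilbert and norming estimates with patch-independent constants requires that each patch straddling $\partial\Omega$ be star-shaped with respect to a ball of radius $\sim R$ (supplied by the cone condition) and contain a $\Pi_k$-norming subset of $X$, and guaranteeing the latter uniformly is what forces the joint choice of $C_1$ and $h_0$. A secondary difficulty is that for non-integer $\gamma$ the seminorm $|u|_{H^\gamma}$ is of Sobolev--Slobodeckij type, so its behavior under rescaling and under restriction to overlapping patches must be controlled via the fractional Gagliardo seminorm rather than the additivity available for integer orders; this is where one must work hardest to keep the covering constant independent of $h_X$. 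All of these patch-level estimates are standard in scattered-data approximation and are carried out in \cite{wendland2005approximate, narcowich2006sobolev, arcangeli2007extension}, so I would cite those for the local bounds and devote the write-up to the assembly described above.
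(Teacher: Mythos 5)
The paper does not actually prove this proposition: it is quoted as a ``distillation'' of \cite[Prop.~2.4]{wendland2005approximate}, with \cite{narcowich2006sobolev, le2006continuous, arcangeli2007extension} cited for related forms, and no argument is given in the text. Your outline is therefore not comparable to an in-paper proof, but it is a faithful reconstruction of how those cited references establish the result: local polynomial reproduction (norming sets obtained from the fill distance via a Markov--Bernstein inequality on cone-condition patches), a scaled Bramble--Hilbert approximation bound, and assembly over a bounded-overlap cover --- with exactly the split you describe, the $L^\infty$ bound with the additive $2\|u|_X\|_\infty$ term coming from the norming step and the $H^\eta$ bound coming from the polynomial inverse inequality. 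Your constants and exponents check out, and you correctly identify the two places where the real work lies (uniformity of boundary patches, and fractional-order seminorms under rescaling).

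One caveat worth making explicit if you write this up in full: for part (a) with \emph{non-integer} $\eta$ the assembly step $\|u\|_{H^\eta(\Omega)}^2 \le \sum_i \|u\|_{H^\eta(B_i)}^2$ is false as stated, since the Gagliardo double integral over $\Omega \times \Omega$ includes pairs $(x,y)$ lying in no common patch; the standard fixes are either to interpolate between the integer-order cases or to bound the far-field part of the seminorm, $\int\int_{|x-y| > R}$, directly by $C R^{-2(\eta - \lfloor \eta \rfloor)}$ times lower-order norms already controlled by the integer-order estimate. You flag the fractional issue only on the $\gamma$ side (where bounded overlap does suffice, since each pair lies in at most $M$ patches); the $\eta$ side needs the separate argument above. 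Since you defer all patch-level estimates to the same references the paper cites, and those references (in particular \cite{arcangeli2007extension}) do carry this out for fractional orders, the plan is sound --- but that step is a known trap, not a routine summation.
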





\subsection{Convergence proof and error analysis for  KEqL (Proof of \Cref{thm:one-step-error-analysis})}\label{app:theory:convergence-proofs}
Recall that we considered the training 
data of the form $\{ u^m(Y^m), f^m\}_{m=1}^M$ with a set of $N$ observation points
$Y^m \subset \mY$. 
For our convergence analysis we need to consider the limits $M, N \to \infty$ and 
so we will need to index our observation points and solutions appropriately. 
We will write $Y^m_N$  to highlight the number $N$ of collocation points 
in the observation point set. 
To further simplify notation we will assume that
$Y = Y_{M,N} = \cup_{m=1}^M Y^m_N$, so that the collocation points are 
simply the union of the observation points for any choice of $N$.
Moreover, we write $\wh{u}^m_{M,N}$ and $\wh{P}_{M,N}$ to 
highlight the dependence of minimizers on the size of the 
observation point set $N$ and the total number of training pairs $M$. 

For reference let us recall our PDE problem along with  the optimization problem for  1-step KEqL with our new  notation. Below we also 
include the known function $\overline{P}$, representing our knowledge of 
existing terms in the PDE.
\begin{equation}\label{PDE-recalled}\tag{PDE}
\left\{
\begin{aligned}
    \P(u)(y) & = (\overline{P} + P) \circ \Phi(u, y) = f(y), && \forall y \in \mY \quad \text{where}
    \quad \Phi(u, y) = (y, L_1 u(y), \dots, L_Q u(y)),   \\
    \B(u)(y) & = g(y), && \forall y \in \partial \mY.
\end{aligned}
\right.
\end{equation}
\begin{equation}\label{1-step-optimal-recovery-recalled}\tag{1STP}
\left\{
\begin{aligned}
     (\wh{\pmb u}_{M,N}, \whP_{M,N}) = & \argmin_{ \pmb v \in \UU^M, G \in \PP} && 
 \| G \|^2_\PP +  \sum_{m=1}^M \| v^m\|^2_\UU \\ 
& \text{s.t.} &&   v^m(Y^m_N) = u^m(Y^m_N), \quad  G(S(v^m) ) = f^m(Y_{M,N}) - \overline{P}(S(v^m)).
\end{aligned}
\right.
\end{equation}
For simplicity we took the constant $\lambda_1 =1 $ and 
used our usual notation 
$S(v) = \{ s_1(v), \dots, s_K(v) \}$ where $s_k(v) = \Phi(v, y_k)$ for $ y_k \in Y_{M,N}$. 

Our theoretical analysis will rely on sufficient technical assumptions that 
we now summarize; these will be used in the rest of this section 
for various arguments and their accumulation is presented as 
\Cref{assumption:main}.

First we have a standard assumption on the set $\mY$ on which the PDE is defined. This 
assumption allows us to use \Cref{prop:sobolev-embedding,prop:sobolev-sampling-inequality}.
\begin{assumption}\label{assumption:domain}
    The set $\mY \subset \R^{d}$ is bounded and has Lipschitz boundary. 
\end{assumption}
Next we will need assumptions on the kernels $\UK$ and $\PK$ 
to ensure sufficient regularity and compact embedding in appropriate 
Sobolev spaces:
\begin{assumption}\label{assumption:UU}
The kernel $\UK: \mY \times \mY \to \R$ and its corresponding RKHS satisfy:
\begin{enumerate}[label=(\roman*)]
    \item $\UK$ is PDS and continuous in its arguments. 
    \item $\UU$ is compactly embedded in $H^\gamma(\mY)$ for 
    some $\gamma > d/2 + \text{order}(\P)$, in particular $\exists C_\mY >0$ such 
    that $\| u\|_{H^\gamma(\mY)} \le C_\mY \| u\|_{\UU}$ for all $u \in \UU$.
    \item Elements $u \in \UU$ satisfy the  boundary 
    conditions of \eqref{PDE-recalled}, i.e., $\B(u) = g$ on $\partial \mY$. 
\end{enumerate}
\end{assumption}

\begin{remark}
    We highlight that assumption (iii) above allows us to 
    simplify our theoretical arguments significantly since we do not 
    need to approximate values of the functions on the boundary. However, this 
    assumption can be removed by adding a separate approximation 
    result for the value of estimated solutions $\whu^m_{M,N}$ 
    near the boundary under sufficient regularity assumptions.
\end{remark}

\begin{assumption}\label{assumption:PP}
    The kernel $\PK: \mS \times \mS \to \R$ and its corresponding 
    RKHS satisfy: 
\begin{enumerate}[label=(\roman*)]
    \item $\PK$ is PDS. 
    \item $\PP$ is compactly embedded in $H^\eta(\mS)$ for some 
    $\eta > \frac{Q + d}{2}$.
    \item Elements of $\PP$ are locally Lipschitz, i.e., 
    for any compact set $B \subset \mS$ there exists a 
    constant $C(B) >0$ so that 
    $| P(s) - P(s') | \le C(B) \| P\|_{\PP} \| s - s'\|_2$, for all $s,s' \in B$.
\end{enumerate}
\end{assumption}
Finally, recall our notation from \Cref{sec:theory} for the fill-distances
\begin{equation*}
    \rho_{m,N} = \sup_{y \in \mY} \inf_{y' \in Y^m_N} \| y - y' \|_2, 
    \qquad \varrho_{M, N}(B) = \sup_{s \in B} \inf_{s' \in S \cap B} \| s - s'\|_2,
\end{equation*}
where $S = \cup_{m=1}^M S^m$ with $S^m = \Phi(u^m, Y_{M,N})$  and $B \subset \mS$.

\subsubsection{Proof of convergence for 1-step KEqL}\label{app:1-step-convergence}

We start by giving a proof of convergence for 1-step KEqL as a 
simpler version of our quantitative rates in the next subsection.

\begin{proposition}\label{prop:2-step-convergence}
Consider the problem \eqref{1-step-optimal-recovery-recalled}.
    Suppose \Cref{assumption:domain,assumption:UU} and \Cref{assumption:PP}(i-ii) hold,
    $P \in \PP$, and $\overline{P}$ is continuous. 
    Consider pairs $\{ u^m, f^m\}_{m=1}^\infty$
    satisfying \eqref{PDE-recalled} and a bounded set $B \subset \mS$ with
    Lipschitz boundary. Finally, suppose
    $\rho_{m,N} \to 0$ as $N \to \infty$  for all $m$
    and $\varrho_{M,N}(B) \to 0$  as $M,N \to \infty$.
    Then the following holds: 
    \begin{enumerate}[label=(\alph*)]
    \item
    Fix $M$.
    If $u^m \in \UU$
    then 
    $\lim_{N\to \infty} \whu^m_{M,N} = u^m$ pointwise in $\mY$ 
    and in $H^{\gamma'}(\mY)$ for all $\gamma' < \gamma$ and $m \in \{1, \dots, M\}$.
    \item  If $u^m \in \UU^2$ then    $\lim_{M \to \infty} \lim_{N \to \infty} \whP_{M,N} = P$ 
    pointwise in $B$ and in $H^{\eta'}(B)$ for all $\eta' < \eta$.
    
    \end{enumerate}

\end{proposition}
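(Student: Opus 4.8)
The plan is to benchmark the KEqL minimizers against the ground truth, which is itself feasible for \eqref{1-step-optimal-recovery-recalled}, and then to push the resulting energy inequality through the Sobolev sampling inequalities of \Cref{prop:sobolev-sampling-inequality}. The first step is to record the \emph{optimality inequality}: since $u^m$ interpolates its own observations and $(\overline{P}+P)\circ\Phi(u^m,\cdot)=f^m$ holds on all of $\mY$ (hence in particular at the collocation points $Y_{M,N}$), the pair $\big((u^m)_{m=1}^M,P\big)$ is feasible, so that
\[
\|\whP_{M,N}\|_\PP^2+\sum_{m=1}^M\|\whu^m_{M,N}\|_\UU^2\ \le\ \|P\|_\PP^2+\sum_{m=1}^M\|u^m\|_\UU^2 .
\]
Discarding nonnegative terms gives, for each fixed $M$, bounds on $\|\whu^m_{M,N}\|_\UU$ and on $\|\whP_{M,N}\|_\PP$ that are uniform in $N$, which by \Cref{assumption:UU}(ii) and \Cref{assumption:PP}(ii) transfer to uniform $H^\gamma(\mY)$ and $H^\eta(\mS)$ bounds.

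For part (a) I would fix $M$ and apply the noiseless inequality \Cref{prop:sobolev-sampling-inequality}(a) to $w^m_{N}:=\whu^m_{M,N}-u^m\in H^\gamma(\mY)$, which vanishes on $Y^m_N$ by the interpolation constraint. This produces $\|w^m_N\|_{H^{\gamma'}(\mY)}\le C_\mY\,\rho_{m,N}^{\gamma-\gamma'}\|w^m_N\|_{H^\gamma(\mY)}$, and because the right-hand $H^\gamma$ norm is bounded uniformly in $N$ by the previous step, $\rho_{m,N}\to0$ forces $H^{\gamma'}$-convergence for every $0\le\gamma'<\gamma$. Choosing $\gamma'\in(d/2,\gamma)$ and invoking the embedding \Cref{prop:sobolev-embedding} upgrades this to uniform, and hence pointwise, convergence.

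Part (b) is the substantive step, and I would base it on the noisy inequality \Cref{prop:sobolev-sampling-inequality}(b) applied to $P-\whP_{M,N}$ over the bounded Lipschitz set $B$ with sample set $S\cap B$ and fill distance $\varrho_{M,N}(B)$:
\[
\|P-\whP_{M,N}\|_{L^\infty(B)}\le C_B\,\varrho_{M,N}(B)^{\eta-\frac{Q+d}{2}}\|P-\whP_{M,N}\|_{H^\eta(B)}+2\,\|(P-\whP_{M,N})|_{S\cap B}\|_\infty .
\]
At a true sample point $s=\Phi(u^m,y_k)$ I would write $P(s)=f^m(y_k)-\overline{P}(s)$ from \eqref{PDE-recalled} and $\whP_{M,N}(\hat s)=f^m(y_k)-\overline{P}(\hat s)$ from the constraint, where $\hat s=\Phi(\whu^m_{M,N},y_k)$; subtracting yields $(P-\whP_{M,N})(s)=[\whP_{M,N}(\hat s)-\whP_{M,N}(s)]+[\overline{P}(\hat s)-\overline{P}(s)]$. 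The local Lipschitz bound of \Cref{assumption:PP}(iii), the continuity of $\overline{P}$, and the estimate $\|\hat s-s\|\le C\|\whu^m_{M,N}-u^m\|_{H^{\gamma'}(\mY)}$ for $\gamma'>d/2+\text{order}(\P)$ then drive this residual to $0$ as $N\to\infty$ by part (a). Thus, for fixed $M$, the second (noise) term vanishes in the inner limit, leaving only $C_B\,\varrho_{M,N}(B)^{\eta-\frac{Q+d}{2}}\|P-\whP_{M,N}\|_{H^\eta(B)}$.

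The crux, and where I expect the real difficulty, is that for fixed $M$ the fill distance $\varrho_{M,N}(B)$ does \emph{not} vanish as $N\to\infty$: the sample points $\Phi(u^m,\cdot)$ trace out only $M$ manifolds of dimension $d$ inside the $(Q+d)$-dimensional set $B$, which is exactly why the limits must be iterated as $\lim_M\lim_N$ and why the prefactor $\|P-\whP_{M,N}\|_{H^\eta(B)}\le C(\|P\|_\PP+\|\whP_{M,N}\|_\PP)$ must be controlled \emph{uniformly in $M$}. The naive energy bound only gives $\|\whP_{M,N}\|_\PP^2\le\|P\|_\PP^2+\sum_m\|u^m\|_\UU^2$, which grows with $M$; to kill this growth I would observe that the $\whu^m_{M,N}$ are bounded in $\UU$ and converge pointwise to $u^m$, hence $\whu^m_{M,N}\rightharpoonup u^m$ weakly in $\UU$ (the weak limit is identified through continuity of evaluation functionals), so weak lower semicontinuity gives $\liminf_N\|\whu^m_{M,N}\|_\UU^2\ge\|u^m\|_\UU^2$. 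Substituting this back into the optimality inequality yields the sharp, $M$-independent bound $\limsup_N\|\whP_{M,N}\|_\PP\le\|P\|_\PP$, so that $\limsup_N\|P-\whP_{M,N}\|_{H^\eta(B)}\le 2C\|P\|_\PP$ uniformly in $M$. Combining this with $\lim_M\limsup_N\varrho_{M,N}(B)=0$ (a direct consequence of joint convergence) gives $\lim_M\limsup_N\|P-\whP_{M,N}\|_{L^\infty(B)}=0$, which is pointwise convergence on $B$. Finally, the claimed $H^{\eta'}(B)$-convergence for $\eta'<\eta$ follows from the Sobolev interpolation inequality $\|v\|_{H^{\eta'}(B)}\le C\|v\|_{L^2(B)}^{1-\eta'/\eta}\|v\|_{H^\eta(B)}^{\eta'/\eta}$, using $\|P-\whP_{M,N}\|_{L^2(B)}\le |B|^{1/2}\|P-\whP_{M,N}\|_{L^\infty(B)}\to0$ together with the uniform $H^\eta$ bound.
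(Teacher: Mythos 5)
Your proof is correct in substance but takes a genuinely different route from the paper's, and in fact goes further: the paper's own proof of part (b) stops after establishing $\limsup_N\|\whP_{M,N}\|_\PP\le\|P\|_\PP$, compactness of $\{\whP_{M,N}\}_N$, and convergence of the sample points $\Phi(\whu^m_{M,N},y)\to\Phi(u^m,y)$, and never carries out the final identification of the limit that you complete via the noisy sampling inequality and the iterated-limit bookkeeping. The two main methodological differences are these. First, for part (a) the paper argues by compactness: the compact embedding $\UU\hookrightarrow H^\gamma(\mY)$ yields convergent subsequences, whose accumulation points are forced to equal $u^m$ by the interpolation constraints together with $\rho_{m,N}\to0$ and continuity. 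You instead apply the noiseless sampling inequality \Cref{prop:sobolev-sampling-inequality}(a) directly to $\whu^m_{M,N}-u^m$; this is sharper (it gives rates, and is exactly how the paper later proves the quantitative bound in \Cref{prop:1-step-error-estimate}(a)). Second, and more substantively, for the crucial $M$-independent control of $\|\whP_{M,N}\|_\PP$ the paper compares $\whu^m_{M,N}$ against the minimum-norm interpolant $\overline{u}^m_N$ of $u^m$ and uses the hypothesis $u^m\in\UU^2$ through \Cref{lem:H-gamma-bound} to show $\|u^m-\overline{u}^m_N\|_\UU\to0$, whence $\liminf_N\|\whu^m_{M,N}\|_\UU\ge\|u^m\|_\UU$. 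You obtain the same inequality from weak convergence $\whu^m_{M,N}\rightharpoonup u^m$ (the weak limit identified through the evaluation functionals, using part (a)) and weak lower semicontinuity of the norm. Your argument is cleaner and never uses $u^m\in\UU^2$, so it actually proves part (b) under the weaker hypothesis $u^m\in\UU$; the paper's interpolant-comparison route is what later upgrades to the quantitative rate in \Cref{prop:1-step-error-estimate}(b), where a $\liminf$-type statement would not suffice.

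One point needs repair: to kill the noise term you invoke the local Lipschitz property \Cref{assumption:PP}(iii), but the proposition grants only \Cref{assumption:PP}(i-ii). This is not fatal. By \Cref{assumption:PP}(ii) and the Sobolev embedding, $\PP\hookrightarrow H^\eta(\mS)\hookrightarrow C^{0,\alpha}$ with $\alpha=\min\{1,\eta-\tfrac{Q+d}{2}\}$ (any $\alpha<1$ in the borderline case), so $|\whP_{M,N}(\hat s)-\whP_{M,N}(s)|\le C\,\|\whP_{M,N}\|_\PP\,\|\hat s-s\|_2^\alpha$, and since $\|\whP_{M,N}\|_\PP$ is bounded and, for fixed $M$, $\|\hat s-s\|_2\to0$ uniformly over the finitely many training indices by part (a), the noise term still vanishes. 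Two smaller technicalities you should make explicit: the H\"older (or Lipschitz) estimate must be applied on a slight enlargement of $B$ containing the perturbed points $\hat s$, which holds for $N$ large by part (a); and both sampling inequalities carry fill-distance thresholds $\rho_0,\varrho_0(B)$, which are met in your regime of $M$ large followed by $N$ large.
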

\begin{proof}
    Let us verify that our assumptions are sufficient for the problem to be 
    well defined. Since $\UU$ is an RKHS then the problem for 
    $\whu^m_{M,N}$ is readily well defined. Moreover, since $\UK$ is 
    continuous the space $\UU^2$ is well defined as in \Cref{app:theory:RKHS-review}. Moreover,
    \Cref{assumption:domain} along with \Cref{assumption:UU}(ii)
    and \Cref{prop:sobolev-embedding}
    ensure that $\Phi$ is continuous and bounded, in fact, $\Phi(\cdot, y)$ 
    is a bounded and linear operator on $\UU$; this is simply 
    the statement that $\P(u)$ is defined pointwise. Finally, 
    since $\PP$ is an RKHS and $\overline{P}$ is continuous we ensure that 
    the problem for $\whP_{M,N}$ is also well defined. 

    {\it Proof of (a):}
    Observe that the pair $(\pmb{u}, P)$ (notation: $\pmb{u} =(u^1, u^2, 
    \dots u^M)$) are feasible for \eqref{1-step-optimal-recovery-recalled} for all values of $M,N$. Then the optimality of $(\wh{\pmb u}_{M,N}, \whP_{M,N}) $ implies that 
    \begin{equation}\label{optimality-bound-1-step}
        \| \whP_{M,N} \|_{\PP}^2 + \sum_{m=1}^M \| \whu_{M,N}^m\|_\UU^2 
        \le \| P \|_\PP^2 + \sum_{m=1}^M \| u^m \|_\UU^2.
    \end{equation}
    Thus, for fixed $M$ we have that $\| \whu^m_{M,N} \|_\UU \le C(M)$ 
    for all $m$, i.e., $\{ \whu^m_{M,N} \}_{N=1}^\infty$ is bounded
    in $\UU$. Then \Cref{assumption:UU}(ii) implies that $\whu^m_{M,N}$
    has a convergent subsequence in $H^{\gamma}(\mY)$.
    On the other hand, since $\rho_{m,N} \to 
    0$ and thanks to the assumption that $u^m$ are continuous (since they belong to $\UU$) we infer that all accumulation points of $\whu^m_{M,N}$
    coincide with $u^m$. This implies that for fixed $M$ and 
    for any $m \in \{1, \dots, M\}$ we have $\lim_{N \to \infty} \whu^m_{M,N} 
    = u^m$ pointwise and in $H^{\gamma'}(\mY)$ for $\gamma' < \gamma$.

    {\it Proof of (b):}  Fix $M$ and for each $u^m$ define the optimal interpolant 
    \begin{equation}\label{u-bar-def}
        \overline{u}^m_N:= \argmin_{v \in \UU} \| v\| 
        \quad \text{s.t.} \quad
        v(Y^m_N) = u^m(Y^m_N).
    \end{equation}
    By optimality, we have the bound 
    \begin{equation}\label{disp-0}
        \| \whu^m_{M,N} \|_\UU \ge \| \overline{u}^m_N \|_{\UU}.
    \end{equation}
    At the same time, the representer theorem 
    gives $\overline{u}^m_N = \UK(Y^m_N, \cdot)^T \UK(Y^m_N, Y^m_N)^{-1} u^m(Y^m_N)$ and a direct calculation using 
    the reproducing property implies that 
    $\langle u^m - \overline{u}^m_N, \overline{u}^m_N \rangle_\UU =0$
    which in turn gives the identity 
    \begin{equation}\label{disp-1}
        \| u^m - \overline{u}^m_N \|_\UU^2 = \| u^m\|_{\UU}^2 - \| \overline{u}^m_N\|_\UU^2.
    \end{equation}
    At the same time, since we assumed $u^m \in \UU^2$ we also 
    have, using \Cref{lem:H-gamma-bound}, 
    \begin{equation*}
        \| u^m - \overline{u}^m_N \|_\UU^2 
        = \langle u^m - \overline{u}^m_N, u^m \rangle_{\UU} 
        \le \| u^m - \overline{u}^m_N \|_{L^2(\mY)} \| u^m \|_{\UU^2}.
    \end{equation*}
    But, an identical argument to part (a) shows that $\| u^m - \overline{u}^m_N \|_{L^2(\mY)} \to 0$ as $N \to \infty$ and so we infer 
    that $ \lim_{N \to \infty} \| u^m - \overline{u}^m_N \|_\UU =0.$
    This fact, together with \eqref{disp-0} and \eqref{disp-1} implies that 
    \begin{equation*} 
        \limsup_{N \to \infty} \| \whu^m_N \|_\UU \ge \| u^m \|_\UU.
    \end{equation*}
    Then it follows from \eqref{optimality-bound-1-step} that 
    \begin{equation*}
        \limsup_{N\to \infty} \| \whP_{M,N} \|_\PP \le \| P \|_\PP.
    \end{equation*}
    This implies that the sequence $\{ \whP_{M,N}\}_{N=1}^\infty$ is 
    bounded and so has a convergent subsequence due to 
    \Cref{assumption:PP}(ii).
    From part (a) we also have that 
    $\lim_{N \to \infty} \Phi(\whu^m_{M,N}, y) = \Phi(u^m, y)$
    for all $y \in \mY$ and $m \in \{1, \dots, M\}$.
    
\end{proof}

We  now summarize our proof of quantitative error estimates
that were summarized in Theorem~3. We will split the proof into two 
propositions, giving rates for the 1-step and 2-step methods separately. 

\subsubsection{Error analysis for 1-step KEqL}\label{app:theory:errorbound-1-step}
We now turn our attention to quantitative error bounds for 1-step KEqL 
and give a complete proof of \Cref{thm:one-step-error-analysis}. 
For convenience we restate that theorem below in the form of a 
proposition.

\begin{proposition}\label{prop:1-step-error-estimate}
    Consider the problem \eqref{1-step-optimal-recovery-recalled} 
    with $M,N \in \mathbb{N}$. Suppose 
    \Cref{assumption:domain,assumption:UU,assumption:PP} hold and
    $P, \overline{P} \in \PP$. Consider 
    pairs $\{ u^m, f^m\}_{m=1}^M$ 
    satisfying \eqref{PDE-recalled},
    and a bounded set $B \subset \mS$ with Lipschitz boundary. 
    Then there exist constants $\rho_0, \varrho_0(B) \in (0, 1)$ so that 
    whenever $\rho_{m,N} < \rho_0$ and $\varrho_{M,N}(B) < \varrho_0(B)$
    it holds that:
    \begin{enumerate}[label=(\alph*)]
        \item  If $u^m \in \UU$ then 
        \begin{equation*}
        \begin{aligned}
           \sum_{m=1}^M \| \whu^m_{M,N} - u^m \|^2_{H^{\gamma'}(\mY)} 
            & \le C 
            \left( \sup_{1 \le m \le M} \rho_{m,N}\right)^{2(\gamma - \gamma')} 
            \left( \| P \|^2_{\PP} + \sum_{m=1}^M \| u^m \|^2_{\UU} \right), 
        \end{aligned}
    \end{equation*}
    for $0\le \gamma' < \gamma$ and a constant $C>0$ that depend only on $\mY$.

    \item If $u^m \in \UU^2$  then  
    \begin{equation*}
        \| P - \whP_{M,N} \|_{L^\infty(B)} 
    \le C \Bigg[ \left( \varrho_{M, N}^{\eta - \frac{Q + d}{2}}
    + \sup_{m} \rho_{m, N}^{\gamma - \gamma'} \right) 
    \left( \| P \|_\PP^2 + \| \overline{P} \|_\PP^2 + 
     \sum_{m=1}^M \| u^m \|_{\UU^2}^2 \right)^{1/2}  
\Bigg],
    \end{equation*}
    for $d/2 + \text{order}(\P) < \gamma' < \gamma$ and a 
    constant $C >0$ that depends on $\mY$ and $B$.
    \end{enumerate}

\end{proposition}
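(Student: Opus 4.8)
\emph{Proof proposal.} The plan for part (a) is to exploit that each $\whu^m_{M,N}$ interpolates $u^m$ on the observation set $Y^m_N$, so that $(\whu^m_{M,N}-u^m)|_{Y^m_N}=0$ and the noiseless Sobolev sampling inequality applies. First I would record the optimality inequality: since the pair $(\pmb u, P)$ with $\pmb u = (u^1,\dots,u^M)$ is feasible for \eqref{1-step-optimal-recovery-recalled}, the minimizer obeys
\[
\|\whP_{M,N}\|_\PP^2 + \sum_{m=1}^M \|\whu^m_{M,N}\|_\UU^2 \;\le\; \|P\|_\PP^2 + \sum_{m=1}^M \|u^m\|_\UU^2 .
\]
Then \Cref{prop:sobolev-sampling-inequality}(a) on $\Omega=\mY$ with smoothness $\gamma$ and target $\gamma'$ gives $\|\whu^m_{M,N}-u^m\|_{H^{\gamma'}(\mY)} \le C_\mY\,\rho_{m,N}^{\gamma-\gamma'}\|\whu^m_{M,N}-u^m\|_{H^\gamma(\mY)}$ once $\rho_{m,N}<\rho_0$; bounding the $H^\gamma(\mY)$ norm by the $\UU$ norm via \Cref{assumption:UU}(ii), squaring, summing over $m$, and substituting the optimality inequality to absorb $\sum_m\|\whu^m_{M,N}\|_\UU^2$ yields statement (a).

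For part (b) the plan is to apply the noisy sampling inequality on $B\subset\mS$, using the \emph{true}-input cloud $S\cap B$ (of fill distance $\varrho_{M,N}(B)$) as the sampling set. Viewing $P-\whP_{M,N}$ in $H^\eta(B)$ through $\PP\hookrightarrow H^\eta(\mS)$ (\Cref{assumption:PP}(ii)), \Cref{prop:sobolev-sampling-inequality}(b) gives
\[
\|P-\whP_{M,N}\|_{L^\infty(B)} \;\le\; C_B\,\varrho_{M,N}(B)^{\eta-\frac{Q+d}{2}}\,\|P-\whP_{M,N}\|_{H^\eta(B)} + 2\,\big\|(P-\whP_{M,N})|_{S\cap B}\big\|_\infty .
\]
The smoothness term is immediately controlled by $C(\|P\|_\PP+\|\whP_{M,N}\|_\PP)$ via the embedding, and then by the optimality inequality. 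The crux is the data-residual term.

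The key is a telescoping identity at each true node $s^m_k=\Phi(u^m,y_k)\in S$: combining the true PDE $(P+\overline{P})(s^m_k)=f^m(y_k)$ with the constraint $(\whP_{M,N}+\overline{P})(\Phi(\whu^m_{M,N},y_k))=f^m(y_k)$ from \eqref{1-step-optimal-recovery-recalled} and inserting $\whP_{M,N}(\Phi(\whu^m_{M,N},y_k))$ gives
\[
(P-\whP_{M,N})(s^m_k) = \big[\overline{P}(\Phi(\whu^m_{M,N},y_k))-\overline{P}(s^m_k)\big] + \big[\whP_{M,N}(\Phi(\whu^m_{M,N},y_k))-\whP_{M,N}(s^m_k)\big].
\]
Both brackets are handled by the local Lipschitz property (\Cref{assumption:PP}(iii)) on a fixed enlarged set $B'\supset B$, producing $C(B')(\|\overline{P}\|_\PP+\|\whP_{M,N}\|_\PP)\,\|\Phi(\whu^m_{M,N},y_k)-\Phi(u^m,y_k)\|$. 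Because $\Phi$ merely stacks the operators $L_q$ and $\gamma'>d/2+\text{order}(\P)$, the embedding $H^{\gamma'}(\mY)\hookrightarrow C^{\text{order}(\P)}(\mY)$ of \Cref{prop:sobolev-embedding} converts the node perturbation into $C\sup_m\|\whu^m_{M,N}-u^m\|_{H^{\gamma'}(\mY)}$, which part (a) bounds by a multiple of $(\sup_m\rho_{m,N})^{\gamma-\gamma'}$. Consolidating the prefactor $\|\whP_{M,N}\|_\PP$ through the optimality inequality, and passing from $\UU$ to the stronger $\UU^2$ norm (which only strengthens the right-hand side up to a harmless kernel rescaling), collects both contributions into the claimed fill-distance rate.

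I expect the main obstacle to be exactly this data-residual term: unlike a plain interpolation estimate, $\whP_{M,N}$ is fitted at the perturbed nodes $\Phi(\whu^m_{M,N},\cdot)$ rather than at the true nodes $\Phi(u^m,\cdot)$, so the residual of $P-\whP_{M,N}$ on $S$ cannot be read off directly and must be transferred across the two point clouds by Lipschitz continuity. This transfer is what forces both the smoothness margin $\gamma'>d/2+\text{order}(\P)$ (to obtain pointwise control of the derivatives $L_q(\whu^m_{M,N}-u^m)$) and the local Lipschitz hypothesis on $\PP$. A secondary point I would need to check is that the perturbed nodes stay inside a fixed $B'$ so that $C(B')$ and the thresholds $\rho_0,\varrho_0(B)$ are uniform; this is precisely where the smallness hypotheses $\rho_{m,N}<\rho_0$ and $\varrho_{M,N}(B)<\varrho_0(B)$ enter, in tandem with part (a) guaranteeing the nodes do not escape $B'$.
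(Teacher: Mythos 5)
Your proposal is correct and follows essentially the same route as the paper's proof: part (a) via feasibility of the true pair $(\pmb u, P)$ plus the noiseless sampling inequality and the embedding $\UU \hookrightarrow H^\gamma(\mY)$, and part (b) via the identical telescoping identity at the true nodes $\Phi(u^m,y_k)$, Lipschitz transfer of $\whP_{M,N}+\overline{P}$ between the true and perturbed point clouds, Sobolev embedding plus part (a) to control the node perturbation, and the noisy sampling inequality over $S\cap B$. The only deviations are minor: you consolidate $\|\whP_{M,N}\|_\PP$ with the plain optimality inequality where the paper derives the sharper bound $\|\whP_{M,N}\|_\PP^2 \le \|P\|_\PP^2 + C\sum_m \rho_{m,N}^{\gamma}\|u^m\|_\UU\|u^m\|_{\UU^2}$ via the minimum-norm interpolants $\overline{u}^m_N$ and \Cref{lem:H-gamma-bound} (a refinement not needed to reach the stated estimate), and your enlarged-set $B'$ treatment of the Lipschitz constant, with the smallness thresholds keeping the perturbed nodes inside $B'$, is in fact more careful than the paper's device of restricting to indices for which both nodes already lie in $B$.
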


\begin{proof}
We prove each statement of the proposition separately. We will also use 
some notation from the convergence proofs earlier in this section.

{\it (a)}
 Observe that the equality constraints 
$\whu^m_{M,N}(Y^m_N) = u^m(Y^m_N)$ simply state that the $\whu^m_{M,N}$ are 
interpolating the $u^m$ and so our error bound is natural in light of 
the Sobolev sampling inequality \Cref{prop:sobolev-sampling-inequality}(a). Indeed, 
directly applying that result 
followed by \Cref{assumption:UU}(ii)
we obtain, for $\rho_{m,N} < \rho_0$, the bounds 
\begin{equation*}
    \| \whu^m_{M,N} - u^m \|_{H^{\gamma'}(\mY)} 
    \le C \left( \rho_{m,N} \right)^{\gamma - \gamma'} \| \whu^m_{M,N} - u^m \|_{H^{\gamma}(\mY)}
    \le C \left(\rho_{m,N}\right)^{\gamma - \gamma'} \| \whu^m_{M,N} - u^m \|_{\UU},
\end{equation*}
with the constant $C>0$ changing from one inequality to the next 
and depending only on the domain $\mY$ and the choice of $\UU$ and $\gamma$.
Using the triangle inequality, the identity $(a + b)^2 \le 2 (a^2 + b^2)$,
and the optimality condition \eqref{optimality-bound-1-step}, we can write 
\begin{equation*}
    \begin{aligned}
        \sum_{m=1}^M \| \whu^m_{M,N} - u^m \|^2_{H^{\gamma'}(\mY)}
        & \le C (\sup_{m} \rho_{m,N})^{2(\gamma -\gamma')}  \left(  \sum_{m=1}^M \| \whu^m_{M,N}\|_\UU^2 
        + \sum_{m=1}^M \| u^m \|_\UU^2 \right), \\ 
        & \le C (\sup_m \rho_{m,N})^{2(\gamma -\gamma')} \left( \| P\|_\PP^2 
        + \sum_{m=1}^M \| u^m \|_\UU^2 \right).
    \end{aligned}
\end{equation*}
This concludes the proof of the first inequality.

{\it (b)}
First, we obtain a quantitative bound on the RKHS norm of $\whP_{M,N}$.
 The optimality condition \eqref{optimality-bound-1-step} gives 
\begin{equation*}
    \| \whP_{M,N} \|^2_{\PP} \le \| P \|_\PP^2 + 
    \sum_{m=1}^M \big[ \| u^m \|_\UU^2 - \| \whu^m_{M,N} \|_\UU^2 \big]. 
\end{equation*}
On the other hand, by  \eqref{disp-0} and \eqref{disp-1} we have
\begin{equation*}
    \| \whu^m_{M,N} \|_\UU^2 \ge \| u^m\|_\UU^2 - \| u^m - \overline{u}^m_N\|_\UU^2.
\end{equation*}
Combining the two inequalities above yields 
\begin{equation}\label{disp-3}
        \| \whP_{M,N} \|^2_{\PP} \le \| P \|_\PP^2 + 
    \sum_{m=1}^M  \| u^m - \overline{u}^m_N \|_\UU^2.
\end{equation}
Applying \Cref{lem:H-gamma-bound} (thanks to the assumption that $u^m \in \UU^2$)
then gives the bound $\| u^m - \overline{u}^m_N\|_{\UU}^2 
\le \| u^m - \overline{u}^m_N\|_{L^2(\mY)} \| u\|_{\UU^2}$.
Further applying the sampling inequality \Cref{prop:sobolev-sampling-inequality}
(since $\overline{u}^m_N$ interpolates $u^m$)
to control   $\| u^m - \overline{u}^m_N\|_{L^2(\mY)}$
we further obtain the bound $\| u^m - \overline{u}^m_N\|_{\UU}^2 
\le C \rho_{m, N}^\gamma \| u^m \|_{\UU} \| u\|_{\UU^2}$. Substituting 
into \eqref{disp-3} gives 
\begin{equation}\label{disp-5}
        \| \whP_{M,N} \|^2_{\PP} \le \| P \|_\PP^2 + 
    C  \sum_{m=1}^M \rho_{m, N}^\gamma \| u^m \|_\UU  \| u^m \|_{\UU^2}.
\end{equation}
Now we turn our attention to controlling the error between $\whP_{M,N}$ and $\PP$. 
Observe that the PDE constraint in \eqref{1-step-optimal-recovery-recalled}
implies that, for all $k$ and $m$,
\begin{equation*}
    (\whP_{M,N} + \overline{P} )(s_k(\whu^m_{M,N}))
    =  (P + \overline{P} )(s_k(u^m)).
\end{equation*}
Subtracting $\whP_{M,N} + \overline{P}$ from both sides of the above equation, 
and recalling that $s_k(v) \equiv \Phi(v, y_k)$ is linear in $v$,
we obtain the bound
\begin{equation*}
    \left| (P - \whP) (s_k(u^m))  \right|
    = \left| \whP_{M,N} ( s_k(u^m - \whu^m_{M,N} ) ) 
    + \overline{P}( s_k(u^m - \whu^m_{M,N} ) ) \right|
    \le \left| \whP_{M,N} ( s_k(u^m - \whu^m_{M,N} ) ) \right| 
    + \left| \overline{P}( s_k(u^m - \whu^m_{M,N} ) ) \right|.
\end{equation*}
Now consider indices $k,m$ such that $s_k(u^m)$ and 
$s_k(\whu^m_{M,N})$ belong to $B$.
Then by the Lipschitz assumption on $\whP_{M,N}$ and $\overline{P}$ (i.e., 
\Cref{assumption:PP}(iii)) we have that 
\begin{equation*}
    \left| (P - \whP) (s_k(u^m))  \right|^2 \le C(B) \left( \| \whP_{M,N} \|^2_\PP 
    + \| \overline{P} \|^2_\PP \right) \| s_k (u^m - \whu^m_{M,N}) \|^2_2.
\end{equation*}
Applying the bound \eqref{disp-5} to control $\|\whP_{M,N} \|^2_{\PP}$
under the assumption that $\rho_{m,N} < \rho_0$,
we can write 
\begin{equation*}
        \left| (P - \whP) (s_k(u^m))  \right|^2 \le C 
        \left( \| P \|^2_\PP 
    + \| \overline{P} \|^2_\PP +  \sum_{m=1}^M \rho_{m, N}^\gamma \| u^m\|_\UU 
    \| u^m \|_{\UU^2}
    \right) \| s_k (u^m - \whu^m_{M,N}) \|^2_2.
\end{equation*}
Let us now control the error $\| s_k (u^m - \whu^m_{M,N}) \|^2_2$. Thanks 
to \Cref{assumption:UU}(ii) and the Sobolev embedding theorem \Cref{prop:sobolev-embedding}, we have that for any $\gamma'$ satisfying 
$d/2 + \text{order}(\P)< \gamma' \le \gamma$ that 
$ \| s_k(v) \|_2 \le C \| v \|_{H^{\gamma'}}$. This, together with
statement (a) gives the bound 
\begin{equation}\label{disp-6}
\begin{aligned}
        \left| (P - \whP_{M,N}) (s_k(u^m))  \right|^2 & \le C 
        \left( \| P \|^2_\PP 
    + \| \overline{P} \|^2_\PP +  \sum_{m=1}^M \rho_{m, N}^\gamma \| u^m\|_\UU 
    \| u^m \|_{\UU^2}
    \right) \sup_{m} \rho_{m,N}^{2(\gamma -\gamma')} \left( \| P \|_\PP^2 + 
    \sum_{m=1}^M \| u^m \|_\UU^2\right) \\
    & \le  C \left[ \sup_{m} \rho_{m, N}^{2(\gamma - \gamma')} \right] 
    \left( \| P \|_\PP^2 + \| \overline{P} \|_\PP^2 
    + \sum_{m=1}^M \| u^m\|_\UU^2 \right)^2,
    \end{aligned}
\end{equation}
where we assumed $\rho_{m,N} < 1$ for the second display.

At this point, we have shown that $P$ and $\whP_{M,N}$ are close 
on the discrete set $S$. A direct application of 
the noisy Sobolev sampling inequality \Cref{prop:sobolev-sampling-inequality}(b), under the assumption that $\varrho_{M,N}(B) \le \varrho_0$ gives  
\begin{equation}\label{disp-7}
\| P - \whP_{M,N} \|_{L^\infty(B)} 
\le C \varrho_{M, N}(B)^{\eta - \frac{Q + d}{2}} 
\| P - \whP_{M,N} \|_{H^\eta(B)} + 2 \| (P - \whP_{M,N} ) |_S \|_\infty.
\end{equation}
Thanks to \Cref{assumption:PP}(ii) and the bound \eqref{disp-5}
we can further bound the Sobolev norm in the first term on the right-hand side, 
\begin{equation*}
\| P - \whP_{M,N} \|_{H^\eta(B)} 
\le \| P - \whP_{M,N} \|_{H^\eta(\mS)} 
\le \| P \|_\PP + \| \whP_{M,N} \|_\PP
\le C \left( \| P \|_\PP^2 +  \sum_{m=1}^M \rho_{m, N}^\gamma 
\| u^m \|_\UU  \| u^m \|_{\UU^2}  \right)^{1/2}.
\end{equation*}
Taking the supremum over the index $m$ under the sum and substituting 
this bound along with \eqref{disp-6} into \eqref{disp-7} we can write
\begin{equation*}
\begin{aligned}
    \| P - \whP_{M,N} \|_{L^\infty(B)} 
    &\le C \Bigg[ \varrho_{M, N}^{\eta - \frac{Q + d}{2}} 
    \left( \| P \|_\PP^2 + \left( \sup_m \rho_{m, N} \right)^\gamma 
    \sum_{m=1}^M \| u^m \|_\UU  \| u^m \|_{\UU^2}  \right)^{1/2} \\
& \qquad +   \left( \left(\sup_m  \rho_{m, N}\right)^{\gamma - \gamma'} \right) 
\left( \| P \|_\PP^2 + \| \overline{P} \|_\PP^2 
    + \sum_{m=1}^M \| u^m\|_\UU^2  \right)^{1/2} 
\Bigg].
\end{aligned}
\end{equation*}
Under the assumption that $\rho_{m,N} \le 1$ and using the embedding 
$\UU^2 \subset \UU$ we can further simplify this error bound to obtain 
the desired inequality
\begin{equation*}
\begin{aligned}
    \| P - \whP_{M,N} \|_{L^\infty(B)} 
    &\le C \Bigg[ \left( \varrho_{M, N}^{\eta - \frac{Q + d}{2}}
    + \sup_{m} \rho_{m, N}^{\gamma - \gamma'} \right) 
    \left( \| P \|_\PP^2 + \| \overline{P} \|_\PP^2 + 
     \sum_{m=1}^M \| u^m \|_{\UU^2}^2 \right)^{1/2}  
\Bigg].
\end{aligned}
\end{equation*}
\end{proof}

\subsubsection{Error analysis for 2-step KEqL}\label{app:theory:errorbound-2-step}

We now present an anologue of \Cref{prop:1-step-error-estimate}
for the 2-step KEqL. The key ideas behind the 2-step proof 
are the same as the case of 1-step KEqL with some 
modification in the way the noisy Sobolev sampling inequality is applied.
Let us recall the corresponding optimization problem for 2-step KEqL in the style of \eqref{1-step-optimal-recovery-recalled}. With the same notation 
we have 
\begin{equation}\label{2-step-bundle-recalled}\tag{2STP}
\left\{
 \begin{aligned}
      \whu^m_{N} & = \argmin_{v^m \in \UU} \| v^m \|_\UU \quad \text{subject to (s.t.)} 
    \quad v^m(Y^m_N) = u^m(Y^m_N) \\     
    \whP_{M,N} & = \argmin_{G \in \PP} \| G \|_\PP \quad 
    \text{s.t.} \quad  G(S(\whu^m_N)) = f^m(Y) - \overline{P}(S(\whu^m_N)),   \quad m=1, \dots, M,
 \end{aligned}
\right.
\end{equation}
Note that we modified our notation slightly and wrote $\whu^m_N$ instead
of $\whu^m_{M,N}$ since the optimal recovery problems the $u^m$
are independent of each other in this case. 

\begin{proposition}\label{prop:two-step-error-bound}
    Consider the problem \eqref{2-step-bundle-recalled} 
    with $M,N \in \mathbb{N}$. Suppose 
    \Cref{assumption:domain,assumption:UU,assumption:PP} hold,
    $P, \overline{P} \in \PP$, and $u^m \in \UU$. Consider 
    pairs $\{ u^m, f^m\}_{m=1}^M$ 
    satisfying \eqref{PDE-recalled},
    and a bounded set $B \subset \mS$ with Lipschitz boundary. 
    Then there exist constants $\rho_0, \varrho_0(B) \in (0, 1)$ so that 
    whenever $\rho_{m,N} < \rho_0$ and $\varrho_{M,N}(B) < \varrho_0(B)$
    it holds that:
        \begin{equation*}
        \begin{aligned}
           \| \whu^m_{N} - u^m \|_{H^{\gamma'}(\mY)} 
            & \le C 
              \rho_{m,N}^{(\gamma - \gamma')} 
             \| u^m \|_{\UU}, 
        \end{aligned}
    \end{equation*}
    for $0\le \gamma' < \gamma$ and a 
    constant $C>0$ that depends only on $\mY$. Furthermore, 
    \begin{equation*}
    \begin{aligned}
    \| P - \whP_{M,N} \|_{L^\infty(B)} 
     &\le C \max\{ 1, 2^{\eta - \frac{Q + d}{2}} \} \\
     \cdot 
    &\Bigg[  
    \varrho_{M,N}^{\eta - \frac{Q + d}{2}}  
    +  \max \left\{ \left( \sup_m    \rho_{m, N}^{\gamma - \gamma'} \| u^m \|_\UU  \right)^{\eta - \frac{Q + d}{2}},
    \left( \sup_m   \rho_{m, N}^{\gamma - \gamma'} \| u^m \|_\UU  \right) \right\} \left(
    \| P \|_\PP + \| \overline{P} \|_\PP \right)
    \Bigg],
\end{aligned}
\end{equation*}
    for $d/2 + \text{order}(\P) < \gamma' < \gamma$ and a 
    constant $C >0$ that depends on $\mY$ and $B$.
\end{proposition}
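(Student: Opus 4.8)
The plan is to follow the template of the proof of \Cref{prop:1-step-error-estimate}, adapting it to the decoupled two-stage structure of \eqref{2-step-bundle-recalled}. The filtering estimate is the cleaner of the two statements: since the first stage of \eqref{2-step-bundle-recalled} defines $\whu^m_N$ as the genuine minimum-norm interpolant of $u^m$ on $Y^m_N$, feasibility of $u^m$ yields $\|\whu^m_N\|_\UU \le \|u^m\|_\UU$ at once, with no joint-feasibility bookkeeping of the kind needed in part (a) of \Cref{prop:1-step-error-estimate}. The difference $\whu^m_N - u^m$ vanishes on $Y^m_N$, so I would apply the noiseless Sobolev sampling inequality \Cref{prop:sobolev-sampling-inequality}(a) on $\mY$, then the embedding $\UU \hookrightarrow H^\gamma(\mY)$ from \Cref{assumption:UU}(ii) and the triangle inequality, to obtain $\|\whu^m_N - u^m\|_{H^{\gamma'}(\mY)} \le C\rho_{m,N}^{\gamma-\gamma'}\|\whu^m_N - u^m\|_\UU \le 2C\rho_{m,N}^{\gamma-\gamma'}\|u^m\|_\UU$, which is the first claim.

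For the equation-learning estimate I would first reduce to pointwise control of $P - \whP_{M,N}$ on the perturbed point cloud $\widehat S := \cup_m S(\whu^m_N)$. Combining the identity $f^m(y_k) = (P + \overline P)(s_k(u^m))$ from \eqref{PDE-recalled} with the second-stage constraint of \eqref{2-step-bundle-recalled} and the linearity of $s_k(\cdot) = \Phi(\cdot, y_k)$, the value $(P - \whP_{M,N})(s_k(\whu^m_N))$ is an algebraic combination of increments of $P$ and of $\overline P$ between $s_k(u^m)$ and $s_k(\whu^m_N)$. The local Lipschitz property \Cref{assumption:PP}(iii) of both $P$ and $\overline P$, together with $\|s_k(u^m - \whu^m_N)\|_2 \le C\|u^m - \whu^m_N\|_{H^{\gamma'}(\mY)}$ (from the Sobolev embedding \Cref{prop:sobolev-embedding}, valid once $\gamma' > d/2 + \text{order}(\P)$) and the filtering bound just proved, then yields, for every $k,m$ with $s_k(u^m), s_k(\whu^m_N) \in B$, the estimate $|(P - \whP_{M,N})(s_k(\whu^m_N))| \le \epsilon$ with $\epsilon := C(\|P\|_\PP + \|\overline P\|_\PP)\,\tau_{M,N}$ and $\tau_{M,N} := \sup_{m}\rho_{m,N}^{\gamma-\gamma'}\|u^m\|_\UU$. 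This is the analogue of display \eqref{disp-6}, the only change being that the data are now attached to the perturbed nodes $s_k(\whu^m_N)$ rather than to the true nodes.

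The final step is to apply the noisy Sobolev sampling inequality \Cref{prop:sobolev-sampling-inequality}(b) on $B \subset \mS$ (so the ambient dimension is $Q+d$ and the smoothness index is $\eta$) to $g := P - \whP_{M,N} \in \PP \hookrightarrow H^\eta(\mS)$, with $\widehat S \cap B$ as sampling set. Two quantities must be controlled. First, the fill distance of $\widehat S$ in $B$: since each perturbed node lies within $C\tau_{M,N}$ of a true node of $S$, for $\tau_{M,N}$ small (absorbed into $\varrho_0(B)$, with nodes near the Lipschitz boundary of $B$ handled by a slight enlargement) it is at most $\varrho_{M,N}(B) + C\tau_{M,N}$; splitting $(\varrho_{M,N}(B) + C\tau_{M,N})^{\eta - \frac{Q+d}{2}}$ by sub-additivity when $\eta - \frac{Q+d}{2}\le 1$ and by convexity otherwise produces exactly the prefactor $\max\{1, 2^{\eta - \frac{Q+d}{2}}\}$ and separates the $\varrho_{M,N}^{\eta-\frac{Q+d}{2}}$ and $\tau_{M,N}^{\eta-\frac{Q+d}{2}}$ contributions. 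The data term $2\|g|_{\widehat S}\|_\infty \le 2\epsilon$ supplies the piece linear in $\tau_{M,N}$, and the two are combined into $\max\{\tau_{M,N}^{\eta-\frac{Q+d}{2}}, \tau_{M,N}\}$, giving the stated form after simplification under $\rho_{m,N}, \varrho_{M,N}(B) < 1$.

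I expect the genuine obstacle to be bounding $\|\whP_{M,N}\|_\PP$, which enters through $\|g\|_{H^\eta(B)} \le \|P\|_\PP + \|\whP_{M,N}\|_\PP$ via \Cref{assumption:PP}(ii). In \Cref{prop:1-step-error-estimate} this was free: the pair $(\pmb u, P)$ was jointly feasible, so the optimality inequality delivered $\|\whP\|_\PP \le \|P\|_\PP + \cdots$ immediately. Here the two stages are decoupled and $P$ is \emph{not} feasible for the second stage, since its constraints live on the perturbed nodes $s_k(\whu^m_N)$ while $P$ matches the data exactly only on the true nodes $s_k(u^m)$. The plan is to recover $\|\whP_{M,N}\|_\PP \le C(\|P\|_\PP + \|\overline P\|_\PP)$ by comparing $\whP_{M,N}$ with the minimum-norm interpolant of $P + \overline P$ on the perturbed nodes and controlling the $O(\epsilon)$ discrepancy in the interpolated data; making this quantitative so that the constant does not degrade as the node cloud is refined is the delicate point, after which substitution of the norm bound and $\epsilon$ into the sampling inequality completes the argument.
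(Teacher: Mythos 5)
Your proposal follows the paper's proof essentially step for step. Part (a) is the same argument: $\whu^m_N$ is the minimum-norm interpolant of $u^m$ on $Y^m_N$, so the noiseless sampling inequality \Cref{prop:sobolev-sampling-inequality}(a) together with \Cref{assumption:UU}(ii) gives the bound. Part (b) also matches the paper: use the second-stage constraint and \eqref{PDE-recalled} to write $\whP_{M,N}(s_k(\whu^m_N)) - P(s_k(\whu^m_N))$ as increments of $P$ and $\overline{P}$ between $s_k(u^m)$ and $s_k(\whu^m_N)$, control these by the local Lipschitz property \Cref{assumption:PP}(iii) and the filtering bound (via the Sobolev embedding, with $\gamma' > d/2 + \text{order}(\P)$), then apply the noisy sampling inequality \Cref{prop:sobolev-sampling-inequality}(b) on $B$ with the perturbed node cloud, bounding its fill distance by $\varrho_{M,N} + C\sup_m \rho_{m,N}^{\gamma-\gamma'}\|u^m\|_\UU$ and splitting with $(a+b)^p \le \max\{1, 2^{p-1}\}(a^p + b^p)$. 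This is exactly the paper's route, including the provenance of the $\max\{1, 2^{\eta - \frac{Q+d}{2}}\}$ prefactor.

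The one step you flag as unfinished --- bounding $\|\whP_{M,N}\|_\PP$, which is needed because the sampling inequality produces $\|P - \whP_{M,N}\|_{H^\eta(B)}$ rather than $\|P\|_\PP$ --- is precisely a step the paper's proof does not carry out: it applies \Cref{prop:sobolev-sampling-inequality}(b) and writes $\|P\|_\PP$ in the first term with no intervening bound on $\|\whP_{M,N}\|_\PP$. Your diagnosis of why this is nontrivial is correct. In the 1-step setting the joint feasibility of $(\pmb{u}, P)$ yields $\|\whP_{M,N}\|_\PP$ control for free via the optimality inequality \eqref{optimality-bound-1-step}, but in the 2-step problem $P$ is not feasible for the second stage (its constraints sit on the perturbed nodes $s_k(\whu^m_N)$), so no norm comparison is automatic. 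The naive repair gives $\|\whP_{M,N}\|_\PP \le \|P\|_\PP + \big(\epsilon^T \PK(X,X)^{-1}\epsilon\big)^{1/2}$ with $X$ the perturbed node set and $\epsilon$ the pointwise data discrepancy; since $\lambda_{\min}(\PK(X,X))$ degenerates as the cloud densifies, smallness of this correction requires coupling the perturbation size (driven by $\rho_{m,N}$) to the separation radius of the nodes, a hypothesis the proposition does not impose. So your proposal is at least as complete as the paper's own proof, and it correctly isolates the unjustified step; completing it (e.g., by such a separation assumption, or by restating the second stage with a nugget so that $\|\whP_{M,N}\|_\PP$ is controlled by construction) would strengthen the result as stated.
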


\begin{proof}
{\it (a)} The first statement is a direct application of the 
Sobolev sampling inequality \Cref{prop:sobolev-sampling-inequality}(a) 
since the $\whu^m_N$ are simply the kernel interpolants of the $u^m$.

{\it (b)} 
Consider the definition of $P$ and observe that 
the interpolation constraint for $\whP_{M,N}$ can be written as 
\begin{equation*}
    \begin{aligned}
    (\whP_{M,N} + \overline{P})(s_k(\whu^m_N)) &= (P + \overline{P}) (s_k(u^m_N)) 
    = (P + \overline{P}) (s_k(\whu^m_N)) 
    + (P + \overline{P}) (s_k(u^m)) - (P + \overline{P}) (s_k(\whu^m_N))\\
    & = (P + \overline{P}) (s_k(\whu^m_N)) 
    + (P + \overline{P}) (s_k(u^m - \whu^m_N)), 
    \end{aligned}
\end{equation*}
where once again we used the fact that the $s_k(v)$ is linear in $v$. 
Assuming $s_k(u^m)$ and $s_k(u^m_N)$ belong to the set $B$ we 
can use the local Lipschitz property of $P$ and $\overline{P}$ to 
infer that 
\begin{equation*}
    | \whP_{M,N}(s_k(\whu^m_N)) - P(s_k(\whu^m_N)) | 
    \le C(B) \left( \| P \|_\PP + \|\overline{P}\| \right) \| s_k(u^m - \whu^m_N) \| 
\end{equation*}
By the Sobolev sampling inequality \Cref{prop:sobolev-sampling-inequality}(a)
and \Cref{assumption:UU}(ii)
we can further bound 
\begin{equation}\label{disp-8}
    \| s_k(u^m - \whu^m_N) \| \le 
    C \rho_{m, N}^{\gamma - \gamma'} \| u^m\|_\UU,
\end{equation}
where $\gamma > \gamma' > d/2 + \text{order}(\P)$.
This further leads to an error bound on the difference between 
$\whP_{M,N}$ and $P$ on the set of points $s_k(\whu^m_N)$:
\begin{equation*}
    | \whP_{M,N}(s_k(\whu^m_N)) - P(s_k(\whu^m_N)) | 
    \le C
    \rho_{m, N}^{\gamma - \gamma'}
    \left( \| P \|_\PP + \|\overline{P}\| \right) \| u^m \|_\UU 
\end{equation*}
We can now apply the noisy Sobolev sampling inequality \Cref{prop:sobolev-sampling-inequality}(b), viewing $\whP_{M, N}$ as the noisy interpolant 
on $P$ on the set of points $s_k(\whu^m_N)$, to get the error bound 
\begin{equation*}
    \| P - \whP_{M,N} \|_{L^\infty(B)} 
    \le C \Bigg[ \widehat{\varrho}_{M,N}^{\eta - \frac{Q + d}{2}} 
    \| P \|_\PP + \left(\sup_m \rho_{m, N}^{\gamma - \gamma'}  \| u^m\|_\UU \right) \left( \| P \|_\PP + \| \overline{P} \|_\PP \right)
    \Bigg],
\end{equation*}
where we introduced the notation $\widehat{\varrho}_{M,N} 
:= \sup_{s' \in B} \inf_{k, m} \| s' - s_k(\whu^m_N) \|$.
By \eqref{disp-8} we have that 
$\| s' - s_k(\whu^m_N) \| \le \| s' - s_k(u^m \| + C \sup_m \left(  \rho_{m, N}^{\gamma - \gamma'} \| u^m \|_\UU \right)$ which in turn 
implies $\widehat{\varrho}_{M,N} \le \varrho_{M,N} + C \sup_m \left(  \rho_{m, N}^{\gamma - \gamma'} \| u^m \|_\UU \right)$. Substituting 
back into the bound above yields  
\begin{equation*}
    \| P - \whP_{M,N} \|_{L^\infty(B)} 
    \le C \Bigg[ \left( \varrho_{M,N} + \sup_m  \left(  \rho_{m, N}^{\gamma - \gamma'} \| u^m \|_\UU \right) \right)^{\eta - \frac{Q + d}{2}} 
    \| P \|_\PP + \left (\sup_m \rho_{m, N}^{\gamma - \gamma'}  \| u^m\|_\UU \right) \left( \| P \|_\PP + \| \overline{P} \|_\PP \right)
    \Bigg].
\end{equation*}
Using the inequality $(a + b)^p \le \max\{ 1, 2^{p-1}\} (a^p + b^p)$
for $p \in (0, +\infty]$ we can simplify this bound to take the desired 
form 
\begin{equation*}
\begin{aligned}
    \| P - \whP_{M,N} \|_{L^\infty(B)} 
     &\le C \max\{ 1, 2^{\eta - \frac{Q + d}{2}} \} \\
     \cdot 
    &\Bigg[  
    \varrho_{M,N}^{\eta - \frac{Q + d}{2}}  
    +  \max \left\{ \left( \sup_m    \rho_{m, N}^{\gamma - \gamma'} \| u^m \|_\UU  \right)^{\eta - \frac{Q + d}{2}},
    \left( \sup_m   \rho_{m, N}^{\gamma - \gamma'} \| u^m \|_\UU  \right) \right\} \left(
    \| P \|_\PP + \| \overline{P} \|_\PP \right)
    \Bigg].
\end{aligned}
\end{equation*}
\end{proof}

\begin{remark}
    We note that our error analysis above can be extended in various directions 
    to incorporate other types of 
    problems that may be encountered in practice. 
    For example, the terms concerning $\sum_{m=1}^M \| u^m \|_{\UU^2}^2$ 
    $\sup_m \rho_{m, N}$
    in \Cref{prop:1-step-error-estimate}(b) (similar terms in \Cref{prop:two-step-error-bound}(b)) can be too pessimistic in situations where 
    a small portion of the training data are outliers with 
    very large norm or large fill-distances. Then it is natural for 
    us to consider a probabilistic model, i.e., the GP regression approach 
    with a nugget term which allows us to obtain similar error bounds 
    in expectation or high-probability. Such error bounds 
    already exist in the literature and we refer the 
    interested reader to \cite{moriarty2023convergence, wynne2021convergence} for further reading.

\end{remark}

\subsection{Operator learning error analysis}\label{app:theory:operator-learning-error}

We begin by giving an elementary lemma that allows us to 
control the error between the solution of a true PDE and 
that of an approximate equation under sufficient 
regularity assumptions on the forward and inverse differential 
operators.
For this lemma  we
will consider $\P: \UU \to \FF$ and its inverse 
$\P^{-1}: \FF \to \UU$ for generic Banach spaces $\UU, \FF$.

\begin{lemma}\label{lem:operator-learning-error}
Consider sets $A \subset \UU$ and $B \subset \FF$ for which the 
following conditions holds: 
\begin{enumerate}
    \item $\P^{-1}$ is locally Lipschitz on $B$, i.e., for any 
    pair $f, f' \in B \subset \FF$, there exists a constant $L(B)$
    so that 
    \begin{equation*}
        \| \P^{-1}(f) - \P^{-1}(f') \|_\UU \le L(B) \| f - f' \|_\FF.
    \end{equation*}

    \item $\widehat{\P}$ approximates  $\P$ on $A$ in the sense that  
    \begin{equation*}
        \| \widehat{\P}(u) - \P(u) \|_\FF \le  \epsilon(A) \| u\|_\UU,
    \end{equation*}
    for some constant $\epsilon(A) >0$.
\end{enumerate}
Fix an $f \in B$ for which $\P^{-1}(f) \in A$, and let $\whu \in A$
 be any element that 
    solves $\widehat{P}(\whu) = f$. Then we have the error bound 
    \begin{equation*}
        \| \whu - \P^{-1}(f) \|_\UU \le L(B) \epsilon(A) \| \whu\|_\UU.
    \end{equation*}
\end{lemma}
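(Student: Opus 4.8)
The plan is to exploit the fact that $\whu$ solves the surrogate equation $\widehat{\P}(\whu) = f$ exactly, so that its residual under the \emph{true} operator $\P$ is controlled entirely by the approximation error $\epsilon(A)$, and then to transport this residual back into a $\UU$-error through the local Lipschitz continuity of $\P^{-1}$ on $B$. First I would introduce the true solution $u^\star := \P^{-1}(f) \in A$ and use that $\P^{-1}$ is a genuine (left) inverse of $\P$ to write $\whu = \P^{-1}(\P(\whu))$. This expresses the quantity of interest as a difference of values of $\P^{-1}$,
\[
\| \whu - \P^{-1}(f) \|_\UU = \| \P^{-1}(\P(\whu)) - \P^{-1}(f) \|_\UU ,
\]
and applying the local Lipschitz bound (hypothesis~1) to the pair $\P(\whu), f$ yields
\[
\| \whu - \P^{-1}(f) \|_\UU \le L(B)\, \| \P(\whu) - f \|_\FF .
\]

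Next I would estimate the residual $\| \P(\whu) - f \|_\FF$. Since $\whu$ solves the surrogate equation exactly we have $f = \widehat{\P}(\whu)$, and because $\whu \in A$ the approximation hypothesis~2 applies at the point $\whu$, giving
\[
\| \P(\whu) - f \|_\FF = \| \P(\whu) - \widehat{\P}(\whu) \|_\FF \le \epsilon(A)\, \| \whu \|_\UU .
\]
Chaining the two displays produces the claimed bound $\| \whu - \P^{-1}(f) \|_\UU \le L(B)\,\epsilon(A)\,\| \whu \|_\UU$. The algebra here is essentially a one-line triangle-free estimate, so the computational content is minimal.

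The main obstacle is not the calculation but a well-posedness subtlety hidden in the very first step: to invoke the Lipschitz estimate I must know that $\P(\whu)$ lies in $B$, not merely that $f$ does. This is not automatic, and it is exactly the sort of condition one has to fold into the definition of the admissible pair $A, B$. The residual bound itself points to the right formulation, since it shows that $\P(\whu)$ lies within distance $\epsilon(A)\|\whu\|_\UU$ of $f \in B$; one therefore assumes that $B$ is large enough to remain a Lipschitz region for $\P^{-1}$ under such perturbations (for instance that $\P^{-1}$ is Lipschitz on a neighborhood of $f$ of radius $\epsilon(A)\sup_{u \in A}\|u\|_\UU$, which is consistent with hypothesis~1). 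I would state this containment as a standing requirement, making clear that the estimate is meaningful precisely in the regime where the surrogate residual is small enough that $\P(\whu)$ stays inside the Lipschitz set $B$.
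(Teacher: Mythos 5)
Your proof is correct and follows essentially the same route as the paper's: both apply the local Lipschitz bound for $\P^{-1}$ to the pair $\bigl(\P(\whu), f\bigr)$ and then identify the residual $\| \P(\whu) - f \|_\FF = \| \P(\whu) - \widehat{\P}(\whu) \|_\FF \le \epsilon(A) \| \whu \|_\UU$ via the exact surrogate equation. Your closing remark about needing $\P(\whu) \in B$ to legitimately invoke the Lipschitz hypothesis is a genuine subtlety that the paper's own proof passes over silently, so flagging it (and folding it into the standing assumptions on $A$ and $B$) is a point in your favor rather than a defect.
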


\begin{proof}
    Let us write $u = \P^{-1}(f)$. 
    Since $\whu$ solves the approximate problem we can write 
    \begin{equation*}
        \widehat\P (\whu) + \P(\whu) - \P(\whu) = f = \P(u).
    \end{equation*}
    Combining this with the Lipschitz assumption on $\P^{-1}$
    and the assumed error bound for $\widehat\P$ 
    gives the chain of inequalities 
    \begin{equation*}
   \| \whu - u \|_\UU \le 
   L(B)\| \P(\whu) - \P(u) \|_\FF = L(B) \| \widehat \P (\whu) - \P(\whu) \|
   \le L(B) \epsilon \| \whu \|.
    \end{equation*}
\end{proof}

Let us now apply this lemma to obtain an error bound for the learned equation 
for 1-sep KEqL; a similar result can be shown for the 2-step method by 
a straightforward modification of the proof.

\begin{proposition}\label{prop:operator-learning-1-step-KEqL}
    Suppose \Cref{prop:1-step-error-estimate} is satisfied with 
    some set $B$. Let $f \in C(\mY)$ be a right-hand 
    side function such that
    \begin{enumerate}
        \item $\P^{-1}$ is locally Lipschitz in a neighborhood of $f$, i.e., 
        $\| \P^{-1}(f) - \P^{-1}(f') \|_\UU \le L \| f -f' \|_{C(\mY)}$ 
        for $f'$ in some neighborhood of $f$.
        \item $\Phi( \P^{-1}(f), y) \in B$ for all $y \in \mY$.
    \end{enumerate}
    Then, for sufficiently large $M, N$, and any $\whu$ that satisfies $\widehat\P(\whu) = f$ we have the error bound 
    \begin{equation*}
        \| \whu - \P^{-1}(f) \|_\UU \le C \| \whu\|_\UU 
        \Bigg[ \left( \varrho_{M, N}^{\eta - \frac{Q + d}{2}}
    + \sup_{m} \rho_{m, N}^{\gamma - \gamma'} \right) 
    \left( \| P \|_\PP^2 + \| \overline{P} \|_\PP^2 + 
     \sum_{m=1}^M \| u^m \|_{\UU^2}^2 \right)^{1/2}  
\Bigg].
    \end{equation*}
\end{proposition}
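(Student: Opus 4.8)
The plan is to obtain this proposition as a direct instance of \Cref{lem:operator-learning-error}, taking $\FF = C(\mY)$ and supplying the abstract approximation error through the quantitative equation-learning bound of \Cref{prop:1-step-error-estimate}(b). First I would set $A := \{ v \in \UU : \Phi(v, y) \in B \text{ for all } y \in \mY \}$, which by hypothesis (2) contains $u := \P^{-1}(f)$. The structural key is that $\widehat{\P} = (\whP_{M,N} + \overline{P}) \circ \Phi$ and $\P = (P + \overline{P}) \circ \Phi$ differ only through their learned and true parts, so for every $v \in A$,
\begin{equation*}
    \| \widehat{\P}(v) - \P(v) \|_{C(\mY)} = \sup_{y \in \mY} \big| (\whP_{M,N} - P)(\Phi(v, y)) \big| \le \| \whP_{M,N} - P \|_{L^\infty(B)}.
\end{equation*}
This bounds the approximation error of $\widehat{\P}$ on $A$ by the equation-learning error, which is exactly the quantity estimated in \Cref{prop:1-step-error-estimate}(b); it supplies the $\epsilon(A)$ appearing in hypothesis (2) of the lemma, up to the harmless $\|v\|_\UU$ normalization built into the lemma's statement.

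Next I would check the two hypotheses of \Cref{lem:operator-learning-error}. Its local Lipschitz requirement on $\P^{-1}$ is precisely hypothesis (1) of the proposition, yielding the constant $L$. The one point needing care is internal to the lemma's proof: it applies the Lipschitz bound to the pair $\P(\whu)$ and $f = \P(u)$, which is legitimate only when $\P(\whu)$ lies in the neighborhood of $f$ where $\P^{-1}$ is Lipschitz. Using $\widehat{\P}(\whu) = f$ we have $\P(\whu) - f = \P(\whu) - \widehat{\P}(\whu) = (P - \whP_{M,N}) \circ \Phi(\whu, \cdot)$, whose $C(\mY)$ norm is at most $\| \whP_{M,N} - P \|_{L^\infty(B)}$, and this tends to zero as the fill distances $\rho_{m,N}$ and $\varrho_{M,N}(B)$ shrink. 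Hence for $M, N$ large enough $\P(\whu)$ is forced into the Lipschitz neighborhood of $f$, which is the meaning of the qualifier ``for sufficiently large $M, N$''.

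With the hypotheses verified, the conclusion of \Cref{lem:operator-learning-error} reads $\| \whu - \P^{-1}(f) \|_\UU \le C \| \whu \|_\UU \| \whP_{M,N} - P \|_{L^\infty(B)}$, and substituting the bound of \Cref{prop:1-step-error-estimate}(b) for the last factor, with $L$ absorbed into $C$, produces the stated estimate. The main obstacle I anticipate is the admissibility bookkeeping: to apply the displayed approximation bound at $v = \whu$ one needs $\whu \in A$, i.e., $\Phi(\whu, y) \in B$ for all $y$, which is only guaranteed because $\whu$ is forced close to $\P^{-1}(f) \in A$ for large $M, N$. Making this precise requires either a short continuity argument or the mild additional assumption that $\Phi(\whu, \cdot)$ remains within a fixed compact enlargement of $B$; once this is secured the remaining steps reduce to routine substitution.
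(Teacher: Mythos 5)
Your proposal follows essentially the paper's own route: the paper likewise writes $u = \P^{-1}(f)$, uses hypothesis (2) together with \Cref{prop:1-step-error-estimate}(b) to bound $\| \widehat\P(u) - \P(u)\|_{C(\mY)}$, and then concludes by citing \Cref{lem:operator-learning-error}. If anything, you are more careful than the paper's three-line argument, which silently passes over the two bookkeeping points you flag — that the lemma needs the approximation bound at $\whu$ itself (so $\Phi(\whu,\cdot)$ must stay in $B$), and that $\P(\whu)$ must land in the Lipschitz neighborhood of $f$, which is what the qualifier ``for sufficiently large $M,N$'' is doing.
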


\begin{proof}
    Let us write $u = \P^{-1}(f)$, and observe that thanks to hypothesis 
    (2) of the theorem and \Cref{prop:1-step-error-estimate}, 
    we have the bound 
    \begin{equation*}
        \| \widehat\P(u) - \P(u) \|_{C(\mY)}
        \le C \Bigg[ \left( \varrho_{M, N}^{\eta - \frac{Q + d}{2}}
    + \sup_{m} \rho_{m, N}^{\gamma - \gamma'} \right) 
    \left( \| P \|_\PP^2 + \| \overline{P} \|_\PP^2 + 
     \sum_{m=1}^M \| u^m \|_{\UU^2}^2 \right)^{1/2}  
\Bigg].
    \end{equation*}
    The result follows by a straightforward application of \Cref{lem:operator-learning-error}.
\end{proof}

\begin{remark}
    The above proposition tells us that the operator learning problem 
    associated to "solving" $\widehat\P(u) = f$ essentially inherits the 
    same rate of convergence as the equation learning problem so long as 
    the new right-hand side function $f$ is not too different from 
    those encountered in the training data. This is inline also with the 
    local nature of the type of error bounds we have derived in this 
    section for $\whP$ and $\whu^m$ from the theory of scattered 
    data approximation.
\end{remark}

\subsection{Representer theorems for 1-step and 2-step KEqL}
Below we give the proofs of representer theorems for 
both versions of the KEqL algorithm as presented in the main body.

\subsubsection{Representer formulas for 2-step KEqL }\label{app:theory:representer-theorem-2-step-method}

For completeness 
we will  give a brief  justification for the expressions 
\eqref{u-kernel-interpolant} and \eqref{P-kernel-interpolant}
as direct consequences of \Cref{prop:generalized-rep-theorem}: Consider \eqref{u-optimal-recovery} and apply \Cref{prop:generalized-rep-theorem} with $\mH = \UU$, $\phi_i = \delta_{y_i^m}$, 
and $z_i = u^m(y_i^m)$ for all $i = 1,\dots, N$. This gives 
\eqref{u-kernel-interpolant}. Next consider 
 \eqref{P-optimal-recovery} and apply \Cref{prop:generalized-rep-theorem} with $\mH = \PP$, $\phi_i = \delta_{s_i}$ for $s_i \in S$, and $z_i = f(y_i)$ for $y_i \in Y$ to obtain 
 \eqref{P-kernel-interpolant}.

\subsubsection{Representer theorem for 1-step KEqL (Proof of \Cref{thm:one-step-equivalence})}\label{app:theory:representer-theorem-1-step-method}
To simplify notation we first prove the theorem with $M=1$, i.e., 
with training data  $(u(Y^1),f)$, and $\overline{P}=0$. At the end 
of the proof we comment on how the argument can be extended to the 
general case. Let us recall the 
corresponding optimal recovery problem for convenience:
\begin{equation}\label{one-shot-optimal-recover-simplified}
\begin{aligned}
     (\wh{u}, \whP) = & \argmin_{ v \in \UU, G \in \PP} && 
 \| G \|^2_\PP + \lambda_1  \| v\|^2_\UU \\ 
& \text{s.t.} &&   v(Y^1) = u(Y^1), \quad \text{and} \quad  G(S) = f(Y),\\
& \text{where} && S = \Phi(v,Y). 
\end{aligned}
\end{equation}
Recall that $Y$ is the dense set of collocation points 
that we use to impose the PDE constraint.

Then our goal is to show that under the assumptions of 
\Cref{thm:one-step-equivalence},  every minimizing  tuple
        $(\whu, \whP)$ of 
     \eqref{one-shot-optimal-recover-simplified} 
     can be written in the form
    \begin{equation*}
        \whu(y) = \UK(\phi, y)^T\wh{\alpha}, 
        \quad 
        \whP(s) =   \PK(S(\widehat{\alpha}), s)^T \wh{\beta},
    \end{equation*}
    for a tuple  $(\wh{\alpha} , \wh{\beta})$ that solves 
    the equivalent optimization problem 
    \begin{equation}\label{one-step-representer-form-simplified}
\begin{aligned}
     (\wh{\alpha}, \wh{\beta}) = & 
     \argmin_{ \alpha \in \R^{QK}, \: \beta \in \R^{K}} && 
\beta^T \PK(S(\alpha), S(\alpha)) \beta 
+ \lambda_1 \alpha^T \UK(\phi,\phi) \alpha, \\ 
 & \text{s.t.} && \UK(\phi, Y^1)^T \alpha = u(Y^1), \quad \text{and} \quad \PK(S(\alpha), S(\alpha))^T \beta = f(Y),
 \\
 & \text{where}          && S(\alpha) = \Phi( \UK(\phi, \cdot)^T \alpha,Y).
\end{aligned}
\end{equation}
Recall our notation 
$\UK(\phi, Y^m) \in \R^{QK \times N}$ for the matrix  with 
columns $\UK(\phi, y_n)$, and $\PK(S(\alpha), S(\alpha)) \in \R^{K \times K}$
for the matrix with columns $\PK(S(\alpha), s_k)$ 
with $s_k= \Phi( \UK(\phi, \cdot)^T \alpha,y_k)$.
\begin{proof}
Let us write \eqref{one-shot-optimal-recover-simplified} in the equivalent form
\begin{equation}\label{one-shot-two-level-optimization-problem}
     \left\{
     \begin{aligned}
       & \argmin_{Z \in \R^{K\times Q}} \: \left\{
       \begin{aligned}
          & \argmin_{v \in \UU, G \in \PP} ~\|G\|_{\PP}^2 + \lambda_1\|v\|_{\UU}^2 \\
          &\text{s.t.} \quad  \phi_k^q(v) = Z_{k,q} \text{ and } G(s_k) = f(y_k),\text{ for all } k=1,\ldots,K, \text{ and }q=1,\ldots,Q
          \\
          &\text{where} \quad  s_k := (y_k,Z_{k,1},\dots,Z_{k,Q})
       \end{aligned} \right.\\
  &\text{s.t.} \quad Z_{j,1} = u(y_j), \hspace{7ex} \text{for } j \in \{k\in\{1,\dots,K\}: y_k \in Y^1 \}.
\end{aligned}
\right.
\end{equation}
Observe that the $Z$ variable matrix that we introduce acts as a slack variable for the inner problem and the outer constraint is enforcing the observation locations of $v$ at $Y^1 \subset Y$. 

For a fixed $Z \in \R^{K \times Q}$, we can solve the inner optimization problem for $v$ and $G$ explicitly using \Cref{prop:generalized-rep-theorem}, which leads to 
\begin{align*}
    v(y) = \UK(\phi,y)^T \UK(\phi,\phi)^{-1}\operatorname{vec}(Z),\quad \text{and}\quad G(s) = \PK(S,s)^T \PK(S,S)^{-1}f(Y).
\end{align*}
The RKHS norms of these solutions can be computed explicitly as
\begin{align*}
     \|v\|_{\UU}^2 = \operatorname{vec}(Z)^T\UK(\phi,\phi)^{-1} \operatorname{vec}(Z),\quad\text{and}\quad\|G\|_{\PP}^2 = f(Y)^T \PK(S,S)^{-1} f(Y). 
\end{align*}
Thus, we equivalently write \eqref{one-shot-two-level-optimization-problem} as
\begin{equation}\label{one-shot-two-level-optimization-problem_Z_slacked}
     \left.
     \begin{aligned}
       & \argmin_{Z \in \R^{K\times Q}} \: f(Y)^T \PK(S,S)^{-1} f(Y) + \lambda_1\operatorname{vec}(Z)^T\UK(\phi,\phi)^{-1} \operatorname{vec}(Z) \\
  &\text{s.t.} \quad \UK(\phi,Y^1)^T \UK(\phi,\phi)^{-1}\operatorname{vec}(Z) = u(Y^1),
\end{aligned}
\right.
\end{equation}
where we now modified the $s_k$ to be of the form
\begin{equation*}
s_k = (y_k, \UK(\phi,\phi_k^1)^T \UK(\phi,\phi)^{-1}\operatorname{vec}(Z),\dots, \UK(\phi,\phi_k^Q)^T \UK(\phi,\phi)^{-1}\operatorname{vec}(Z)).    
\end{equation*}
Similarly, the constraint in \eqref{one-shot-two-level-optimization-problem_Z_slacked} is valid since $\phi_k^q(v) = Z_{k,q}$ which particularly includes $\phi_j^1(v) = Z_{j,1}$ for $j \in \{\ell\in\{1,\dots,K\}: y_\ell \in Y^1 \}$.

Finally, if we define
$\alpha = \UK(\phi,\phi)^{-1}\operatorname{vec}(Z)$ and $\beta = \PK(S,S)^{-1}f(Y)$ (where in case $\UK(\phi,\phi)$ and $\PK(S,S)$ are not invertible, understand $\alpha$ and $\beta$ in the least squares sense) then the problem \eqref{one-shot-two-level-optimization-problem_Z_slacked} is equivalent to the desired system 
\begin{equation*}\label{one-shot-two-level-optimization-problem_alphabeta_slacked}
     \left.
     \begin{aligned}
       & \argmin_{\alpha \in \R^{QK}, \beta \in \R^K} \: \beta^T \PK(S,S)\beta + \lambda_1\alpha^T\UK(\phi,\phi) \alpha\\
  &\text{s.t.} \quad \UK(\phi,Y^1)^T \alpha = u(Y^1),\quad\text{and}\quad \PK(S,S)^T\beta = f(Y)
\end{aligned}
\right.
\end{equation*}
where $S = \{ s_1, \dots s_K\}$ with each $s_k = \left(y_k, \UK(\phi, \phi^1_k)^T \alpha , \dots, \UK(\phi, \phi^Q_k)^T \alpha \right)$. Notice that the constraint on $\beta$ is trivially satisfied since $G(s_k) = f(y_k)$ for all $k \in \{1,\dots,K\}$, however, it becomes explicit due to the change of variables in terms of $\beta$ in the formulation of $G$.

For the general proof for an arbitrary number of functions $M$ the same idea follows since the equivalent form \eqref{one-shot-two-level-optimization-problem} now contains the slack tensor $Z \in \R^{K\times Q \times M}$ instead of the matrix $Z$ above since 
each $v^m$ will be represented by a matrix $Z^m$ in our proof.
Additionally $S$ will then contain not only $K$ points  but $MK$ points corresponding to each of the $v^m$ and their 
requisite partial derivatives evaluated at the collocation points.
To obtain the result when $\overline{P} \neq 0$ the proof simply
 follows by replacing the data $f(Y)$ in 
 \eqref{one-shot-optimal-recover-simplified} with $\overline{P}(S)$
 and similarly in subsequent arguments in the proof.

\end{proof}

\section{Details of algorithms}\label{app:algorithms}

In this section, we describe additional algorithmic details
relating to the 1-step KEqL method, i.e.,  \eqref{one-step-representer-relaxed}.
As it was mentioned in the main body of the paper, implementation of 
our LM formulation already leads to 
a convergent algorithm with good empirical performance. However, dealing 
with large kernel matrices often limits the scalability of that method. 
To address these issues we outline various strategies in this section
that are implemented in our numerical experiments. 
These optional strategies include: a Nystr\"om approximation to our model for $P$ using a reduced basis; applying a  change of variables in the optimization
algorithms using Cholesky factors of kernel matrices to improve conditioning; and 
fast matrix calculations by leveraging block arrow structures.

\subsection{Nystr\"om approximations and reduced bases for $P$}\label{app:algorithms:nystromapproximation}
It is often possible to replace the canonical representer basis 
$\PK(S(\pmb \alpha), s)$ for $P$ with a fixed set of $I \ll MK$ inducing points $S_I$ \cite{leibfried2020tutorial}.
These inducing points can be chosen by randomly sub-sampling the elements of $S(\pmb \alpha)$ at every iteration of the LM algorithm, but this 
can lead to technical difficulties since the point cloud $S(\pmb \alpha)$
changes from one iteration to the next, meaning that our reduced 
basis for $P$ also needs to change. Instead, we simply choose $S_I$
by subsampling the point clouds 
$\Phi(\whu^m, Y)$ where the $\whu^m$ denote our estimates of the $u^m$ from 
2-step KEqL. Hence, our reduced basis for $P$ is pre-computed before 
the LM algorithm is implemented. 
Note, that this approximation becomes exact whenever the number of inducing points $I \geq \dim(\PP)$, for example, in the case of polynomial kernels which have finite rank, choosing a sufficiently 
large number of inducing points ensures exact solution of the problem. 


By fixing $S_I$, our finite dimensional optimization problem becomes 

\begin{equation}\label{inducing_opt}
  \begin{aligned}
     (\wh{\pmb \alpha}, \wh{\beta}) = 
     & \argmin_{{ \pmb \alpha} \in (\R^{QK})^M, \: \beta \in \R^{I}}    
      \beta^T \PK(S_I, S_I) \beta 
+  \lambda_1 \sum_{m=1}^M   (\alpha^m)^T \UK(\phi,\phi) \alpha^m \\ 
& \qquad  +  \sum_{m=1}^M 
    \frac{1}{2\sigma_u^2} \| \UK(\phi, Y^m)^T \alpha^m - u^m(Y^m) \|_2^2 
+ \frac{1}{2\sigma_P^2} \| \PK(S_I, S^m(\alpha^m))^T \beta - f^m(Y) \|_2^2, 
  \end{aligned}
\end{equation}
Notice that the matrix $P(S_I, S_I)$ is now fixed and of much smaller 
size than the full matrix $P(S(\pmb \alpha), S(\pmb \alpha))$. 

\subsection{Cholesky change of variables}\label{app:algorithms:choleskychangeofvariables}
Taking the Cholesky factorizations $\PK(S_I, S_I)=C_PC_P^T$ and $\UK(\phi,\phi) = C_UC_U^T$, we can define the transformed variables 
\begin{equation*}
    w^m = C_U^T \alpha^m,\quad z=C_P^T\beta.
\end{equation*}
Under this transformation, we can re-write \eqref{inducing_opt} as,
\begin{equation}\label{transformed_opt}
  \begin{aligned}
     (\wh{\pmb w}, \wh{z}) = 
     & \argmin_{{ \pmb{w}} \in (\R^{QK})^M, \: z \in \R^I}   
      \|z\|^2 
+  \lambda_1\sum_{m=1}^M   \|w^m\|^2 \\ 
& \qquad  +  \sum_{m=1}^M 
    \frac{1}{2\sigma_u^2} \| \UK(\phi, Y^m)^T C_U^{-T} w^m - u^m(Y^m) \|_2^2 \\
& \qquad + \frac{1}{2\sigma_P^2} \| \PK(S_I, S^m(w^m))^T C_P^{-T}z - f^m(Y) \|_2^2. 
  \end{aligned}
\end{equation}
While the LM algorithm is generally invariant under such changes of variables this greatly improves the conditioning of the optimization problems in 
practice. Indeed, we found that this change of variables enables first order methods, such as gradient descent, to also produce reasonable solutions
when otherwise they would have failed. 

To further clarify our implementation of the LM algorithms we abstract the objective into a general nonlinear least squares problem
by rewriting \eqref{app:algorithms:choleskychangeofvariables} as
\begin{equation}\label{nls_opt}
  \begin{aligned}
     (\wh{\pmb w}, \wh{z}) = 
     & \argmin_{{ \pmb{w}} \in (\R^{QK})^M, \: z \in \R^{I}}   
      \|z\|^2 
+  \sum_{m=1}^M   \|w^m\|^2 + \|F(z,\pmb{w})\|^2 \\
& F(z,\pmb{w})=\begin{bmatrix}F_{1}\left(z,w^{1}\right)\\
F_{2}\left(z,w^{2}\right)\\
\vdots\\
F_{M}\left(z,w^{M}\right)
\end{bmatrix}\in \R^{M(N+K)},\quad F_{m}(z,w_{m})=
\begin{bmatrix}
\sqrt{\frac{1}{2\sigma_u^2}}\left(\UK(\phi, Y^m)^T C_U^{-T} w^m - u^m(Y^m)\right)\\
\sqrt{\frac{1}{2\sigma_P^2}}\left(\PK(S_I, S^m(w^m))^T C_P^{-T}z - f^m(Y)\right)
\end{bmatrix}.
  \end{aligned}
\end{equation}
For this problem we obtain the LM updates
\begin{equation}\label{newLM-update-rule}
\begin{aligned}
    (\pmb{w}_{(j+1)}, z_{(j+1)}) 
    = & \argmin_{{ \pmb{w}} \in (\R^{QK})^M, \: z \in \R^{I}}   
      \|z\|^2 
+  \sum_{m=1}^M   \|w^m\|^2 + \|F_{(j)}(z,\pmb{w})\|^2
+ \lambda_{(j)}\left(\|z - z_{(j)}\|^2 
+  \sum_{m=1}^M   \|w^m - w^m_{(j)}\|^2\right)
\\
 &\quad F_{(j)}(z,\pmb{w}) = \nabla F(z_{(j)},\pmb{w}_{(j)})\begin{bmatrix}z-z_{(j)}\\
\pmb{w}-\pmb{w}_{(j)}
\end{bmatrix}
 + F(z_{(j)},\pmb{w}_{(j)})
\end{aligned}
\end{equation}
where $\lambda_{(j)}$ is a proximal regularization parameter to ensure global convergence.
In many cases, applying this iteration with a reasonable update rule 
for $\lambda_j$ and solving the least squares subproblems by computing Cholesky factorizations of the normal equations is sufficient to achieve accurate solutions. When issues with the conditioning of the normal 
equations arise, we use an SVD solver which is more accurate but 
has a higher computational cost.

\subsubsection{A heuristic for choosing $\lambda_{(j)}$}\label{app:algorithms:dampingparameterheuristic}
We now discuss the adaptation of the damping parameter $\lambda_{(j)}$, which is crucial for both ensuring stability of the optimization algorithm far from a minimum, and providing fast convergence near minima. 
Following \cite{aravkin2024levenberg,gavin2019levenberg},
define the \textit{gain ratio} $\rho_{(j)}$ to be the ratio of the decrease of the true objective value to the decrease predicted by the linearized objective,
\begin{equation*}
    \rho_{(j)} := \frac
    {\left(\|z_{(j+1)}\|^2 
+  \sum_{m=1}^M   \|w^m_{(j+1)}\|^2 + \|F(z_{(j+1)},\pmb{w}_{(j+1)})\|^2\right) - 
\left(\|z_{(j)}\|^2 
+  \sum_{m=1}^M   \|w^m_{(j)}\|^2 + \|F(z_{(j)},\pmb{w}_{(j)})\|^2\right)}
    {\left(\|z_{(j+1)}\|^2 
+  \sum_{m=1}^M   \|w^m_{(j+1)}\|^2 + \|F_{(j)}(z_{(j+1)},\pmb{w}_{(j+1)})\|^2\right) - \left(\|z_{(j)}\|^2 
+  \sum_{m=1}^M   \|w^m_{(j)}\|^2 + \|F(z_{(j)},\pmb{w}_{(j)})\|^2\right)}.
\end{equation*}
For constants $0<c_0<c_1<c_2<1$ and an adaptation multiplier $b>1$, we set 
\begin{equation*}
    \lambda_{(j+1)}=\begin{cases}
b\lambda_{(j)} & \rho_{(j)}<c_{1}\\
\lambda_{(j)} & \rho_{(j)}\in(c_{1},c_{2})\\
\frac{1}{b}\lambda_{(j)} & \rho_{(j)}>c_{2}.
\end{cases}
\end{equation*}
In the case that $\rho_{(j)}<c_0$, we reject the step,  
set $(\beta_{(j+1)},\pmb{\alpha}_{(j+1)}) = (\beta_{(j)},\pmb{\alpha}_{(j)})$, and attempt to compute the next step with the increased damping parameter. 

\subsection{Leveraging block-arrow sparsity }\label{app:algorithms:blockmatrixstructure}
Instantiating and computing solutions to
the regularized linear least squares problems within the LM iterations can be difficult in large scale problems where $N, M$ are large.
In such cases, the solution to \eqref{newLM-update-rule} can be computed directly in $O((QK+N_{\mathrm{I}})^3 M)$ operations, 
by leveraging sparsity of the gradients $\nabla F$,
rather than the naive $O((QK+N_{\mathrm{I}})^3 M^3)$. 
Observe that 
\begin{equation*}
    \nabla F(z_{(j)},\pmb{w}_{(j)})
= 
    \begin{bmatrix}\nabla_{z}F_{1} & \nabla_{w_{1}}F_{1} & 0 & \cdots & 0\\
\nabla_{z}F_{2} & 0 & \nabla_{w_{2}}F_{1} & \cdots & 0\\
\vdots & \vdots & \vdots & \ddots & \vdots\\
\vdots & 0 &  &  & 0\\
\nabla_{z}F_{M} & 0 & 0 & \cdots & \nabla_{w_{M}}F_{M}
\end{bmatrix}
\end{equation*}
This form of $\nabla F$ induces a block arrowhead structure for the regularized normal equations for the least squares problems 
within the LM update,  
\begin{equation*}
    \nabla F^T \nabla F + (1+\lambda_{(j)})I = \begin{bmatrix}* & * & \cdots & * & *\\
* & *\\
\vdots &  & *\\
* &  &  & \ddots\\
* &  &  &  & *
\end{bmatrix} = \begin{bmatrix}A_P & B_{UP}^{T}\\
B_{UP} & D_{U}
\end{bmatrix} 
\end{equation*}
where $A_P\in \R^{N_I \times N_I}$ and $D_U \in \R^{QKM \times QKM}$ is a block diagonal matrix  with blocks of size $QK$. From here, block elimination can be applied to solve the normal equations
\begin{equation*}
\begin{bmatrix}A_{P} & B_{UP}^{T}\\
B_{UP} & D_{U}
\end{bmatrix}\begin{bmatrix}\delta z\\
\delta \pmb{w}
\end{bmatrix}=\begin{bmatrix}F_{P}\\
F_{U}
\end{bmatrix}
\end{equation*}
given by the formulae
\begin{align*}
    C =\left(A_{P}-B_{UP}^{T}D_{U}^{-1}B_{UP}\right), \quad 
     \delta w = C^{-1}\left(F_{P}-B_{UP}^{T}D_{U}^{-1}F_{U}\right), \quad
    \delta \pmb{z} = D_{U}^{-1}\left(F_{U}-B_{UP} \delta\beta \right),
\end{align*}
where we note that the Schur complement $C$ and the matrix $A_{P}$ are of the moderate size $I \times I$, and the inverse of $D_U$ is readily computable due to the block diagonal structure.

\section{Numerical experiments and details}\label{app:numerics}

We now present details of the numerical experiments summarized in the main body of the paper, covering various details such as data generation processes, selection of observation and collocation points, hyperparameter,
and additional results and observations. Further implementation details can be found in our GitHub repository\footnote{\url{https://github.com/TADSGroup/kernelequationlearning}} where we collect code for regenerating 
our numerical results and figures. 

\subsection{The Duffing oscillator}\label{app:numerics:ODE-example}
In this example we compared the performance of 1-step and 2-step KEqL 
along with SINDy for learning a 1D nonlinear ODE. Our focus 
is on the performance of the methods in filtering and 
extrapolation/forecasting of the dynamics.

Let us recall our problem setup where we took $\mY = (0,50)$ and 
considered the Duffing oscillator
\begin{equation}\label{duffing_ODE_SI}
\left\{
\begin{aligned}
        \P(u) &= \partial_{t}^2 u  - 3u +3u^3 +0.2 \partial_t u= \cos(2t), && t \in \mY, \\
        u(0) & = \partial_t u(0) = 0. &&
\end{aligned}
\right.
\end{equation}

To generate the training data for this example we solved the ODE numerically using a Dopri5 solver in Python with adaptive step size and initial value $1e-3$. The numerical solution was then subsampled on $N=32$ observation points that were also picked on a uniform collocation grid $Y$ of size $1000$
in $\mY$. The test data set for filtering the solution was generated similarly but on a finer uniform grid of $5000$ observation points. 
To generate the test datasets for forecasting and extrapolation errors, we used the same numerical solver, but, we introduced three new initial conditions, each satisfying $u(0) = 0$, while the initial time derivative $\partial_t u(0)$ was set to $0.5$, $1$, and $-1$, respectively.

For the kernels, we selected $\UK$ as a RQ kernel 
and $\PK$ as RBF, both with $\Sigma = \text{Id}$. We employed a fixed basis with 
$I = 500$ inducing points for learning the equation (see \Cref{app:algorithms:nystromapproximation}). 

For the loss function \Cref{app:algorithms:nystromapproximation}, we used
\begin{equation}
\label{eq: hyperparams}
\lambda = 1, \qquad \sigma_u^2 = \sqrt{\frac{NM}{\theta_u}}, \qquad \text{and} \qquad \sigma_P^2 = \sqrt{\frac{MI}{\theta_P}},
\end{equation}
with  $\theta_u = 5e-8$ and $\theta_P = 1e-9$, selected manually.
The loss history for the optimization of 1-step KEqL can be seen in \Cref{fig: duffing_loss_hist} for learning the Duffing ODE.
We see that the algorithm has converged in about 
200 iterations. 

\begin{figure}[ht!]
    \centering
    \begingroup
    \footnotesize
    \begin{overpic}[scale=0.4]{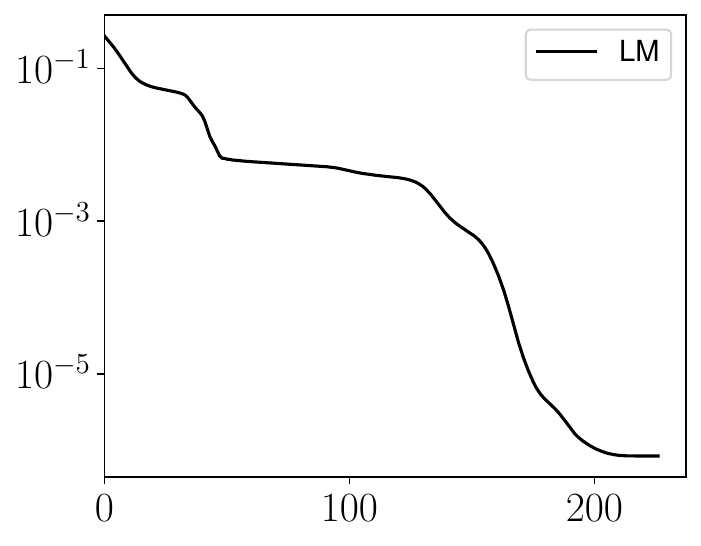}
    \put(-6,36){\rotatebox{90}{{ Loss}}}
    \put(50,-5){\rm Steps}
    \end{overpic}
    \endgroup
    \caption{Convergence history of LM for 1-step KEqL for the Duffing ODE \eqref{duffing_ODE}.}
    \label{fig: duffing_loss_hist}
\end{figure}

\subsection{The Burgers' PDE}\label{app:numerics:burgers}
In this example we compared 1-step KEqL with
SINDy and the PINN-SR algorithm.
Here we describe 
the choice of initial conditions, generation of 
training and test samples, hyperparameters, and optimization details used for this example. 

Let us recall our problem setting. We took $\mY = (0,1] \times (0,1)$ and  considered the Burgers' PDE 
\begin{equation}\label{burgers-PDE-SI}
\left\{
\begin{aligned}
        \P(u) &= \partial_t u  + \vartheta u \partial_x u - \nu \partial_{xx} u = 0, && (t,x) \in \mY, \\
        u(0, x) &= u_0(x), && \\
        u(t, 0) & = u(t, 1) = 0. &&
\end{aligned}
\right.
\end{equation}

We remind the reader that whenever we state in any of the subsequent experiments that \eqref{burgers-PDE} was solved, we mean that a second-order Strang splitting method with a small step size was used, followed by spline interpolation to obtain a solution that can be evaluated at any point of its domain. Additionally, when using the 1-step KEqL method, we always assumed the use of the standard implementation of the algorithm described in \eqref{transformed_opt}.

\subsubsection{Experiments with increasing 
number of observations}

Here, we used the PDE \eqref{burgers-PDE} with coefficients $\vartheta = 1$ and $\nu = 0.01$ and prescribed initial conditions (IC) $u_0$ from a Gaussian process with Karhunen–Loève expansion 
\begin{equation}
    \label{eq: burgers_karhunen_loeve_exp}
    u_0(x) = \sum_{j=1}^{50} \frac{1}{j^2} \sin(j \pi x) Z_j, \quad \text{with } Z_j \sim N(0,1), \quad 
    x \in [0,1].
\end{equation}
The condition used for the fixed IC case is depicted in \Cref{fig: burgers_fixed_vary_IC_FC}(A) while some samples of ICs are shown in \Cref{fig: burgers_fixed_vary_IC_FC}(B-D) for the case when ICs vary also in the experiment. 

\begin{figure}[ht!]
    \centering
    \vspace{2ex}
    \begingroup
    \footnotesize
    \setlength{\tabcolsep}{0pt}
    \begin{tabular}{cccc}
        \begin{overpic}[scale=0.33]{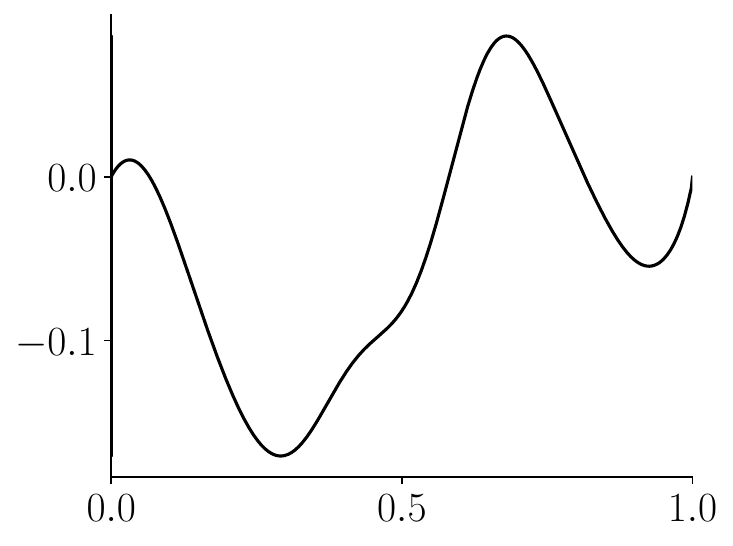} 
        \put(0,37){\rotatebox{90}{{ $u$}}}
        \put(52,-2){{$x$}}
        \put(30,82){{ (A) Fixed IC}}
        \end{overpic}
        &
        \begin{overpic}[scale=0.33]{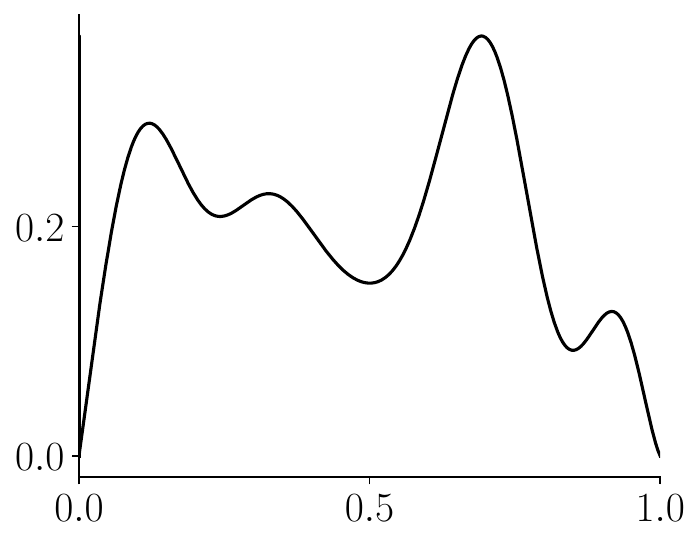}
        \put(52,-2){{$x$}}
        \put(23,86){{ (B) Varying IC 1}}
        \end{overpic}
        & 
        \begin{overpic}[scale=0.33]{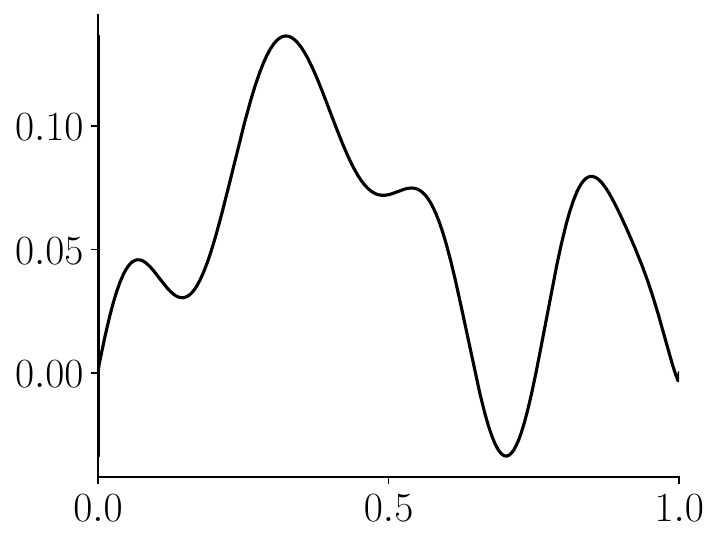}
        \put(49,-2){{$x$}}
        \put(23,84.5){{(C) Varying IC 2}}
        \end{overpic}
        & 
        \begin{overpic}[scale=0.33]{figures/burgers_numerics//benchmark_varyICs/u0_varyIC3.pdf}
        \put(49,-2){{$x$}}
        \put(23,87){{(D) Varying IC 3}}
        \end{overpic}
    \end{tabular}

    \endgroup
        \caption{Initial conditions sampled from the process \eqref{eq: burgers_karhunen_loeve_exp} for examples of Burgers' PDE \eqref{burgers-PDE} increasing number of observations: 
(A) Condition used in the fixed IC case; (B-D) Sample conditions for the varying IC case.
}
    \label{fig: burgers_fixed_vary_IC_FC}
\end{figure}

We then solved each PDE and subsampled the observed data for values $N_\mY = 10, 30, 50, 100, 200, 300, 400, 500, 600$ in the interior and  $N_{\partial \mY} = 131$ at the boundary from a set of $26 \times 31$ Chebyshev collocation points in space-time. In \Cref{fig: training_data_Burgers}, we present example training data overlaid on the contour plots of the solutions for the fixed initial condition, as shown in \Cref{fig: burgers_fixed_vary_IC_FC}(A), while varying the number of observations $N_\mY = 20, 50, 200$. Additionally, we include instances of the training data for the case where the initial condition varies, corresponding to the ICs introduced earlier in \Cref{fig: burgers_fixed_vary_IC_FC}(B-D).

\begin{figure}[ht!]
    \centering
    \begingroup
    \footnotesize
    \setlength{\tabcolsep}{0pt}
    \begin{tabular}{lcl}
        \begin{overpic}[scale=0.43]{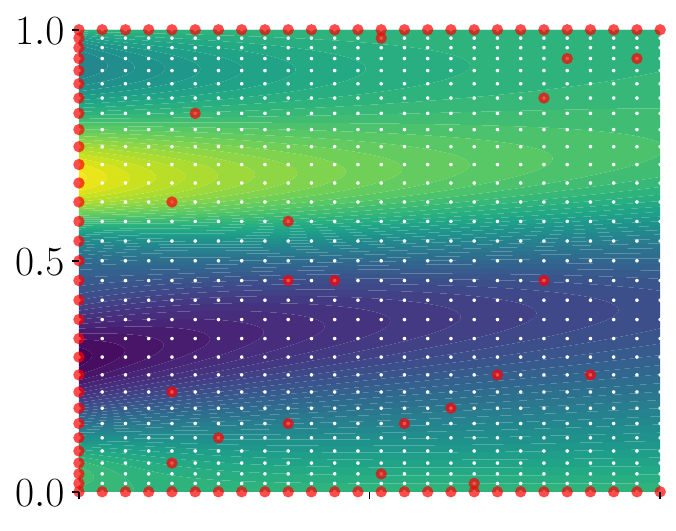} 
        \put(42,77){{ $N_\mY=20$}}
        \put(-2,38){\rotatebox{90}{{$x$}}}
        \put(-10,20){\rotatebox{90}{{(A) Fixed IC}}}
        \end{overpic}
        &
        \hspace{1ex}\raisebox{0.05 \height}{
        \begin{overpic}[scale=0.43]{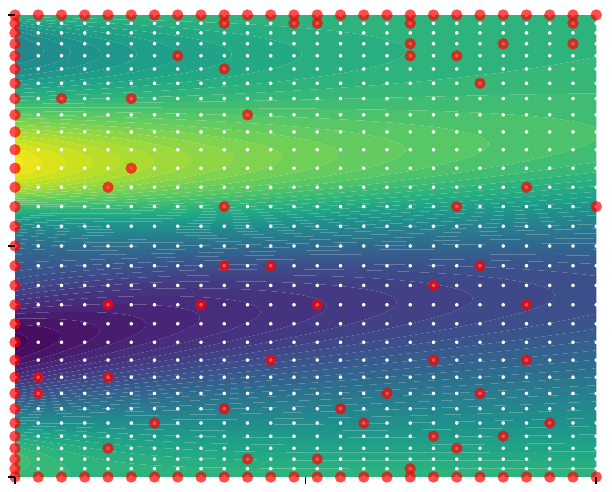}
        \put(35,81){{ $N_\mY=50$}}
        \end{overpic}
        }
        & 
        \hspace{1ex}
        \raisebox{0.05 \height}{
        \begin{overpic}[scale=0.43]{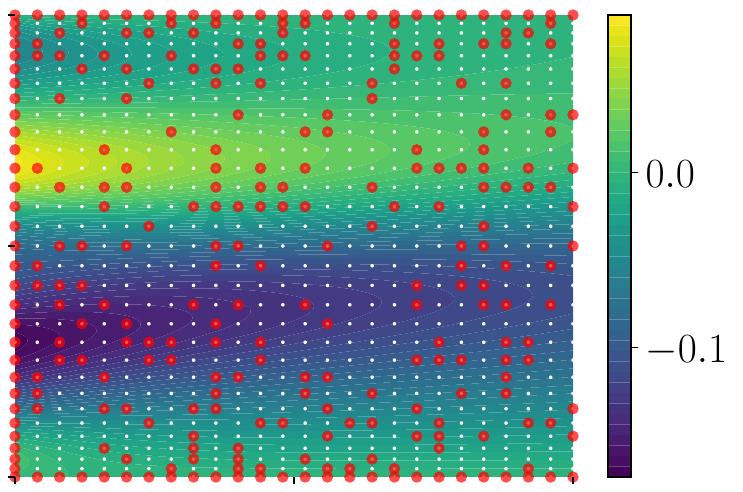}
        \put(25,66.5){{ $N_\mY=200$}}
        \end{overpic}
        }
        \\
        \begin{overpic}[scale=0.43]{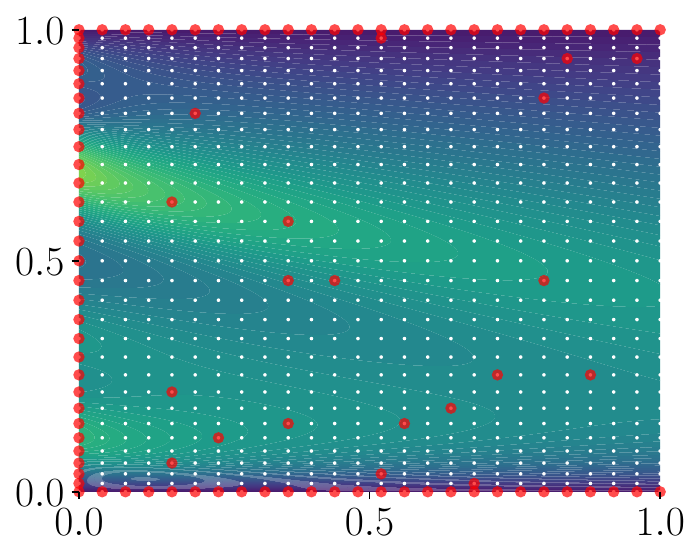} 
        \put(-2,39){\rotatebox{90}{{ $x$}}}
        \put(49,-2){{ $t$}}
        \put(-10,20){\rotatebox{90}{{(B) Varying IC}}}
        \end{overpic}
        &
        \hspace{0ex}
        \begin{overpic}[scale=0.43]{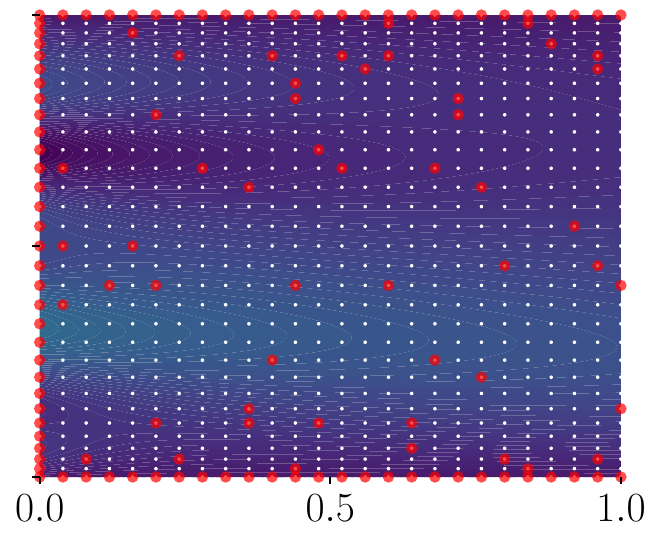}
        \put(46,-2){{ $t$}}
        \end{overpic}
        & 
        \hspace{0.5ex}
        \begin{overpic}[scale=0.43]{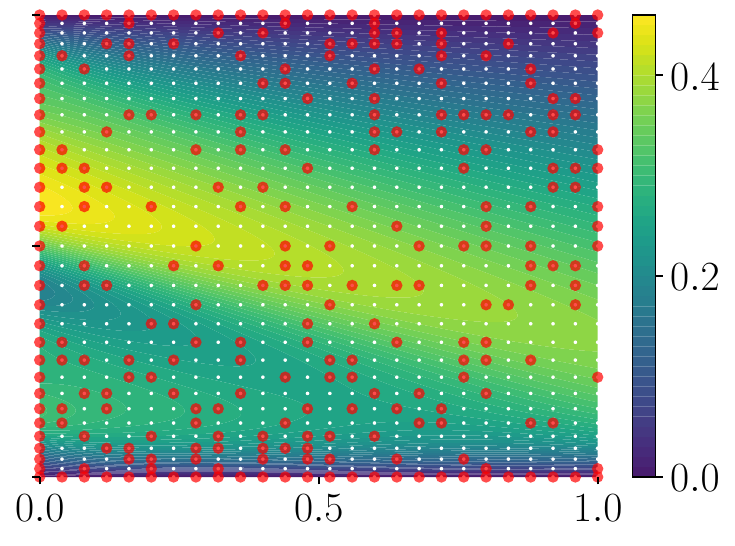}
        \put(42,-2){{$t$}}
        \end{overpic}
    \end{tabular}

    \endgroup
    \caption{Samples of training solutions along with their $N$ observation points (red) and the collocation points (white) for Burgers' PDE \eqref{burgers-PDE} with ICs sampled from the GP \eqref{eq: burgers_karhunen_loeve_exp}: (A) Solution obtained using a fixed IC depicted in \Cref{fig: burgers_fixed_vary_IC_FC}(A); (B) Three solutions obtained from solving the PDE with different ICs depicted in \Cref{fig: burgers_fixed_vary_IC_FC}(B-D).}
    \label{fig: training_data_Burgers}
\end{figure}

We took $\UK$, to be RBF with a diagonal matrix $\Sigma$, where its entries were determined using maximum likelihood estimation (MLE) based on the available observation points $u(Y^1)$ for each number of observations case, basically 
by a first application of the 2-step method. 
In both cases for this experiment, we chose $\PK$ to be a polynomial kernel of degree 2, with the shift parameter set to $c = \operatorname{mean}(S_{2:})$ and the scaling matrix given by $B = \operatorname{diag}(\operatorname{cov}(S_{2:}))^{-1}$, where $\operatorname{cov}(A)$ represents the sample covariance of $A$. Specifically,

\begin{equation} \label{eq: S_to_define_B}
S = \Phi(\whu_{2}, Y^1) := 
\begin{bmatrix}
    \Phi(\whu_{2}, y_1) \\
    \vdots
    \\
    \Phi(\whu_{2}, y_N)
\end{bmatrix}
\in \mathbb{R}^{N \times 5},
\end{equation}
where 
 $\whu_2$ is the learned solution obtained from the first step of the 2-step KEqL method, using the MLE-fitted hyperparameters mentioned above.

Recall that for SINDy, we used the same kernel $\UK$ for derivative approximation and employed a polynomial library for the dictionary terms, which matched the features of the degree-2 polynomial used in KEqL. These same features were also used by PINN-SR. For PINN-SR, we fixed the hyperparameters across different experiments. For example, in the fixed IC case, where we aim to learn a PDE by varying the number of observation points, we selected a set of hyperparameters—such as network size, loss function weights, and iteration steps—and kept them the same for every experiment with different numbers of observation points. The same approach was applied when varying the initial condition. This strategy is consistent with the kernel method and SINDy, as we also maintained the same hyperparameters across different experiments for those methods.

For the loss function \eqref{transformed_opt}, we followed the definitions in \eqref{eq: hyperparams}, with the exception that $I = 26 \times 31$ and we set $\theta_u = 1e-9$ and $\theta_P = 1e-11$ as manually chosen hyperparameters.

Convergence history of the optimization algorithm for some sample cases of learning the equation is shown in \Cref{fig: burgers_loss_hist}(A-B).

\subsubsection{An experiment with a smoothed 
shock} \label{subsubsec: exp_shock}

Here, we considered the PDE \eqref{burgers-PDE} with coefficients $\vartheta = 5$ and $\nu = 0.001$, using the prescribed IC
\begin{equation}
u_0(x) = \frac{1}{4} \left(x \sin(\pi x) - \sin(5 \pi x) - \sin(3 \pi x)\right).
\label{eq: ic_burgers_one_shot}
\end{equation}
We then solved the PDE, and the solution's behavior at the initial and final times is shown in \Cref{fig: burgers_IC_FC}(A). Then we subsampled observation points for $N_\mY = 60$ in the interior and $N_{\partial \mY} = 131$ at the boundary from a chosen $26 \times 31$ Chebyshev collocation grid in space-time.

We took $\UK$, to be RBF with a diagonal matrix $\Sigma$, where its entries were determined using MLE based on the $N$ observation points. We chose $\PK$ to be a polynomial kernel of degree 2, with the shift parameter and the scaling matrix constructed using $S$ as in \eqref{eq: S_to_define_B}. Thus, once again we used the hyperparameter information from the 2-step KEqL methodology to guide the selection of hyperparameters for the 1-step KEqL.
For the loss function \eqref{one-step-representer-relaxed}, we followed the definitions in \eqref{eq: hyperparams}, with the exception that $I = 26 \times 31$ and we set $\theta_u = 1e-9$ and $\theta_P = 1e-11$ as manually chosen hyperparameters.
For SINDy we used the same kernel $\UK$ for derivative approximation and employed a polynomial library for the dictionary terms, which matched the features of the degree-2 polynomial used in KEqL. These same features were also used by the PINN-SR method.

We clearly observe that the 1-step KEqL method remains competitive in shock filtering. This is evident from the relative filtering errors presented in \Cref{fig: burgers_errs_table}, where the 1-step KEqL achieves comparable results in shock recovery to those of SINDy and PINN-SR. Convergence history of the optimization algorithm for learning the equation is shown in \Cref{fig: burgers_loss_hist}(C).

\subsubsection{A one-shot learning experiment} \label{subsubsec: one-shot}

Here, we considered the PDE \eqref{burgers-PDE} with coefficients $\vartheta = 0.5$ and $\nu = 0.01$, using the prescribed IC given by \eqref{eq: ic_burgers_one_shot}. We then solved the PDE, whose solution behavior at initial and final times are shown in \Cref{fig: burgers_IC_FC}(B). Here we subsampled $N=100$ points placed on the boundary, see first column in \Cref{fig:burgers_main}(A). 
The placement of the points amounts to observing the initial and end conditions along with the boundary values in the physical domain.
To generate data for operator learning, we additionally solved the same PDE with a new initial condition, $u_0(x) = -x \sin(2 \pi x)$. The solution behavior at the initial and final times is shown in \Cref{fig: burgers_IC_FC}(C).

\begin{figure}[ht!]
    \centering
    \vspace{2ex}
    \begingroup
    \footnotesize
    \setlength{\tabcolsep}{0pt}
    \begin{tabular}{ccc}
        \begin{overpic}[scale=0.4]{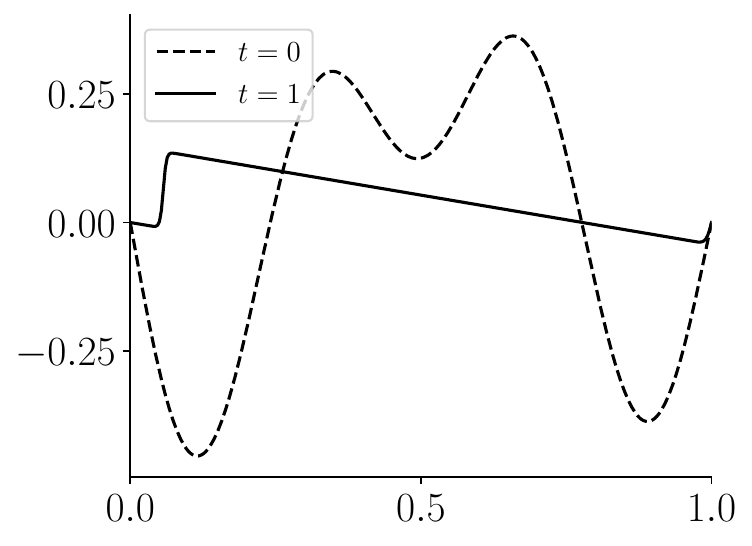} 
        \put(0,37){\rotatebox{90}{{ $u$}}}
        \put(54,-2){{$x$}}
        \put(40,75){{ (A) Ex. 1}}
        \end{overpic}
        &
        \begin{overpic}[scale=0.4]{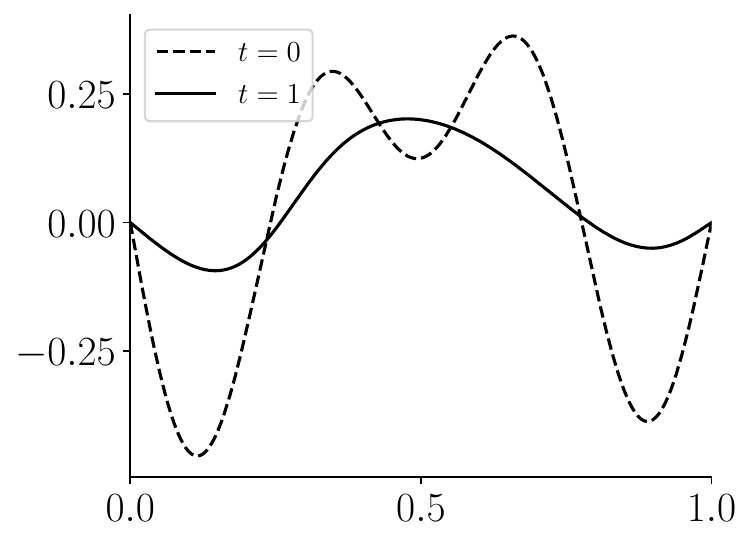}
        \put(54,-2){{$x$}}
        \put(40,75){{ (B) Ex. 2}}
        \end{overpic}
        & 
        \begin{overpic}[scale=0.4]{figures/burgers_numerics/oneshot_onlybdry/onlydbry_NEW_u_ICFC.pdf}
        \put(51,-2){{$x$}}
        \put(35,80){{(C) New IC}}
        \end{overpic}
    \end{tabular}

    \endgroup
        \caption{Behavior of solutions at initial and final time for one-shot learning examples of Burgers' PDE \eqref{burgers-PDE}: 
(A) Shock development (\Cref{subsubsec: exp_shock}); (B) Smooth transition  (\Cref{subsubsec: one-shot}); (C) Solution behavior to the PDE in (B) for the depicted new initial condition.
}
    \label{fig: burgers_IC_FC}
\end{figure}
In this experiment we took
 $\UK$ to be RBF with the diagonal matrix $\Sigma = 0.0125 I$. To learn the equation, we used $\tilde{\PK}$ as a polynomial kernel of degree 2, where the shift parameter and scaling matrix were constructed using $S$, as described in \eqref{eq: S_to_define_B}.
Here again, we used the hyperparameter information from the 2-step KEqL methodology to guide the selection of hyperparameters for the kernel $\PK$ of the 1-step KEqL.
Recall that a similar rationale was used for selecting hyperparameters in SINDy and PINN-SR, including the choice of dictionary terms, as in the previous experiment discussed in \Cref{subsubsec: exp_shock}. 
For the loss function \eqref{one-step-representer-relaxed}, we omitted the terms associated with the RKHS norms, divided all the terms by 10 and selected values for $\sigma_u^2$ and $\sigma_P^2$ as defined in \eqref{eq: hyperparams}. For which we set $I = 26 \times 26$. Additionally, we manually chose $\theta_u = 100$ and $\theta_P = 1$ as hyperparameters. 

We clearly observe that the 1-step KEqL method outperforms the other methods in filtering the smooth solution. This is evident from results presented in \Cref{fig: burgers_errs_table}, where the one-step KEqL achieves a two-order-of-magnitude improvement over the other methods in filtering and a one-order-of-magnitude improvement in the operator learning task for the smooth case.
The one-shot learning example for the smooth case further confirms that effective filtering is essential for accurately capturing the operator. While PINN-SR and SINDy provided a reasonable reconstruction of the solution, the superior filtering of the 1-step KEqL method resulted in a significantly better operator learning error for a new initial condition.
We recall that the variational solution proposed in \eqref{eq:operator-learning} was used to solve the recovered PDE in each of the methods for this experiment, employing the same kernel $\UK$, to report the operator learning results.
Convergence history of the optimization algorithm for learning the equation is shown in \Cref{fig: burgers_loss_hist}(D).

\begin{figure}
    \centering
    \begin{tabular}{cc|ccc}
         & & & Method &  \\
        \hline
         & Ex. &1-step & PINN-SR & SINDy \\
        \hline
         \multirow{2}{*}{$\mathcal{R}_{\text{filter}}$} & 1 &$3.9e^{-1}$ & $3.4e^{-1}$ & $6.2e^{-1}$ \\
          & 2&$3.5e^{-3}$ & $1.5e^{-1}$ & $2.9e^{-1}$ \\
         \hline
         $\mathcal{R}_{\text{opl}}$& 2&$1.9e^{-2}$ & $4.7e^{-1}$ & $5.4e^{-1}$ \\
        \hline  
    \end{tabular}
    \caption{Quantitative values of relative filtering and operator learning errors for Burgers' PDE \Cref{burgers-PDE} with different ICs corresponding to the one-shot learning examples Ex. 1 (\Cref{subsubsec: exp_shock}) and Ex. 2 (\Cref{subsubsec: one-shot}) in  \Cref{fig:burgers_main}(C-E).}
    \label{fig: burgers_errs_table}
\end{figure}

\begin{figure}[ht!]
    \centering
    \vspace{3ex}
    \begingroup
    \setlength{\tabcolsep}{0pt}
    \begin{tabular}{cccc}
        \begin{overpic}[scale=0.3]{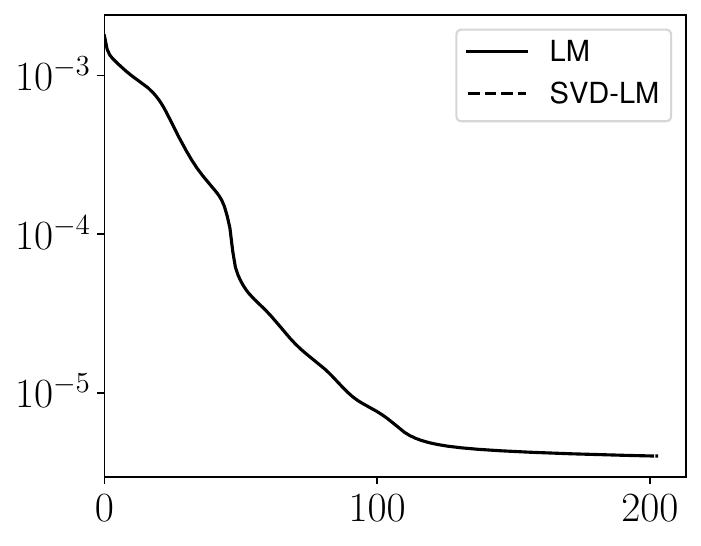}
        \put(47,-7){{\footnotesize \rm Steps}}
        \put(-6,33){\rotatebox{90}{{\footnotesize Loss}}}
        \put(30,82){{\footnotesize (A) Fixed IC}}
        \end{overpic}
        &
        \begin{overpic}[scale=0.3]{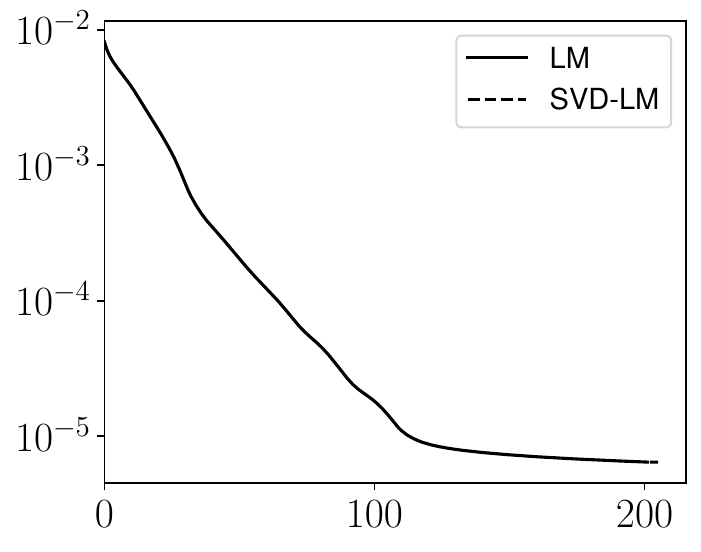}
        \put(47,-7){{\footnotesize \rm Steps}}
        \put(25,82){{\footnotesize (B) Varying ICs}}
        \end{overpic}
        &
        \begin{overpic}[scale=0.3]{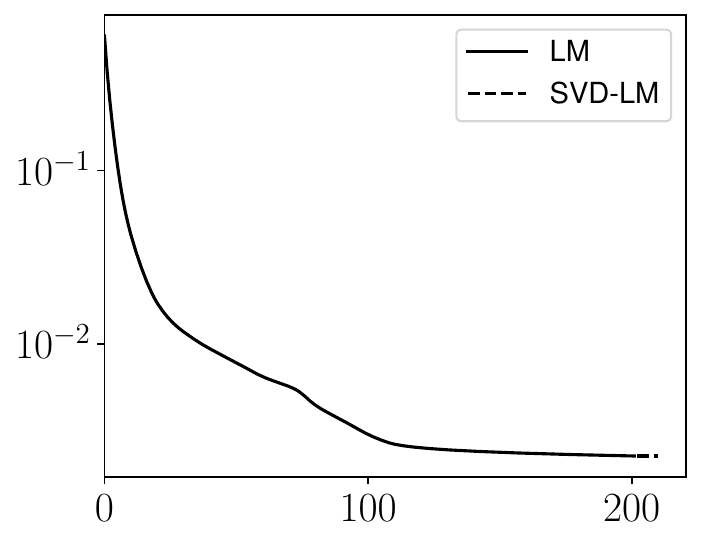}
        \put(47,-7){{\footnotesize \rm Steps}}
        \put(36,82){{\footnotesize (C) Ex. 1}}
        \end{overpic}
        &
        \begin{overpic}[scale=0.3]{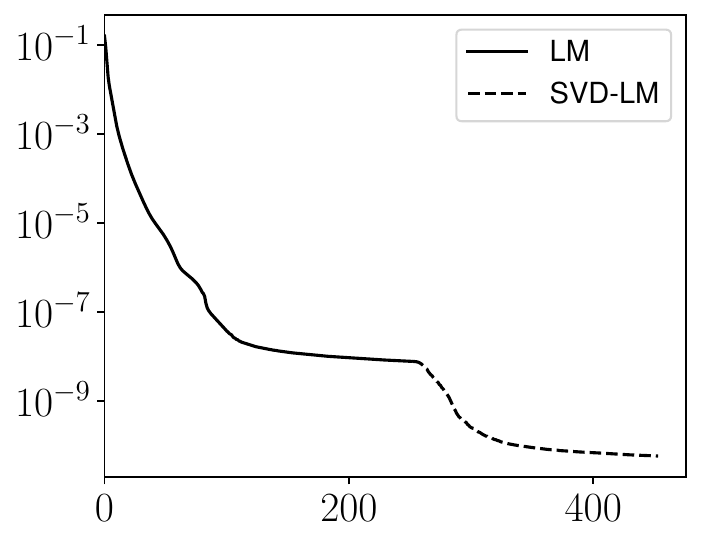}
        \put(44,-7){{\footnotesize \rm Steps}}
        \put(35,82){{\footnotesize (D) Ex. 2}}
        \end{overpic}
        
    \end{tabular}

    \endgroup
    \caption{Convergence history of the LM algorithm for 1-step KEqL for Burgers' PDE \eqref{burgers-PDE}: (A-B) Sample cases using fixed and varying IC, using $M=1$ training pairs with number of observations $N_\mY=20$ corresponding to the two cases depicted in the  first column in \Cref{fig: training_data_Burgers}; 
    (C) Shock development (\Cref{subsubsec: exp_shock}); (D) One-shot smooth case where only observed on the boundary (\Cref{subsubsec: one-shot}).}
    \label{fig: burgers_loss_hist}
\end{figure}

\subsection{Darcy's flow PDE}\label{sec:numerics:darcy-SI}
In this example we performed a systematic study of the 
performance of 1-step (its reduced version) and 2-step KEqL for learning an 
elliptic PDE with a variable diffusion coefficient.
In particular, we investigate the ID and OOD performance 
in terms of filtering, equation learning, and operator learning.

Let us recall our problem setting. Here we took $\mY = (0,1) \times (0,1)$ and  considered the problem
\begin{equation}\label{darcy_PDE_SI}
\left\{
\begin{aligned}
        \P(u) &= \operatorname{div}\left(a\nabla u\right) = f(x), &&a(x) = \exp\left(\sin( \cos(x_1)+ \cos(x_2))\right), && x\in \mY, \\
        u &= g(x), && x\in \partial\mY.
\end{aligned}
\right.
\end{equation}
To generate the training data we drew $M = 2,4,8,16,32$ functions $u \sim \mathcal{GP}(0,\UK_{\rm RBF})$ with $\Sigma = 0.5^2 I$ and took $f:= \P(u)$, also considering the value of $u$ at $\partial\mY$ as the boundary condition. Each $u$ was then subsampled observation points with $N_\mY = 2,4,8$ in the interior and fixed $N_{\partial \mY} = 56$ at the boundary that were picked randomly from a $15\times 15$ uniform collocation grid where the PDE was enforced. Some training tuples with their observation and collocation points are depicted in \Cref{fig:darcy_training_us_and_fs}. We recall that the ID data was drawn from the same distribution that the training data was drawn from. The test data sets were generated similarly, with the OOD data drawn from a less regular GP using an RBF kernel with $\Sigma = 0.4^2 I$. Some ID and OOD samples are shown at \Cref{fig: darcy_ID_OOD_contours}(A). The fine grid $Y_{\rm test}$ where we tested the equation and operator learning error was chosen to be a uniform grid of size $100^2$.

\begin{figure}[ht!]
\vspace{3ex}
\begingroup
\setlength{\tabcolsep}{0pt}
\begin{tabular}{lcccl}
\centering
\begin{overpic}[scale=0.3]{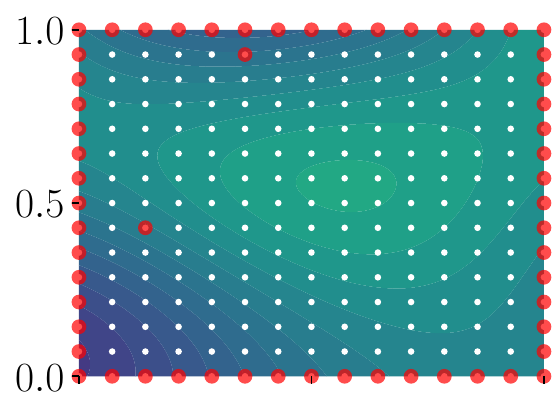}
\put(-6,34){\rotatebox{90}{{\scriptsize $x_2$}}}
\put(50,71){{\footnotesize $u^1$}}
\end{overpic}

&

\raisebox{0.05 \height}{
\begin{overpic}[scale=0.3]{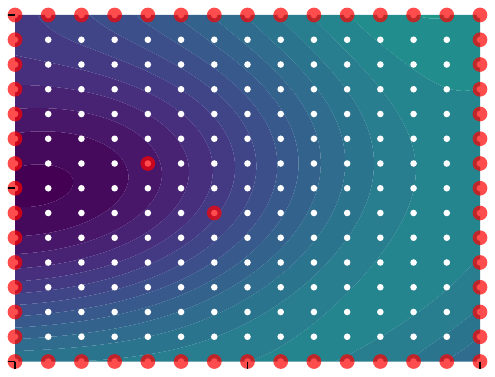}
\put(45,75){{\footnotesize $u^2$}}
\end{overpic}
}
&

\raisebox{0.05 \height}{
\begin{overpic}[scale=0.3]{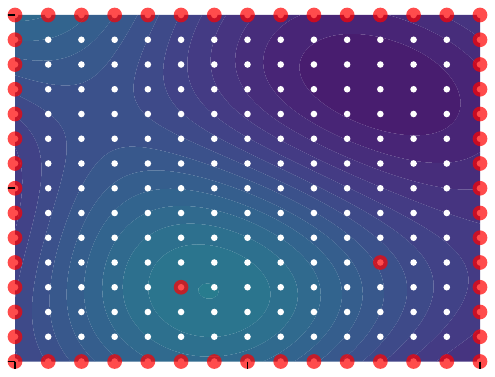}
\put(45,75){{\footnotesize $u^3$}}
\end{overpic}
}
&

\raisebox{0.05 \height}{
\begin{overpic}[scale=0.3]{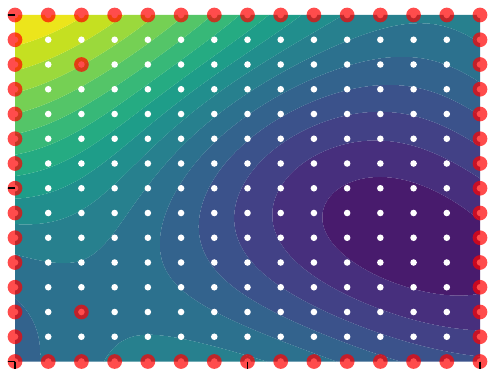}
\put(45,75){{\footnotesize $u^4$}}
\end{overpic}
}
&

\raisebox{0.05 \height}{
\begin{overpic}[scale=0.3]{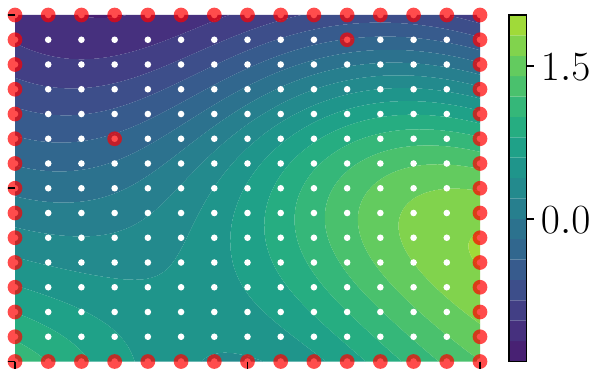}
\put(38,61){{\footnotesize $u^5$}}
\end{overpic}
}
\\

\begin{overpic}[scale=0.3]{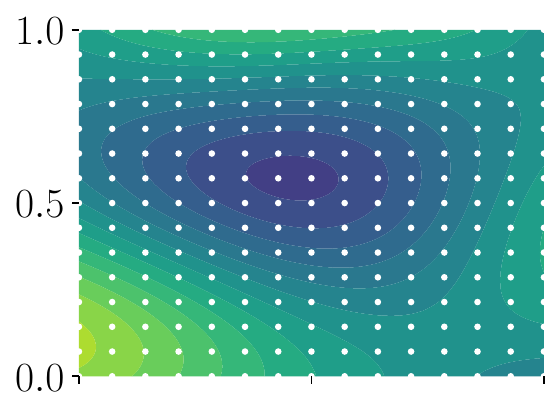}
\put(47,72){{\footnotesize $f^1$}}
\put(-6,34){\rotatebox{90}{{\scriptsize $x_2$}}}
\put(-26,18){\rotatebox{90}{{\footnotesize(A) Interior obs. $N_\mY=2$}}}
\end{overpic}
&
\raisebox{0.05 \height}{
\begin{overpic}[scale=0.3]{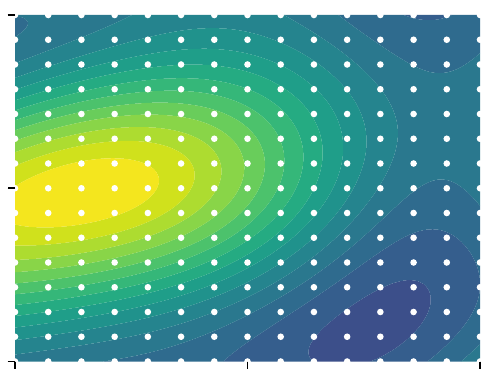}
\put(43,77){{\footnotesize $f^2$}}
\end{overpic}
}
&
\raisebox{0.05 \height}{
\begin{overpic}[scale=0.3]{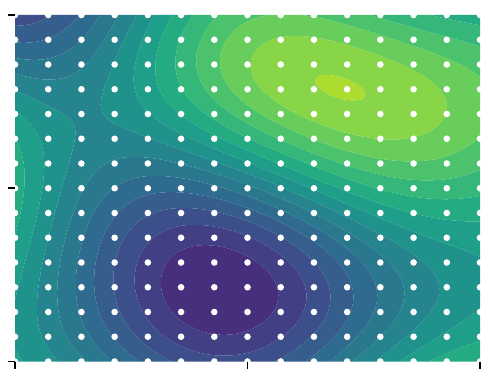}
\put(43,77){{\footnotesize $f^3$}}
\end{overpic}
}
&
\raisebox{0.05 \height}{
\begin{overpic}[scale=0.3]{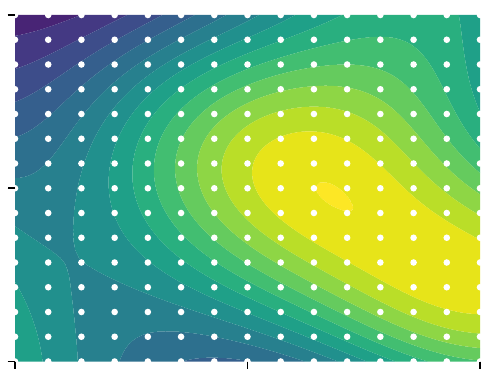}
\put(43,77){{\footnotesize $f^4$}}
\end{overpic}
}
&
\raisebox{0.05 \height}{
\begin{overpic}[scale=0.3]{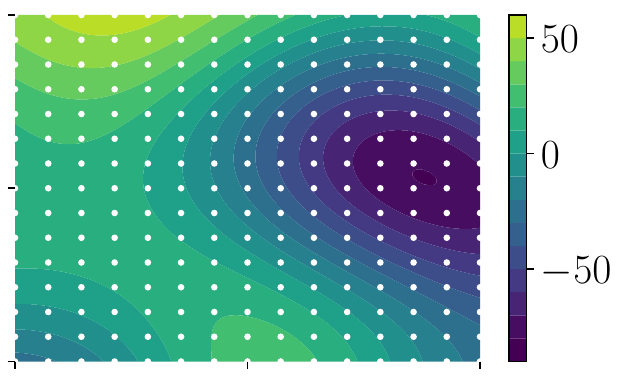}
\put(37,60){{\footnotesize $f^5$}}
\end{overpic}
}
\\
\arrayrulecolor{gray}\hline
\rule{-3pt}{17ex}
\begin{overpic}[scale=0.3]{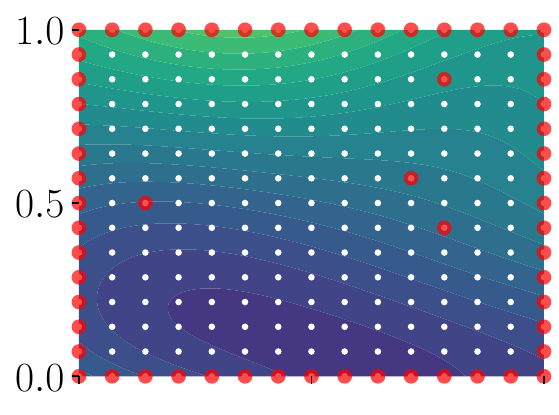}
\put(-6,34){\rotatebox{90}{{\scriptsize $x_2$}}}
\put(50,71){{\footnotesize $u^1$}}
\end{overpic}
&
\raisebox{0.05 \height}{
\begin{overpic}[scale=0.3]{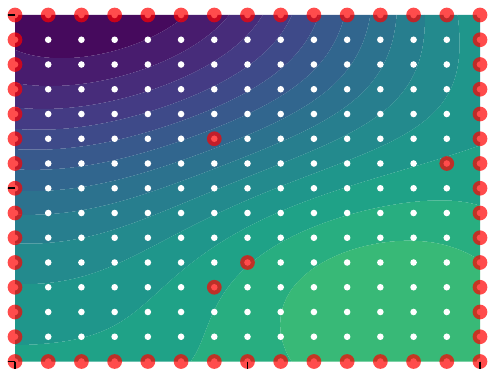}
\put(45,75){{\footnotesize $u^2$}}
\end{overpic}
}
&

\raisebox{0.05 \height}{
\begin{overpic}[scale=0.3]{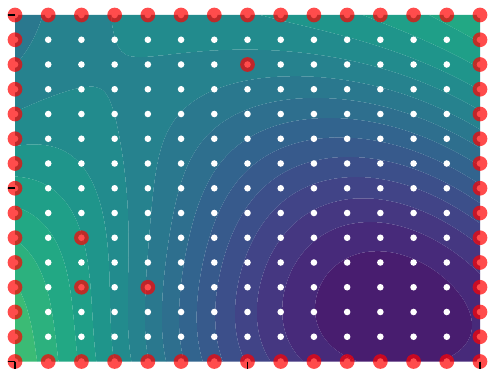}
\put(45,75){{\footnotesize $u^3$}}
\end{overpic}
}
&

\raisebox{0.05 \height}{
\begin{overpic}[scale=0.3]{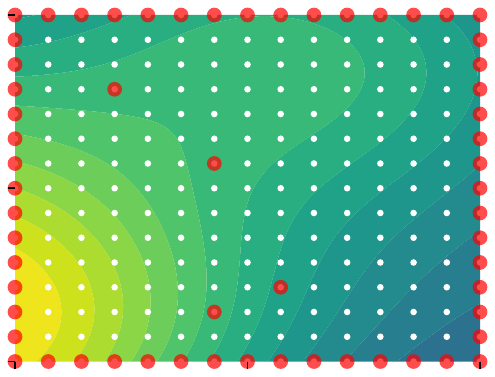}
\put(45,75){{\footnotesize $u^4$}}
\end{overpic}
}
&

\raisebox{0.05 \height}{
\begin{overpic}[scale=0.3]{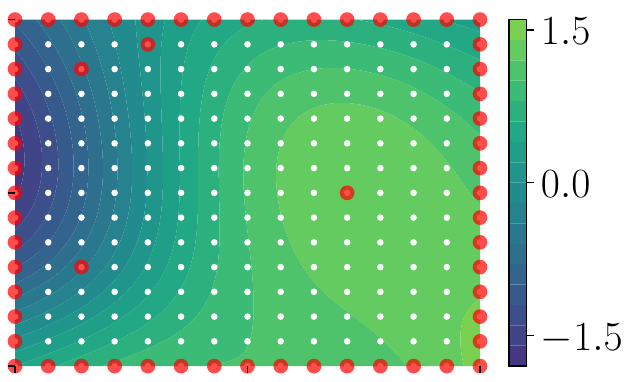}
\put(38,58){{\footnotesize $u^5$}}
\end{overpic}
}
\\

\begin{overpic}[scale=0.3]{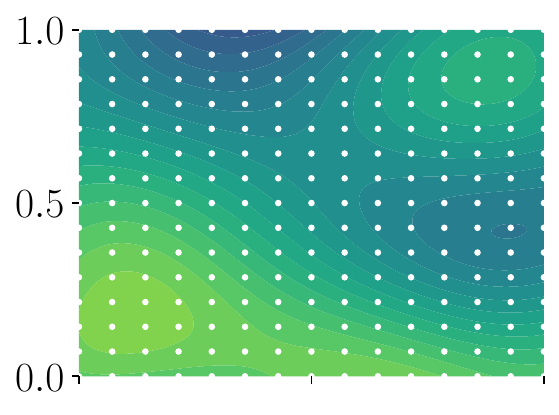}
\put(47,72){{\footnotesize $f^1$}}
\put(-6,34){\rotatebox{90}{{\scriptsize $x_2$}}}
\put(-26,20){\rotatebox{90}{{\footnotesize(B) Interior obs. $N_\mY=4$}}}
\end{overpic}
&
\raisebox{0.05 \height}{
\begin{overpic}[scale=0.3]{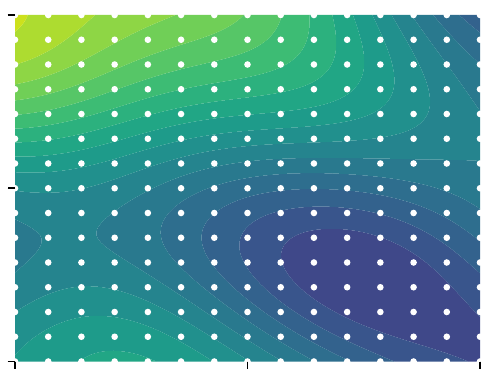}
\put(43,77){{\footnotesize $f^2$}}
\end{overpic}
}
&
\raisebox{0.05 \height}{
\begin{overpic}[scale=0.3]{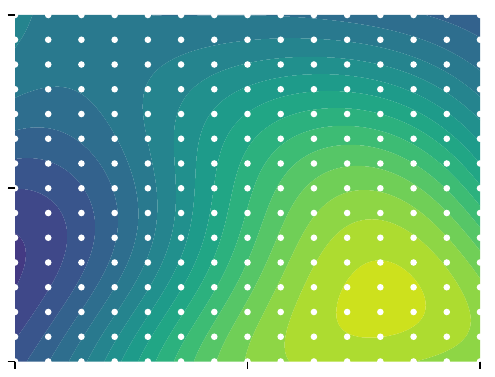}
\put(43,77){{\footnotesize $f^3$}}
\end{overpic}
}
&
\raisebox{0.05 \height}{
\begin{overpic}[scale=0.3]{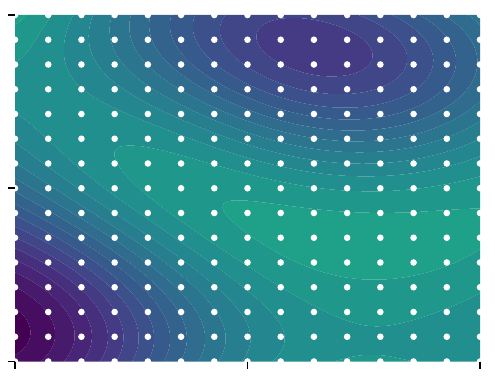}
\put(43,77){{\footnotesize $f^4$}}
\end{overpic}
}
&
\raisebox{0.05 \height}{
\begin{overpic}[scale=0.3]{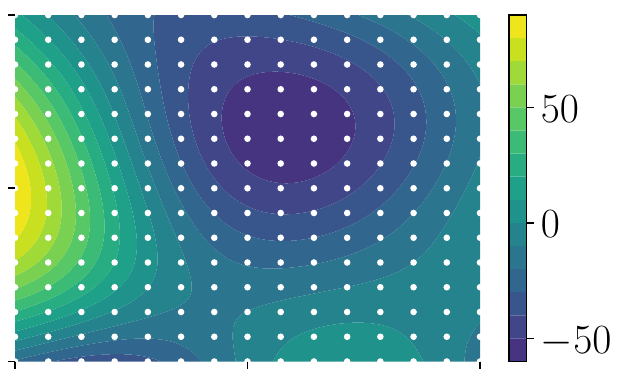}
\put(37,60){{\footnotesize $f^5$}}
\end{overpic}
}
\\
\arrayrulecolor{gray}\hline
\rule{-3pt}{17ex}
\begin{overpic}[scale=0.3]{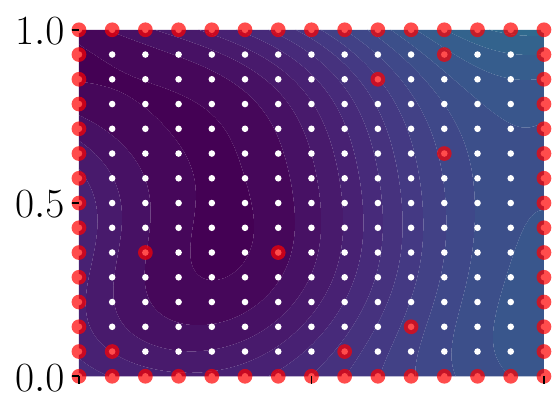}
\put(-6,34){\rotatebox{90}{{\scriptsize $x_2$}}}
\put(50,71){{\footnotesize $u^1$}}
\end{overpic}
&
\raisebox{0.05 \height}{
\begin{overpic}[scale=0.3]{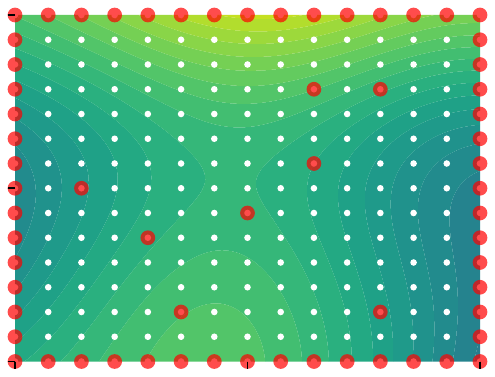}
\put(45,75){{\footnotesize $u^2$}}
\end{overpic}
}
&

\raisebox{0.05 \height}{
\begin{overpic}[scale=0.3]{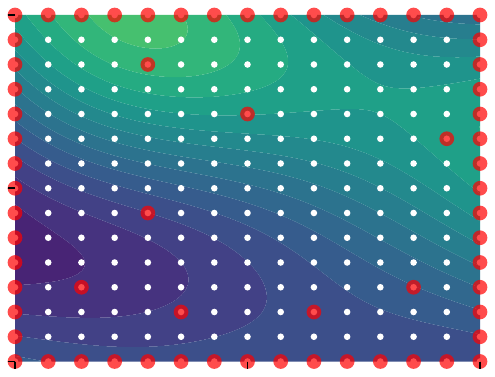}
\put(45,75){{\footnotesize $u^3$}}
\end{overpic}
}
&

\raisebox{0.05 \height}{
\begin{overpic}[scale=0.3]{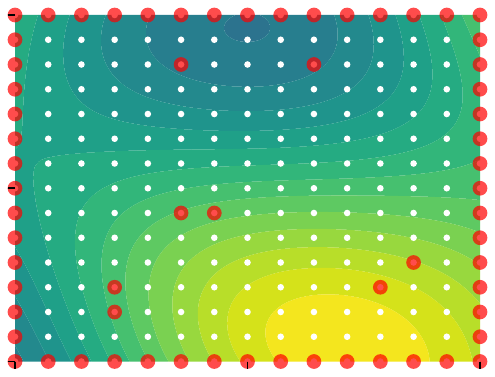}
\put(45,75){{\footnotesize $u^4$}}
\end{overpic}
}
&

\raisebox{0.05 \height}{
\begin{overpic}[scale=0.3]{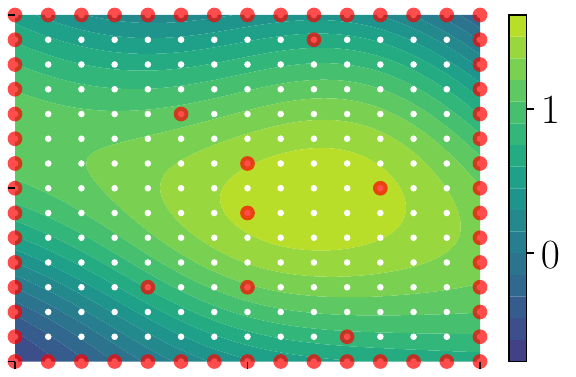}
\put(38,64){{\footnotesize $u^5$}}
\end{overpic}
}
\\
\begin{overpic}[scale=0.3]{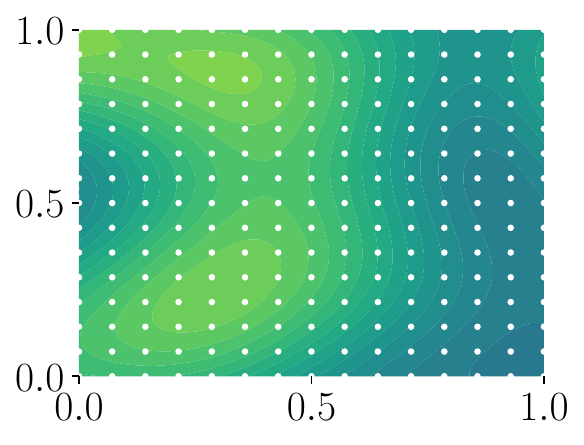}
\put(47,74){{\footnotesize $f^1$}}
\put(-6,37){\rotatebox{90}{{\scriptsize $x_2$}}}
\put(49,-3){{\scriptsize $x_1$}}
\put(-24,20){\rotatebox{90}{{\footnotesize(C) Interior obs. $N_\mY=8$}}}
\end{overpic}
&
\begin{overpic}[scale=0.3]{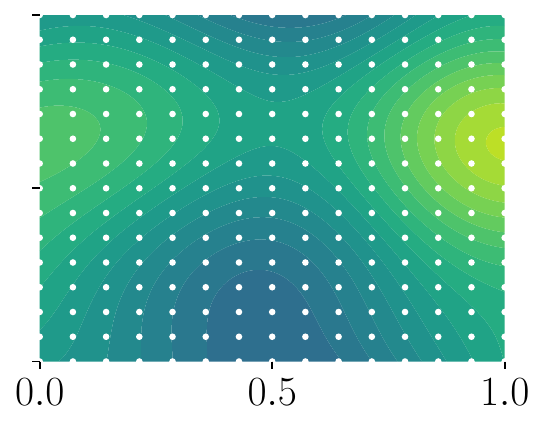}
\put(43,79){{\footnotesize $f^2$}}
\put(46,-3){{\scriptsize $x_1$}}
\end{overpic}
&
\begin{overpic}[scale=0.3]{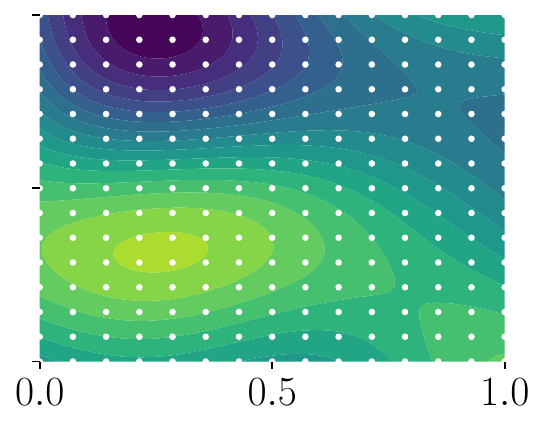}
\put(43,79){{\footnotesize $f^3$}}
\put(46,-3){{\scriptsize $x_1$}}
\end{overpic}
&
\begin{overpic}[scale=0.3]{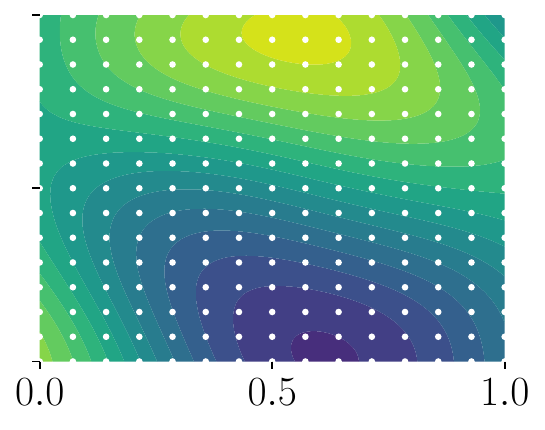}
\put(43,79){{\footnotesize $f^4$}}
\put(46,-3){{\scriptsize $x_1$}}
\end{overpic}
&
\begin{overpic}[scale=0.3]{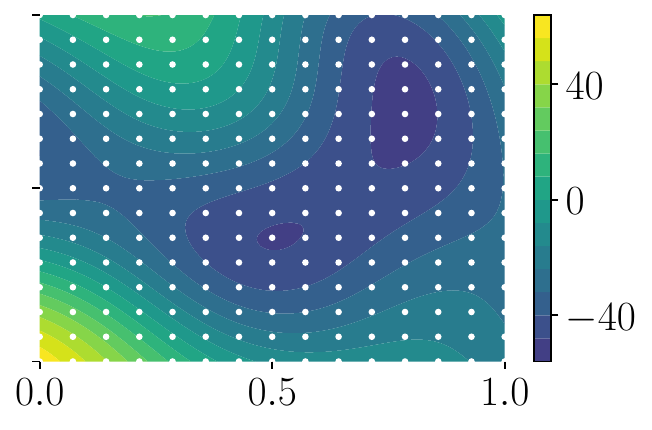}
\put(37,66){{\footnotesize $f^5$}}
\put(38,-3){{\scriptsize $x_1$}}
\end{overpic}
\end{tabular}
\endgroup
\caption{Samples of training tuples $(u^m,f^m)$ with different number of observation points for Darcy's flow 
PDE \eqref{darcy_PDE}: (A-C) Training tuples  observation points (red) and collocation points (white).}
\label{fig:darcy_training_us_and_fs}
\end{figure}

We used $\UK = \UK_{\rm RBF}$ with $\Sigma = 0.5^2 I$ for learning $u$ and $\PK = \PK_{\rm hybrid}$ where an RBF with $\Sigma = 0.4^2 I$ was used for the spatial variables $x_1,x_2$ and a polynomial kernel of first degree for the variables $\{L_1 u,\dots,L_6u\}$ where the shift was chosen to be $c = \operatorname{mean}(S_{2:})$ and the scaling matrix $B = \left(\operatorname{diag}(\operatorname{cov}(S_{2:}))\right)^{-1}$ where 
\begin{equation*}
S = \begin{bmatrix}
    \Phi(\whu^1_2, Y^1) \\
    \vdots \\
    \Phi(\whu^M_2, Y^M)
\end{bmatrix} \in \R^{\sum_{m=1}^M |Y^m| \times 8}.    
\end{equation*}
where $\whu^m_2$ is the learned solution obtained from the first step of the 2-step KEqL method, using the MLE-fitted hyperparameters mentioned above. In summary, we also used the hyperparameter information from the 2-step KEqL methodology to guide the selection of hyperparameters for the kernels in 1-step KEqL.

 To scale up equation learning when using many function tuples, we employed a fixed basis (see \Cref{app:algorithms:nystromapproximation}) with $I = 200$ for learning the equation. Additionally, we leveraged the block arrowhead matrix structure within the optimization algorithm (see \Cref{app:algorithms:blockmatrixstructure}). For the loss function \eqref{transformed_opt}, we followed the definitions in \eqref{eq: hyperparams} except we took $\sigma_P^2 = \sqrt{ I/ \theta_P}$ and  set $\theta_u = 5e-12$ and $\theta_P = 1e-12$ as hyperparameters, selected manually.

To further illustrate the equation and operator learning errors in relation to the error plot in \Cref{fig:darcy_main}, we show samples of the contour error plots in \Cref{fig: darcy_ID_OOD_contours} and the convergence history in  \Cref{fig: darcy_loss_hist} that uses the block arrowhead matrix structure LM-type algorithm for the case where the equation is learned using $M = 8$ training tuples and $N_\mY = 8$ observation points of the solution.  
We recall that the variational solution proposed in \eqref{eq:operator-learning} was used to solve the recovered PDE for both the 1-step and 2-step KEqL methods, employing the same kernel $\UK$, to report the operator learning results.

In \Cref{fig: darcy_ID_OOD_contours}(A), we show some ID and OOD solution samples used to test the learned equation for both the 1-step and 2-step KEqL methods. In \Cref{fig: darcy_ID_OOD_contours}(B), we present the equation learning errors, where the values typically differ by one order of magnitude between 1-step and 2-step KEqL in favor of the joint methodology, with larger errors appearing near the boundaries. Similarly, in \Cref{fig: darcy_ID_OOD_contours}(C), we observe that the 1-step method consistently outperforms the 2-step method by an order of magnitude. Notably, for this case the equation learning errors were an order of magnitude smaller than the operator learning errors.

\begin{figure}[ht!]
    \centering
    \vspace{6ex}
    \begingroup
    \setlength{\tabcolsep}{4pt}
    \begin{tabular}{l!{\color{gray}\vrule}cc!{\color{gray}\vrule}cc}
        \begin{overpic}[scale=0.23]{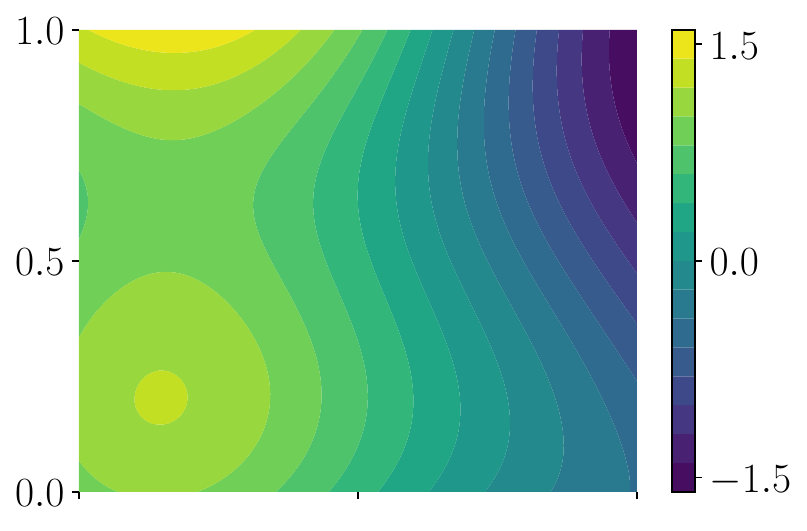}
        \put(-5,30){\rotatebox{90}{{\scriptsize $x_2$}}}
        \put(43,65){{\footnotesize $u$}}
        \put(20,85){{\small (A) Solutions}}
        \end{overpic}
        &
        \hspace{-0.8ex}\begin{overpic}[scale=0.23]{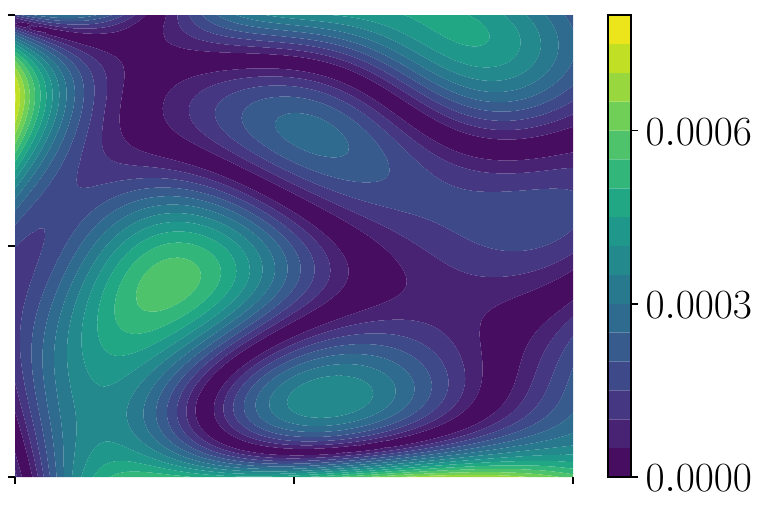}
        \put(25,68){{\footnotesize 1-step}}
        \put(55,90){{\small (B) $\frac{|\P(u) - \widehat\P(u)|}{\|\P(u)(Y_{\rm test})\|_2}$}}
        \end{overpic}
        
        &
        \hspace{-1ex}\begin{overpic}[scale=0.23]{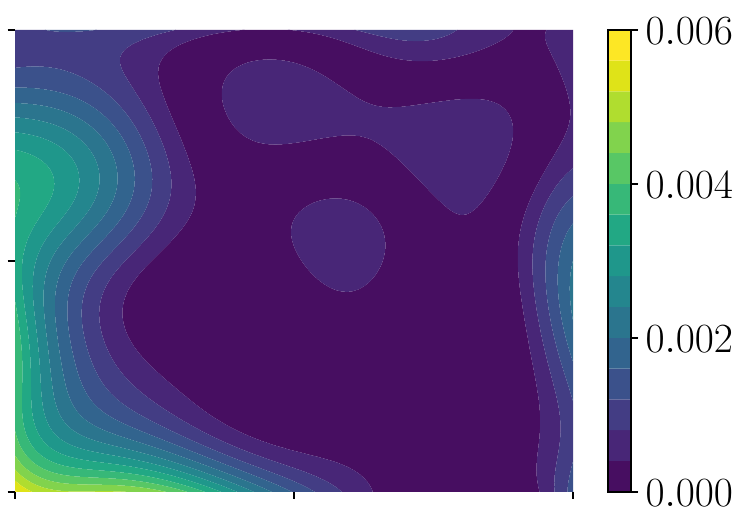}
        \put(25,69){{\footnotesize 2-step}}
        \end{overpic}
        &
        
        \hspace{-0.8ex}\begin{overpic}[scale=0.23]{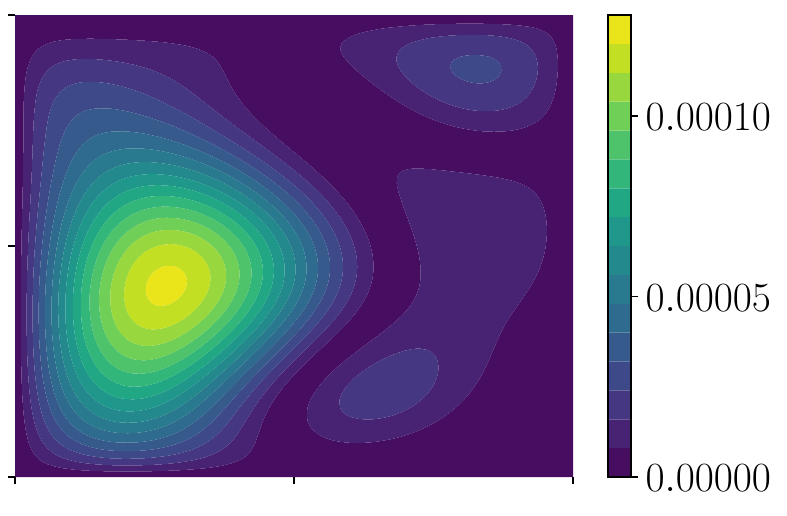}
        \put(24,66){{\footnotesize 1-step}}
        \put(45,88){{\small (C) $\frac{|u - \widehat{\P}^\dagger \P(u)|}{\|u(Y_{\rm test})\|_2}$}}
        \end{overpic}
        
        &
        \hspace{-0.8ex}\begin{overpic}[scale=0.23]{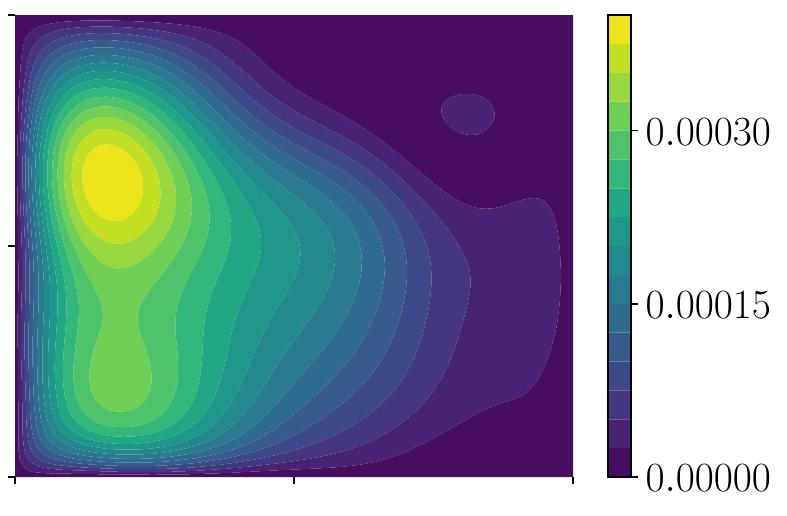}
        \put(25,68){{\footnotesize 2-step}}
        \end{overpic}
        \\
        \begin{overpic}[scale=0.23]{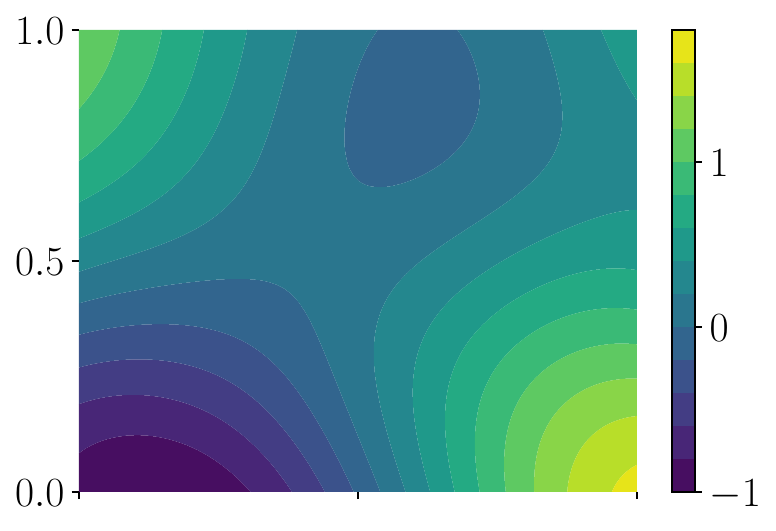}
        \put(-5,31){\rotatebox{90}{{\scriptsize $x_2$}}}
        \end{overpic}
        &
        
        \includegraphics[scale=0.23]{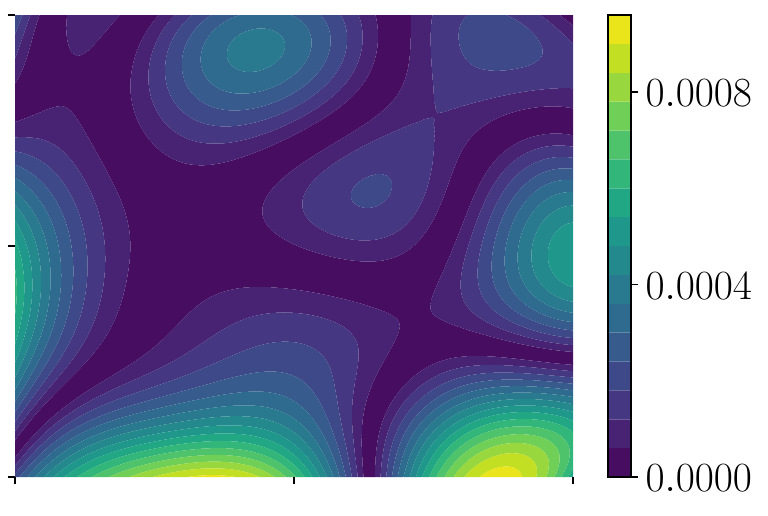}
        
        &
        \includegraphics[scale=0.23]{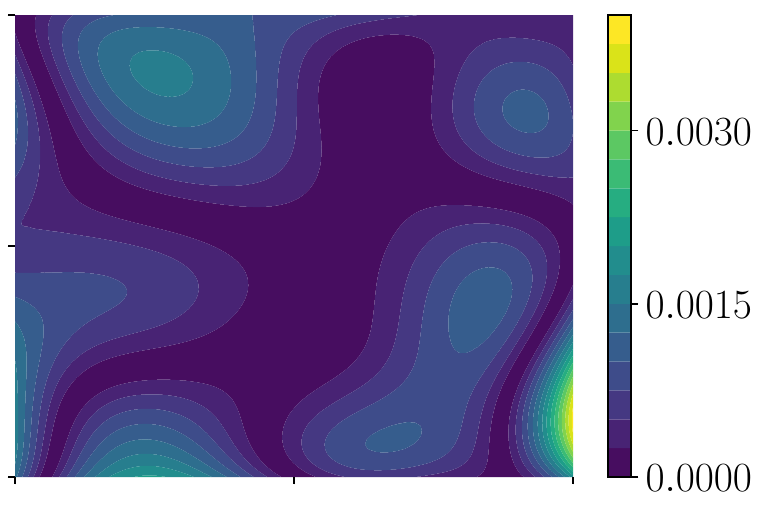}
        &
        
        \hspace{-3ex}\includegraphics[scale=0.23]{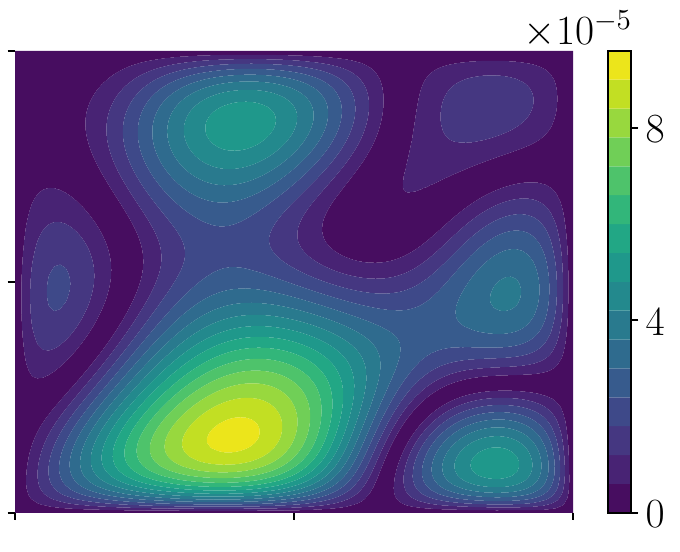}
        
        &
        \includegraphics[scale=0.23]{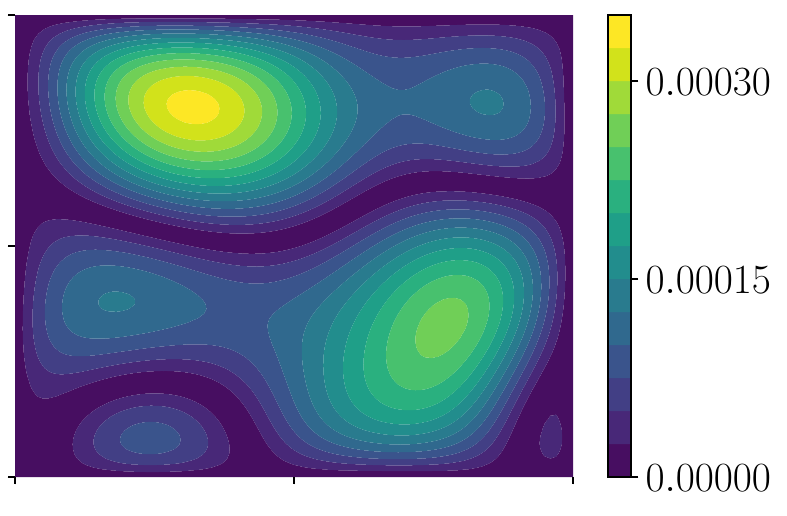}
        \\
        \begin{overpic}[scale=0.23]{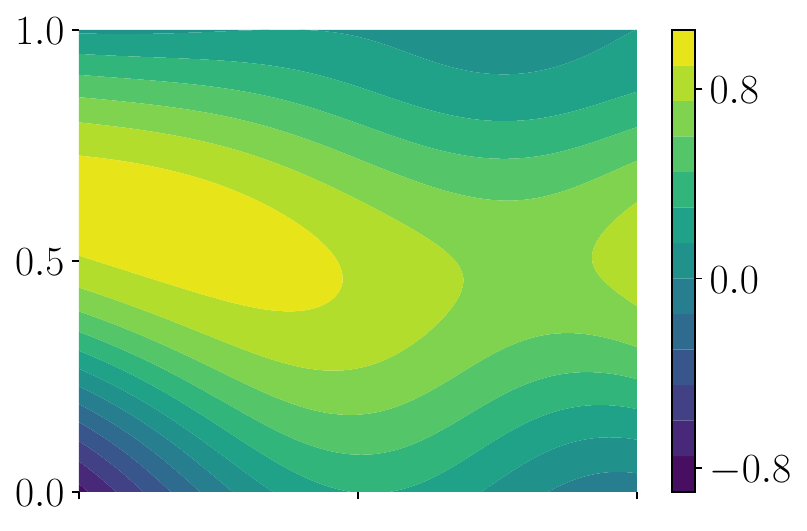}
        \put(-5,30){\rotatebox{90}{{\scriptsize $x_2$}}}
        \put(-20,68){\rotatebox{90}{{\small In-distribution (ID)}}}
        \end{overpic}
        &
        
        \includegraphics[scale=0.23]{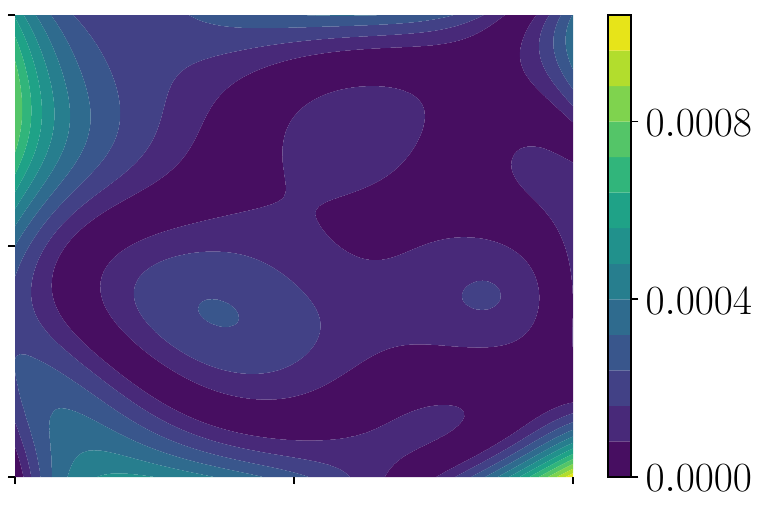}
        
        &
        \includegraphics[scale=0.23]{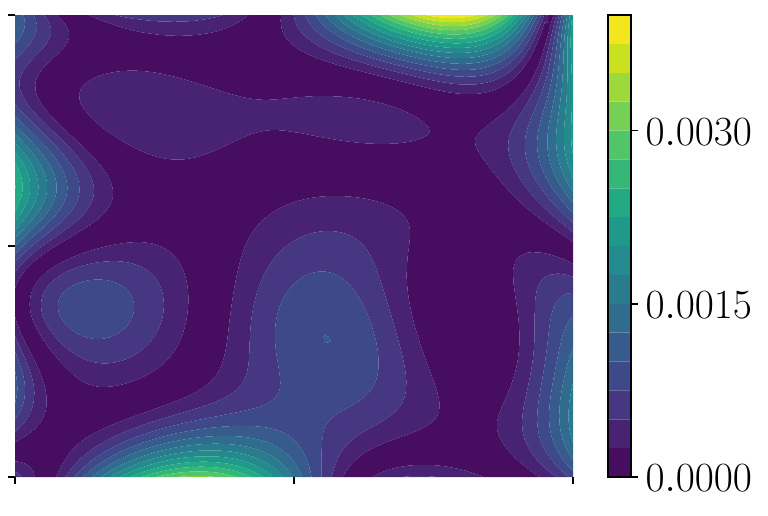}
        &
        
        \hspace{-3ex}\includegraphics[scale=0.23]{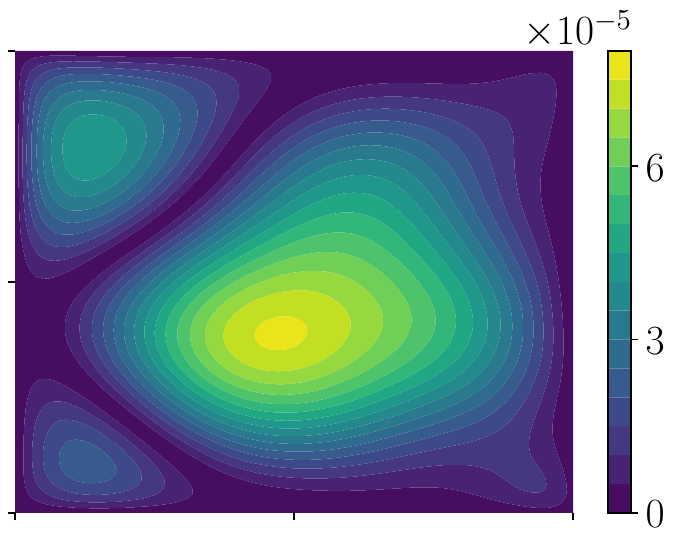}
        
        &
        \includegraphics[scale=0.23]{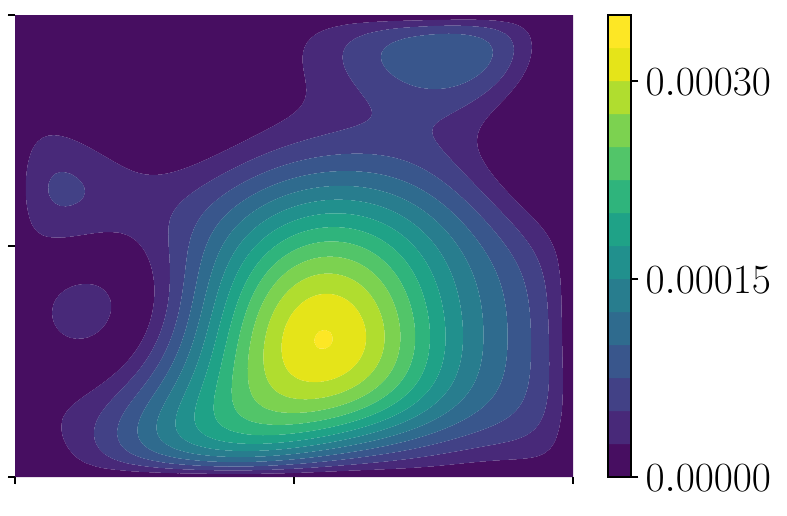}
        \\
        \arrayrulecolor{gray}\hline
        \rule{-3pt}{16ex}
        \begin{overpic}[scale=0.23]{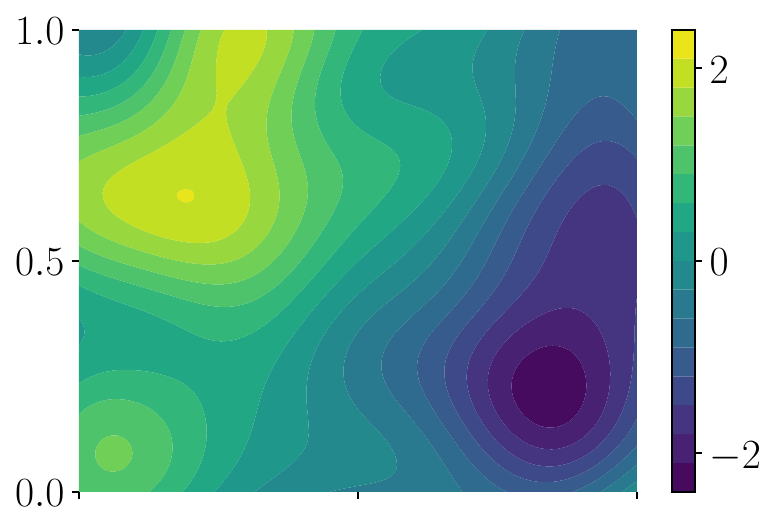}
        \put(-5,31){\rotatebox{90}{{\scriptsize $x_2$}}}
        \end{overpic}
        &
        \includegraphics[scale=0.23]{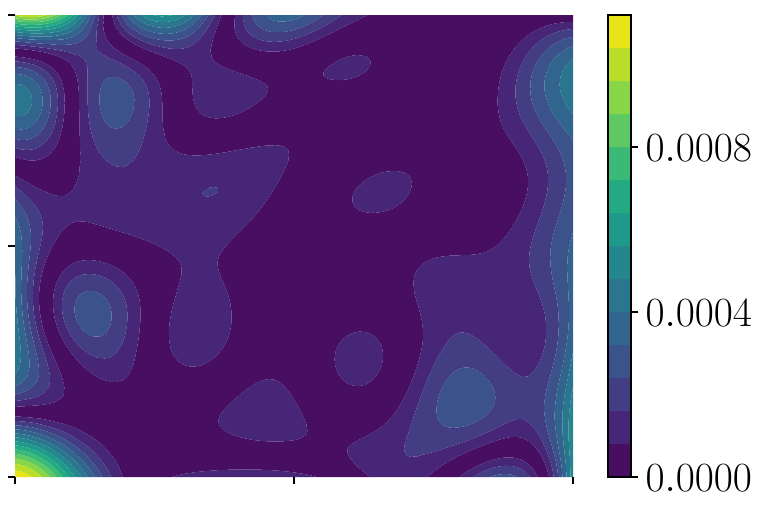}
        &
        \includegraphics[scale=0.23]{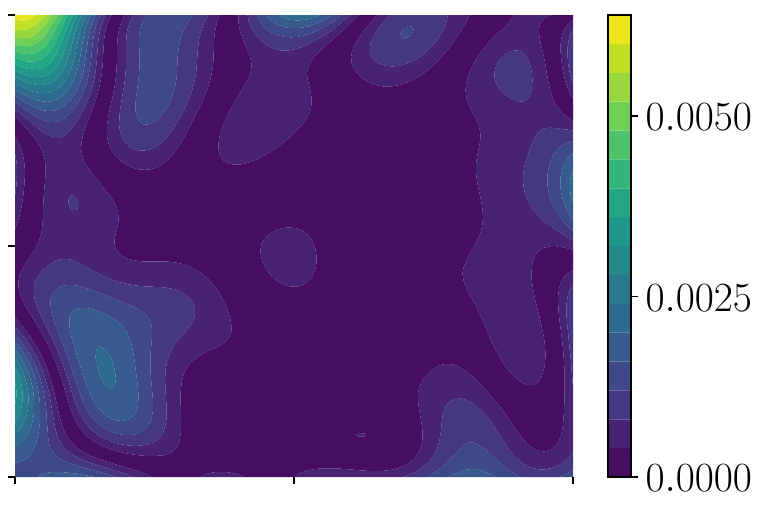}
        &
        \includegraphics[scale=0.23]{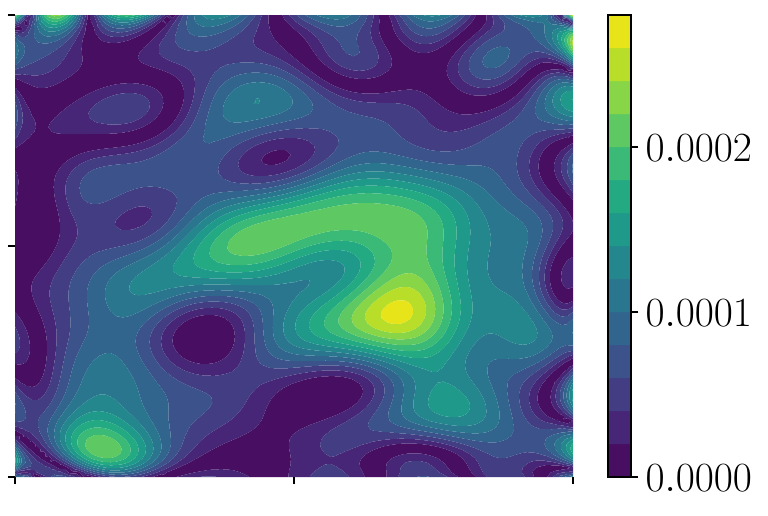}
        &
        \includegraphics[scale=0.23]{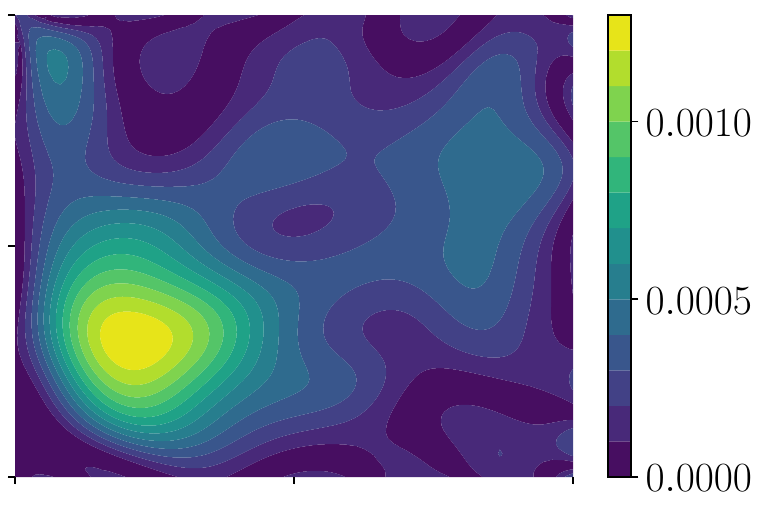}
        \\
        \begin{overpic}[scale=0.23]{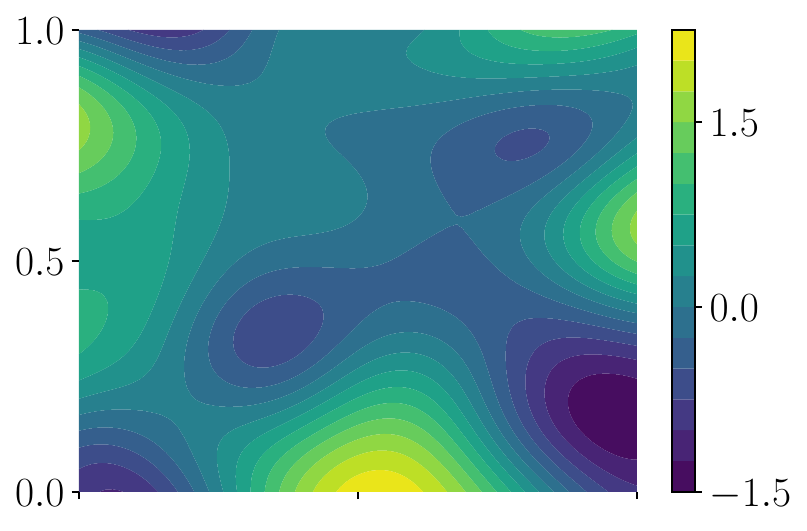}
        \put(-5,30){\rotatebox{90}{{\scriptsize $x_2$}}}        \end{overpic}
        &
        \includegraphics[scale=0.23]{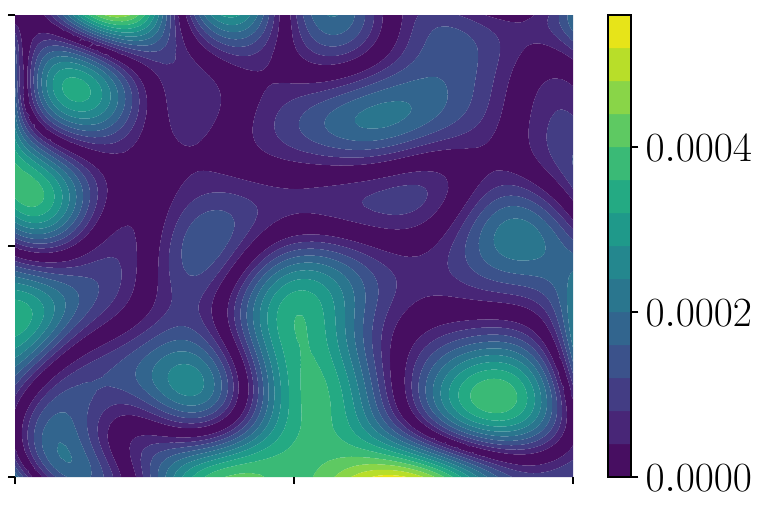}
        &
        \includegraphics[scale=0.23]{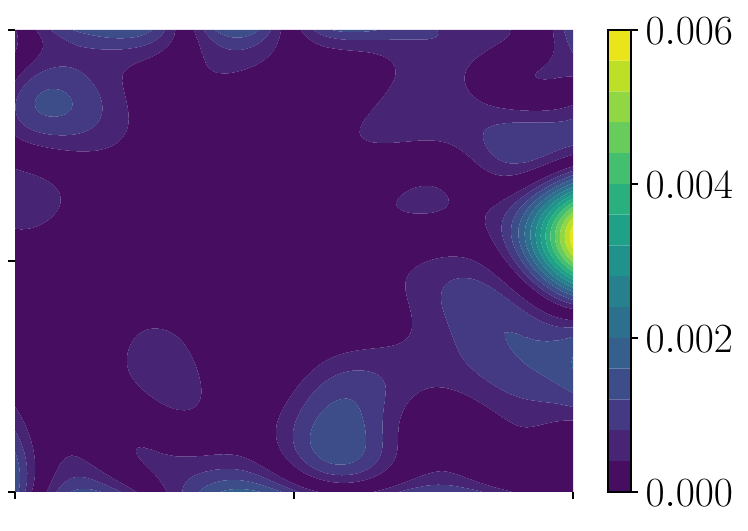}
        &
        \includegraphics[scale=0.23]{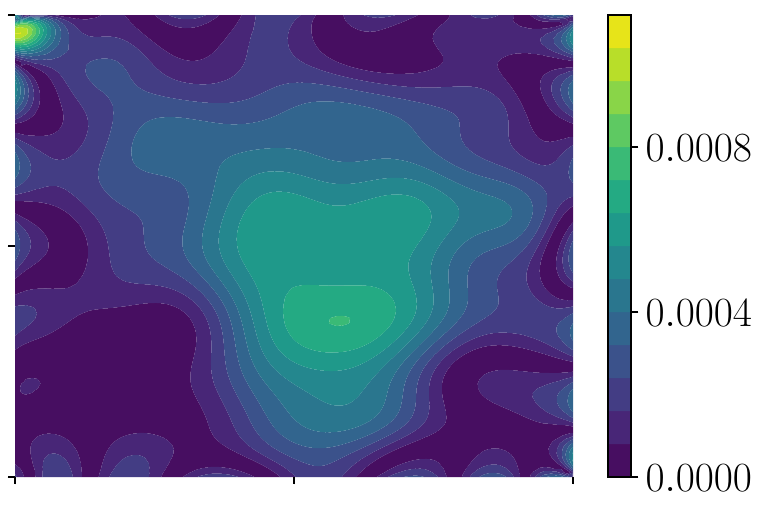}
        &
        \includegraphics[scale=0.23]{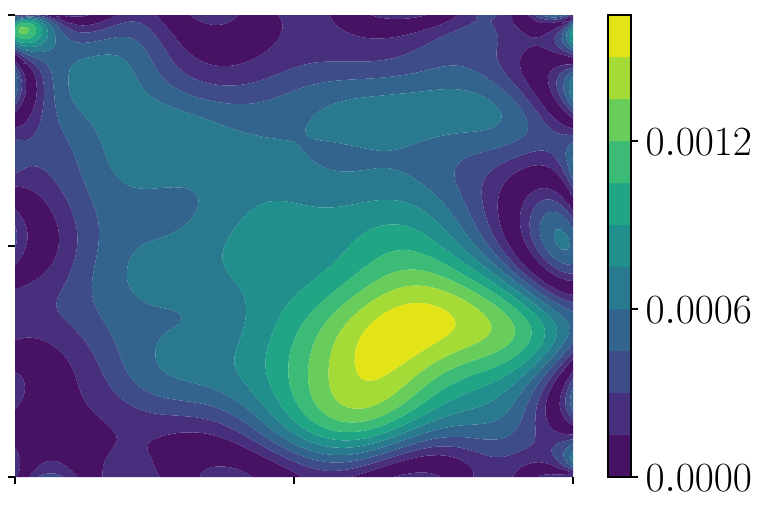}
        \\
        \begin{overpic}[scale=0.23]{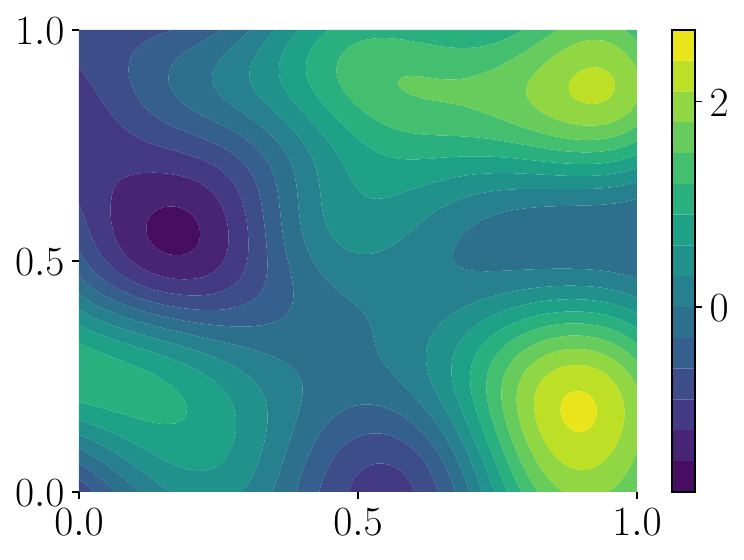}
        \put(-5,36){\rotatebox{90}{{\scriptsize $x_2$}}}
        \put(45,-3){{\scriptsize $x_1$}}
        \put(-20,65){\rotatebox{90}{{\small Out-of-distribution (OOD)}}}
        \end{overpic}
        &
        \hspace{-1.4ex}\begin{overpic}[scale=0.23]{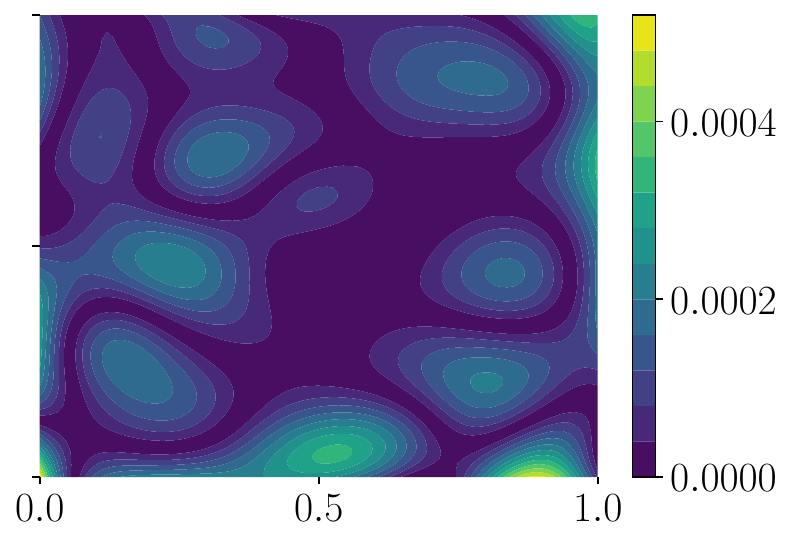}
        \put(37,-3){{\scriptsize $x_1$}}
        \end{overpic}
        &
        \hspace{-1.4ex}\begin{overpic}[scale=0.23]{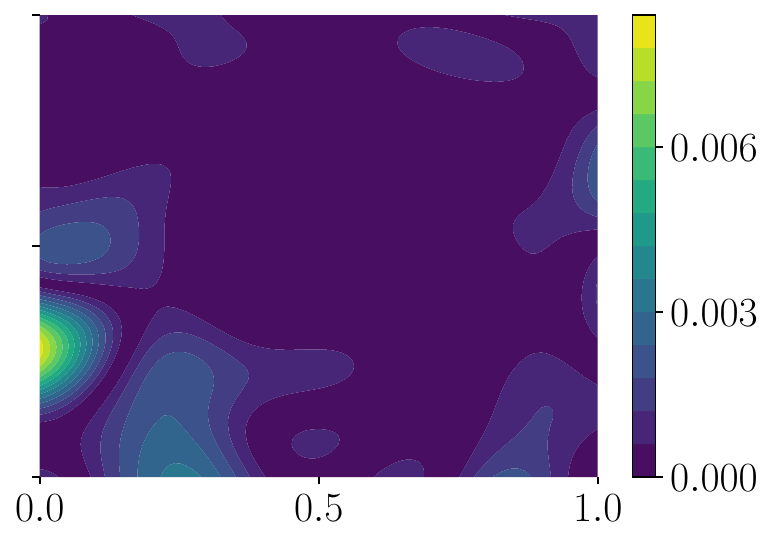}
        \put(38,-3){{\scriptsize $x_1$}}
        \end{overpic}
        &
        \hspace{-1.4ex}\begin{overpic}[scale=0.23]{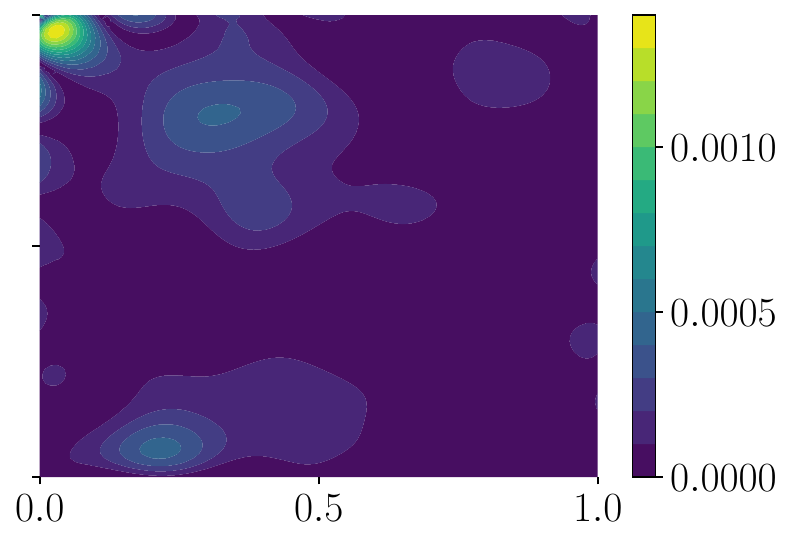}
        \put(37,-3){{\scriptsize $x_1$}}
        \end{overpic}
        &
        \hspace{-1.4ex}\begin{overpic}[scale=0.23]{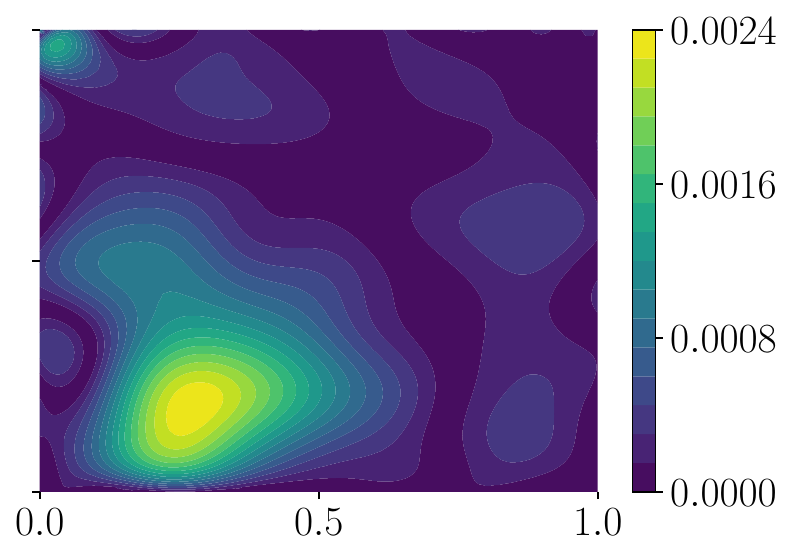}
        \put(37,-3){{\scriptsize $x_1$}}
        \end{overpic}
    \end{tabular}

    \endgroup
    \caption{Samples of test solutions $u$ (ID and OOD) and their contours of the equation and operator learning errors for 1-step and 2-step KEqL methods for Darcy's flow 
PDE \eqref{darcy_PDE}: (A) Sample solutions $u$ ID and OOD for the PDE ;(B) Contour plots for equation learning errors; (C) Contour plots for operator learning errors.}
    \label{fig: darcy_ID_OOD_contours}
\end{figure}

\begin{figure}[ht!]
    \centering
    \begingroup
    \footnotesize
    \begin{overpic}[scale=0.4]{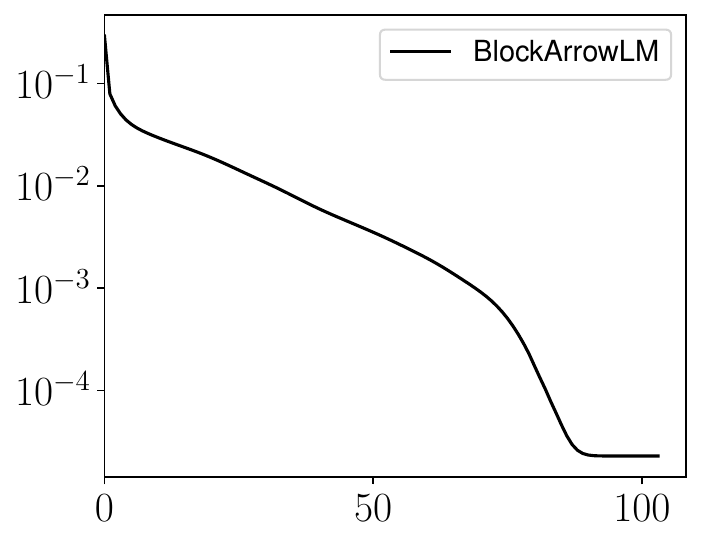}
    \put(-6,36){\rotatebox{90}{{ Loss}}}
    \put(50,-5){\rm Steps}
    \end{overpic}
    \endgroup
    \caption{Convergence history of the block arrowhead matrix structure LM-type algorithm for the reduced 1-step KEqL when using $M = 8$ solution tuples at $N_\mY =8$ observation points in the interior for Darcy's flow PDE \eqref{darcy_PDE}.}
    \label{fig: darcy_loss_hist}
\end{figure}

\end{document}